\documentclass[11pt]{article}

\usepackage[noheader]{styles/redhat_arxiv}

\usepackage{natbib}

\newcommand{\arxivversion}{1}

\usepackage[utf8]{inputenc}
\usepackage[T1]{fontenc}
\usepackage{amsmath, amssymb}
\usepackage{mathtools}
\usepackage{bm}
\usepackage{microtype}
\usepackage{booktabs}
\usepackage{multirow}
\usepackage{colortbl}  
\usepackage{nicefrac}
\usepackage{graphicx}
\usepackage{subcaption}
\usepackage{xcolor}
\usepackage{enumitem}
\usepackage{bbm}  

\definecolor{OIorange}{HTML}{E69F00}
\definecolor{OIsky}   {HTML}{56B4E9}
\definecolor{OIgreen} {HTML}{009E73}
\definecolor{OIyellow}{HTML}{F0E442}
\definecolor{OIblue}  {HTML}{0072B2}
\definecolor{OIverm}  {HTML}{D55E00}
\definecolor{OIpurp}  {HTML}{CC79A7}
\definecolor{OIgrey}  {gray}{0.20}

\colorlet{spectral}{OIblue}
\colorlet{factual}{OIgreen}
\colorlet{hallucinate}{OIverm}
\colorlet{baseline1}{OIorange}
\colorlet{baseline2}{OIgrey}
\colorlet{highlight}{OIyellow}
\colorlet{neutral}{OIgrey}

\usepackage{float}

\usepackage{siunitx}
\usepackage{makecell}
\usepackage{array}
\renewcommand{\arraystretch}{1.25}

\newcolumntype{R}[1]{>{\raggedleft\arraybackslash}p{#1}}

\usepackage{tikz}
\usepackage{styles/pgfplots_academic}
\usetikzlibrary{intersections,arrows.meta,positioning,shapes.geometric,calc,patterns,decorations.pathreplacing}

\pgfplotsset{
  cheegerband/.style={academic-single},
  violation/.style={academic-single, ytick=\empty},
  cvar/.style={academic-single-bar}
}

\tikzset{myarrow/.style={-{Latex[length=2.2mm]}, line width=1pt}}

\usepackage{hyperref}
\hypersetup{
  colorlinks=true,
  linkcolor=OIblue,
  citecolor=OIblue,
  urlcolor=OIblue
}

\usepackage{cleveref}

\crefformat{equation}{Eq.~(#2#1#3)}
\Crefformat{equation}{Equation~(#2#1#3)}
\crefrangeformat{equation}{Eqs.~(#3#1#4)--(#5#2#6)}
\Crefrangeformat{equation}{Equations~(#3#1#4)--(#5#2#6)}
\crefmultiformat{equation}{Eqs.~(#2#1#3)}{, (#2#1#3)}{, (#2#1#3)}{, and~(#2#1#3)}
\Crefmultiformat{equation}{Equations~(#2#1#3)}{, (#2#1#3)}{, (#2#1#3)}{, and~(#2#1#3)}

\crefformat{figure}{Fig.~#2#1#3}
\Crefformat{figure}{Figure~#2#1#3}
\crefrangeformat{figure}{Figs.~#3#1#4--#5#2#6}
\Crefrangeformat{figure}{Figures~#3#1#4--#5#2#6}
\crefmultiformat{figure}{Figs.~#2#1#3}{, #2#1#3}{, #2#1#3}{, and~#2#1#3}
\Crefmultiformat{figure}{Figures~#2#1#3}{, #2#1#3}{, #2#1#3}{, and~#2#1#3}

\crefformat{table}{Table~#2#1#3}
\Crefformat{table}{Table~#2#1#3}
\crefrangeformat{table}{Tables~#3#1#4--#5#2#6}
\Crefrangeformat{table}{Tables~#3#1#4--#5#2#6}
\crefmultiformat{table}{Tables~#2#1#3}{, #2#1#3}{, #2#1#3}{, and~#2#1#3}
\Crefmultiformat{table}{Tables~#2#1#3}{, #2#1#3}{, #2#1#3}{, and~#2#1#3}

\crefformat{section}{Sec.~#2#1#3}
\Crefformat{section}{Section~#2#1#3}
\crefrangeformat{section}{Secs.~#3#1#4--#5#2#6}
\Crefrangeformat{section}{Sections~#3#1#4--#5#2#6}
\crefmultiformat{section}{Secs.~#2#1#3}{, #2#1#3}{, #2#1#3}{, and~#2#1#3}
\Crefmultiformat{section}{Sections~#2#1#3}{, #2#1#3}{, #2#1#3}{, and~#2#1#3}

\crefformat{appendix}{Appendix~#2#1#3}
\Crefformat{appendix}{Appendix~#2#1#3}
\crefrangeformat{appendix}{Appendices~#3#1#4--#5#2#6}
\Crefrangeformat{appendix}{Appendices~#3#1#4--#5#2#6}

\crefname{proposition}{Proposition}{Propositions}
\Crefname{proposition}{Proposition}{Propositions}
\crefname{theorem}{Theorem}{Theorems}
\Crefname{theorem}{Theorem}{Theorems}
\crefname{lemma}{Lemma}{Lemmas}
\Crefname{lemma}{Lemma}{Lemmas}
\crefname{corollary}{Corollary}{Corollaries}
\Crefname{corollary}{Corollary}{Corollaries}
\crefname{definition}{Definition}{Definitions}
\Crefname{definition}{Definition}{Definitions}
\crefname{remark}{Remark}{Remarks}
\Crefname{remark}{Remark}{Remarks}
\crefname{assumption}{Assumption}{Assumptions}
\Crefname{assumption}{Assumption}{Assumptions}
\crefname{example}{Example}{Examples}
\Crefname{example}{Example}{Examples}
\crefname{hypothesis}{Hypothesis}{Hypotheses}
\Crefname{hypothesis}{Hypothesis}{Hypotheses}

\crefname{algorithm}{Algorithm}{Algorithms}
\Crefname{algorithm}{Algorithm}{Algorithms}

\crefname{enumi}{item}{items}
\Crefname{enumi}{Item}{Items}

\crefformat{subsection}{Sec.~#2#1#3}
\Crefformat{subsection}{Section~#2#1#3}
\crefrangeformat{subsection}{Secs.~#3#1#4--#5#2#6}
\Crefrangeformat{subsection}{Sections~#3#1#4--#5#2#6}

\crefformat{subsubsection}{Sec.~#2#1#3}
\Crefformat{subsubsection}{Section~#2#1#3}

\crefformat{}{Sec.~#2#1#3}
\Crefformat{}{Section~#2#1#3}
\crefrangeformat{}{Secs.~#3#1#4--#5#2#6}
\Crefrangeformat{}{Sections~#3#1#4--#5#2#6}
\crefmultiformat{}{Secs.~#2#1#3}{, #2#1#3}{, #2#1#3}{, and~#2#1#3}
\Crefmultiformat{}{Sections~#2#1#3}{, #2#1#3}{, #2#1#3}{, and~#2#1#3}

\usepackage{xr}
\newcommand{\suppref}[1]{\NoHyper\S\ref{S-#1}\endNoHyper}

\providecommand{\arxivversion}{0}

\def\1{\bm{1}}

\DeclareMathAlphabet{\mathsfit}{\encodingdefault}{\sfdefault}{m}{sl}
\SetMathAlphabet{\mathsfit}{bold}{\encodingdefault}{\sfdefault}{bx}{n}

\newcommand{\M}{\mathcal{M}}           
\newcommand{\B}{\mathbf{B}}            
\newcommand{\DQ}{D_Q}                  
\newcommand{\DK}{D_K}                  
\newcommand{\Hh}{\mathcal{H}}          
\newcommand{\Msym}{\M_{\mathrm{sym}}}  
\newcommand{\Masym}{\M_{\mathrm{asym}}} 

\newcommand{\sig}{\sigma}              

\newcommand{\vol}{\mathrm{vol}}        

\DeclareMathOperator{\trace}{Tr}

\newcommand{\reals}{\mathbb{R}}

\makeatletter
\@ifundefined{theorem}{%
  \newtheorem{theorem}{Theorem}
  \newtheorem{lemma}[theorem]{Lemma}
  \newtheorem{proposition}[theorem]{Proposition}
  \newtheorem{corollary}[theorem]{Corollary}
  \newtheorem{definition}[theorem]{Definition}
  
  \newtheorem{remark}[theorem]{Remark}
}{}
\@ifundefined{hypothesis}{}{}
\@ifundefined{assumption}{}{}
\@ifundefined{observation}{}{}
\makeatother

\makeatletter
\@ifundefined{proof}{%
  \newenvironment{proof}[1][\proofname]{\par\noindent\textbf{#1.}\space}{\hfill$\square$\par\medskip}
  \providecommand{\proofname}{Proof}
}{}
\makeatother

\providecommand{\Name}[2][]{#2}
\providecommand{\Email}[1]{\texttt{#1}}
\providecommand{\addr}{}
\providecommand{\nametag}[1]{#1}
\providecommand{\AND}{\and}

\usepackage{pifont}

\providecommand{\phihat}{\widehat{\phi}}

\newcommand{\lenAurocHalueval}{0.96}

\newcommand{\lenAurocMedhallu}{0.74}

\newcommand{\lenAurocTruthfulqa}{0.56}

\newcommand{\pythiaHalPhiDiffuse}{0.822}
\newcommand{\pythiaHalPhiDiffuseLo}{0.798}
\newcommand{\pythiaHalPhiDiffuseHi}{0.845}
\newcommand{\pythiaHalPhiBottleneck}{0.617}
\newcommand{\pythiaHalPhiBottleneckLo}{0.590}
\newcommand{\pythiaHalPhiBottleneckHi}{0.645}
\newcommand{\pythiaHalSigmaTwoStd}{0.838}
\newcommand{\pythiaHalSigmaTwoStdLo}{0.815}
\newcommand{\pythiaHalSigmaTwoStdHi}{0.862}

\newcommand{\confoundMaxDelta}{0.28}
\newcommand{\confoundOCPythiaHal}{0.13}
\newcommand{\confoundHiddenMeanDelta}{+0.09}
\newcommand{\confoundLogitMeanDelta}{-0.01}

\newcommand{\gabflantFiveDecoderHaluevalGonly}{0.779}

\newcommand{\gabflantFiveDecoderHaluevalDelta}{-0.012}
\newcommand{\gabflantFiveDecoderHaluevalDeltaLo}{-0.017}
\newcommand{\gabflantFiveDecoderHaluevalDeltaHi}{-0.008}

\newcommand{\gabllamaHaluevalGonly}{0.539}

\newcommand{\gabllamaHaluevalDelta}{+0.025}
\newcommand{\gabllamaHaluevalDeltaLo}{+0.014}
\newcommand{\gabllamaHaluevalDeltaHi}{+0.034}

\newcommand{\gabNumCells}{15}

\newcommand{\gabMeanDelta}{+0.012}
\newcommand{\gabMaxDelta}{+0.089}
\newcommand{\gabMaxDeltaLo}{+0.013}
\newcommand{\gabMaxDeltaHi}{+0.126}
\newcommand{\gabMaxDeltaCell}{Pythia/MedHallu}

\newcommand{\gabNumSigPositive}{5}

\newcommand{\floorViolGPTLo}{18}
\newcommand{\floorViolGPTHi}{19}
\newcommand{\tstarGPTLo}{0.35}
\newcommand{\tstarGPTHi}{0.37}
\newcommand{\floorViolGPTHalu}{18}
\newcommand{\floorViolGPTNormLo}{36}
\newcommand{\floorViolGPTNormHi}{42}

\newcommand{\floorViolPyLo}{45}
\newcommand{\floorViolPyHi}{63}
\newcommand{\tstarPyLo}{0.47}
\newcommand{\tstarPyHi}{0.66}
\newcommand{\floorViolPyHalu}{51}

\newcommand{\floorViolFlanLo}{80}
\newcommand{\floorViolFlanHi}{82}
\newcommand{\tstarFlanLo}{0.44}
\newcommand{\tstarFlanHi}{0.47}
\newcommand{\floorViolFlanHalu}{80}

\begin{document}

\title[Self-Attention as Transport]{Self-Attention as Transport:\\
Orientation Blindness and the Limits of Symmetric Spectral Diagnostics}

\author{\Name{Dominik Dahlem\nametag{$^\ast$}} \\ \addr Red Hat AI \\ \Email{ddahlem@redhat.com}
\AND
\Name{Diego Maniloff} \\ \addr Red Hat AI \\ \Email{dmanilof@redhat.com}
\AND
\Name{Mac Misiura} \\ \addr Red Hat AI \\ \Email{mmisiura@redhat.com}}

\maketitle

{\small $^\ast$Corresponding author.}\vspace{0.5em}

\begin{abstract}
  Every attention head defines a degree-normalized transport operator, and a
  growing family of diagnostics reads model behavior (hallucination among
  them) from its spectrum.  We ask what such diagnostics can and cannot infer.
  The operator splits orthogonally into a symmetric part governing transport
  \emph{capacity} and an antisymmetric part encoding \emph{orientation}.  We prove
  an identifiability limit: every transpose-invariant spectral diagnostic is
  \emph{orientation-blind} (unable to distinguish an operator from its transpose,
  hence blind to the orientation of information flow), with a transpose-stability bound
  limiting any Lipschitz diagnostic's transpose sensitivity by the asymmetry
  coefficient $G$.  This bounds what spectral diagnostics of the attention operator
  can resolve (e.g.\ LapEigvals and the attention-spectral branch of LLM-Check).  On
  the surviving axis, a closed-form bipartite-Cheeger landscape shows uniform causal
  attention obeys an $n$-independent \emph{temporal-cut} floor $\phi \ge 1/5$ while
  window attention pierces it as $O(w/n)$; the floor is an idealized benchmark, not
  an empirical attractor, and the fraction of real heads falling below it is itself an
  empirically stable architectural descriptor.  The two-axis diagnostic ($\phi$ for capacity, $G$ for
  asymmetry magnitude) yields a falsifiable polarity prediction, borne out
  \emph{in sign} under length-controlled, forced-scoring evaluation across
  decoder-only, encoder-only, and encoder--decoder models (capacity-axis signal
  0.62--0.84 LC-AUROC): polarity reverses between HaluEval and MedHallu,
  directionally as predicted though asymmetric in strength, with decision
  polarity calibrated per regime.
\end{abstract}

\ifnum\arxivversion=0
\begin{keywords}
  spectral graph theory, Cheeger inequality, attention mechanisms,
  transformer interpretability, hallucination diagnostics
\end{keywords}
\fi

\section{Introduction}

Large language models hallucinate, and the failures are not all alike.
Some samples concentrate attention on a narrow set of positions while
ignoring relevant context; others spread attention so thinly that no
signal carries.  These two routing pathologies look quantitatively
similar through standard spectral lenses, yet they are mechanistically
opposite.  Attention-based diagnosis must distinguish them, and the
obstacle is structural rather than a matter of finding a better statistic.

Every attention head in a transformer~\citep{Vaswani2017Attention}
defines a bipartite transport operator between queries and keys.  Degree-normalising this operator
yields a scale-invariant representation $\M$ whose spectral properties
encode how information routes through the network.  This paper asks
what attention-based diagnostics can and cannot measure as a function
of which mathematical object they analyse, not to build a stronger
hallucination detector, but to characterise the structural boundary.

We establish a structural boundary on what these diagnostics can
resolve.  The transport operator decomposes orthogonally under the
Hilbert--Schmidt inner product into a
symmetric component $\Msym$ governing transport \emph{capacity} and an
antisymmetric component $\Masym$ encoding routing \emph{orientation}.
Every spectral diagnostic depending only on singular values or on the
symmetric component is invariant under transpose: structurally
\emph{orientation-blind} (\cref{thm:orientation-blindness}).  A matching \emph{transpose-stability
bound} limits the transpose sensitivity of any Lipschitz diagnostic by
$\|\Masym\|_F$, within a fixed normalized class
(\cref{prop:ob-converse}).  This places
a precise limit on what symmetric spectral methods such as
LLM-Check~\citep{Sriramanan2024LLMCheck},
EigenTrack~\citep{Ettori2025EigenTrack}, and
LapEigvals~\citep{Binkowski2025LapEigvals} can resolve, regardless of
which symmetric statistic they extract.

The symmetric axis nevertheless supports a rich diagnostic.  The
classical Cheeger inequality provides a two-sided bound relating
conductance $\phi$ to the spectral gap $1-\sigma_2$, yielding a
two-sided certificate: low conductance indicates a bottleneck regime,
anomalously high conductance diffuse mixing.  A
degree sufficiency theorem (\cref{prop:degree-sufficiency}) formalises
when coupling structure contributes beyond degree heterogeneity.  The
antisymmetric axis is targeted by the asymmetry coefficient $G$, the
normalised Frobenius distance to the symmetric subspace, which measures the
\emph{magnitude} of the antisymmetric residual.

A closed-form bipartite-Cheeger landscape provides a reference model for canonical
causal routing.  Uniform causal attention satisfies an $n$-independent
\emph{temporal-cut} floor $\phi \ge 1/5$, while window attention falls below it
on balanced cuts as $O(w/n)$ (\cref{sec:cheeger}).  The fraction of empirical heads
falling below $1/5$ becomes a population-level architectural descriptor
distinguishing position-encoding regimes.

Length-robust evaluation is a prerequisite, not a contribution: prior
work established that a shared length bias in detector scores and
correctness labels skews AUROC rankings~\citep{Santilli2025LengthBias}
and that length heuristics alone can match complex detectors under
standard automatic evaluation~\citep{Janiak2025IllusionProgress}.
Spectral features inherit length dependence through three diagnosable
confounding channels; length-controlled AUROC deflates raw scores by
up to $\confoundMaxDelta$ points (\cref{sec:length-confound}).

Under this protocol, transport diagnostics retain interpretable signal:
between-dataset polarity variation (bottleneck routing on HaluEval,
diffuse routing on MedHallu) reflects regime-dependent failure modes
predicted by the two-sided theory (\cref{sec:evaluation}).

\medskip
\noindent
The paper's central result is an \textbf{identifiability limit}: every symmetric spectral
diagnostic of an attention operator is structurally orientation-blind
(\cref{thm:orientation-blindness}), and the rest of the paper characterises what remains
accessible once that limit is in force.  The converse (\cref{prop:ob-converse}) pins the
discarded information to the antisymmetric residual $\|\Masym\|_F$; on the surviving axis,
conductance is the symmetric summary with a two-sided structural certificate, a closed-form
Cheeger landscape says what it means architecturally, and the experiments validate the
consequences under length control.
Concretely:

\begin{enumerate}[nosep]
\item The \textbf{orientation-blindness theorem} (\cref{thm:orientation-blindness}), with its
      projection reading, a quantitative converse, and a rigidity corollary: any diagnostic
      depending only on singular values or on the symmetric component is
      transpose-invariant; any transpose-sensitive functional must depend on the
      antisymmetric component proportionally to $\|\Masym\|_F$ (\cref{prop:ob-converse});
      and orientation is \emph{exactly} what the transpose-invariant class discards, no
      more and no less (\cref{cor:transpose-rigidity}).  This is a precise limit on what
      spectral methods can resolve through the attention operator, regardless of which
      statistic they extract: the eigenvalue- and singular-value-based features of
      LLM-Check, EigenTrack, and LapEigvals lie inside the blind class
      (\cref{cor:transpose-invariant-diagnostics}).
\item The \textbf{symmetric axis that survives the limit}: reading the classical
      symmetric--antisymmetric split as transport \emph{capacity} versus routing
      \emph{orientation}, with Cheeger conductance as a
      two-sided capacity certificate and a degree-sufficiency theorem
      (\cref{prop:degree-sufficiency}) formalizing when coupling structure contributes
      beyond degree heterogeneity (\cref{sec:cheeger}).
\item A closed-form \textbf{conductance landscape} (the paper's principal new
      derivation) that says what conductance means architecturally:
      $\phi(S_t)\ge u(t)/(2+u(t))$, $u(t)=H_n-H_t$, for uniform causal
      attention (\cref{lem:uniform-causal-conductance}), with an $n$-independent floor
      $\phi\ge 1/5$ (\cref{cor:uc-one-fifth}; sharp asymptotic value
      \cref{cor:uc-sharp-floor}) and window attention piercing it as $O(w/n)$
      (\cref{lem:window-conductance}).  This is an \emph{architectural reference model}, not
      a claim about real heads: the population fraction of heads piercing $1/5$ is the
      observable that distinguishes position-encoding regimes.
\item \textbf{Empirical validation} of the consequences under length control: transport
      features retain interpretable signal, and the polarity of the transport signal is
      regime-dependent rather than universal (bottleneck routing on HaluEval, diffuse on
      MedHallu).  The symmetric (capacity) axis carries the dominant signal in decoder-only
      transformers and transfers unchanged to a modern decoder panel (Qwen2.5, SmolLM2-1.7B,
      LLaMA-3.1-8B); the antisymmetric (asymmetry-magnitude) axis is sparse, with
      architecture-dependent exceptions in Flan-T5 cross-attention and Pythia RoPE offered
      as explanatory hypotheses rather than validated mechanisms (\cref{sec:evaluation}).
\end{enumerate}

\medskip
\noindent
\textbf{Provenance of the formal results.}
Every proof in the paper uses classical matrix analysis: the Hermitian
dilation, orthogonal projection, Cauchy--Schwarz, Weyl's inequality,
harmonic sums, and single-variable convexity.  This is by design: an
identifiability boundary is useful exactly when any reader can verify it
with standard tools.  Classical infrastructure enters with attribution
(the Jordan--Wielandt correspondence, the Fan--Hoffman nearest-symmetric
property, Weyl's inequality, the Cheeger inequality), and one-step records
such as the Lipschitz converse (\cref{prop:ob-converse}) are stated for
completeness.  What is new is the question and its answer: the
orientation-blindness theorem and its rigidity converse resolve exactly
what the transpose-invariant class can identify; the closed-form
sweep-conductance landscape is new as a mathematical object, with exact
constants (the $1/5$ floor, the sharp asymptotic value $c^\star$, the
causal ceiling $G \le 1/\sqrt{2}$); and
\cref{cor:transpose-invariant-diagnostics} places the published spectral
detectors inside the blind class.  Proof steps carry inline provenance
tags (\emph{foundations}, \emph{bridge}, \emph{contribution}; see the
footnote in \cref{sec:orientation-blindness}).

\noindent
Features are computed without labeled data; calibrated decision-making
requires a modest labeled set to establish polarity and thresholds
(\cref{sec:discussion}).  \cref{tab:claim-audit} is the reader's map: it ties
each headline claim to its supporting result, its empirical evidence, and the
scope within which it holds.


\begin{table}[t]
\centering
\footnotesize
\setlength{\tabcolsep}{4pt}
\renewcommand{\arraystretch}{1.25}
\caption{\textbf{Claim audit.} Each headline claim, the formal result that
supports it, the empirical evidence, and the scope within which the claim holds.
A \textsc{status} tag marks each claim as a \textsc{theorem} proved here, a
theory-derived \textsc{prediction}, or an \textsc{empirical} observation (no theorem
predicts it). Throughout, $\phi$ is the exact
(NP-hard) conductance of the raw attention graph and $\phihat$ its spectral-sweep
estimator; reported LC-AUROC values use label-informed polarity and therefore
measure \emph{signal strength}, not deployed-detector accuracy
(\cref{sec:eval-protocol}).}
\label{tab:claim-audit}
\begin{tabular}{@{}p{2.9cm}p{2.5cm}p{3.6cm}p{4.0cm}@{}}
\toprule
\textbf{Claim} & \textbf{Status \& support} & \textbf{Empirical evidence} & \textbf{Scope / limitation} \\
\midrule
Symmetric spectral diagnostics are orientation-blind.
& \textsc{Theorem.}~\cref{thm:orientation-blindness}; converse \cref{prop:ob-converse}.
& Proof; \cref{cor:transpose-invariant-diagnostics} lists affected methods (LLM-Check, EigenTrack, LapEigvals).
& Bounds only transpose-invariant functionals of $\Msym$; does not constrain hidden-state methods. \\

Conductance is a two-sided capacity axis (bottleneck vs.\ diffuse).
& \textsc{Theorem.}~\cref{thm:two-sided-diagnostic}; Cheeger inequality.
& Diffuse tail $\pythiaHalPhiDiffuse\,[\pythiaHalPhiDiffuseLo,\pythiaHalPhiDiffuseHi]$ vs.\ bottleneck tail $\pythiaHalPhiBottleneck$ (HaluEval); terciles $47.5\%$/$45.1\%$.
& Measured via the estimator $\phihat$, not the exact object $\phi$; the sweep ratio is empirically within $2\times$ on matched nulls. \\

Uniform causal attention obeys a floor $\phi\ge 1/5$; window attention pierces it as $O(w/n)$.
& \textsc{Theorem.}~\cref{lem:uniform-causal-conductance}, \cref{cor:uc-one-fifth}, \cref{lem:window-conductance}.
& \cref{tab:landscape-empirical-summary}: raw-graph temporal-cut floor-violation fraction separates architectures ($\floorViolGPTLo$--$\floorViolGPTHi\%$ / $\floorViolPyLo$--$\floorViolPyHi\%$ / $\floorViolFlanLo$--$\floorViolFlanHi\%$).
& The floor is an \emph{idealized-architecture} benchmark, not an empirical attractor; the \emph{violation fraction} is the observable, not a theory failure. \\

Polarity is regime-dependent and reverses across benchmarks.
& \textsc{Prediction.}~Falsifiable consequence of the two-sided theory.
& Reversal HaluEval (low-OC) $\leftrightarrow$ MedHallu (high-OC); \cref{fig:phi-sigma2-scatter}.
& Label-informed flip selects polarity; within-dataset polarity is bin-consistent in only roughly half of model--dataset pairings (7 of 12), a signal-strength claim, not a deployed detector. \\

$\phihat$ is largely degree-reducible; $\sigma_2$ carries coupling beyond degree.
& \textsc{Empirical {\&} prop.}~\cref{prop:degree-sufficiency} supports the $\sigma_2$ half only.
& Degree-preserving nulls: $\phihat$ z-AUROC $0.51$--$0.57$ (chance); $\sigma_2$ $0.72$--$0.80$.
& The $\phihat$-side reducibility is \emph{empirical}; \cref{prop:degree-sufficiency} bounds only $\sigma_2$'s coupling term. \\

The antisymmetric axis $G$ is sparse but architecture-dependent.
& \textsc{Empirical} (sparsity); \textsc{theorem} (floors).~\cref{prop:asymmetry-energy}, \cref{prop:firstcol-g-bound}, \cref{cor:exponential-g-bound} give floors under sink/decay.
& $G$ $0.53$--$0.63$ in most configurations; exceptions Flan-T5 $0.78$, Pythia $0.82$ (HaluEval).
& The proposed mechanisms (cross-attention grounding, RoPE recency) are \textsc{hypotheses} consistent with the patterns, not confirmed causes. \\

Transport features carry length-independent signal.
& \textsc{Empirical} (protocol).~Length-controlled evaluation (\cref{sec:eval-protocol}).
& $0.62$--$0.84$ LC-AUROC after control; response length alone collapses to ${\approx}0.50$.
& Computed under \emph{forced scoring} of benchmark-provided responses; no claim about online generation dynamics. \\
\bottomrule
\end{tabular}
\end{table}

\section{Preliminaries: Attention as a Transport Operator}
\label{sec:transport-operators}

We fix the paper's central object: the degree-normalized bipartite transport
operator $\M$, a scale-invariant view of an attention head that applies to any
row-stochastic matrix.  Our use of ``transport'' is structural, not literal:
unlike Sinkformers~\citep{Sander2021Sinkformers} (Sinkhorn doubly-stochastic
optimal-transport plans) or the Markov-chain reading of
\citet{Erel2025AttentionMarkovChains}, we solve no transport problem; we
analyze the existing attention mechanism \emph{as} an operator whose routing
quality its spectral geometry bounds.  \cref{fig:pipeline-schematic} previews the
diagnostic pipeline; the objects it names are defined below.

\begin{figure}[t]
\centering
\resizebox{\linewidth}{!}{%
\begin{tikzpicture}[
    >=Latex,
    font=\footnotesize,
    node distance=5mm,
    nodebox/.style={rounded corners=2pt, thick, align=center, inner sep=3pt,
                    minimum height=10mm, text width=2.3cm},
    stage/.style={nodebox, draw=black!55, fill=black!3},
    eval/.style={nodebox, draw=OIgreen!70, fill=OIgreen!12},
    capnode/.style={nodebox, draw=OIblue!75, fill=OIblue!10},
    dirnode/.style={nodebox, draw=OIverm!75, fill=OIverm!10},
    flow/.style={-Latex, thick, draw=black!65},
    lbl/.style={font=\scriptsize\itshape, text=black!65},
]

\node[stage] (A) {Attention $\B$\\[-1pt] \scriptsize row-stochastic};
\node[stage, right=of A] (Bip) {Bipartite\\[-1pt] $A_{\mathrm{bip}}=\Hh(\B)$};
\node[stage, right=of Bip] (M) {Operator $\M$\\[-1pt] \scriptsize $\DQ^{-1/2}\B\DK^{-1/2}$};

\node[stage, right=14mm of M] (AGG) {Aggregate\\[-1pt] \scriptsize CVaR, std, mean};
\node[capnode, above=7mm of AGG] (CAP)
  {Symmetric\\[-1pt] \scriptsize capacity: $\phihat,\ \sigma_2$};
\node[dirnode, below=7mm of AGG] (DIR)
  {Antisymmetric\\[-1pt] \scriptsize direction: $G$};
\node[eval, right=of AGG] (LC) {LC-AUROC\\[-1pt] \scriptsize length-controlled};

\draw[flow] (A) -- (Bip);
\draw[flow] (Bip) -- (M);

\draw[flow, draw=OIblue!75] (M.north) to[out=90,in=180] (CAP.west);
\draw[flow, draw=OIverm!75] (M.south) to[out=-90,in=180] (DIR.west);

\draw[flow, draw=OIblue!75] (CAP.south) -- (AGG.north);
\draw[flow, draw=OIverm!75] (DIR.north) -- (AGG.south);

\draw[flow] (AGG) -- (LC);

\node[lbl, above=0.8mm of CAP] {\textcolor{OIblue!80}{orientation-blind}
  vs.\ \textcolor{OIverm!80}{direction-sensitive} axes};

\end{tikzpicture}%
}
\caption{\textbf{The transport-diagnostic pipeline.}  Each attention head's
row-stochastic matrix $\B$ is read as a bipartite graph $A_{\mathrm{bip}}=\Hh(\B)$
and degree-normalized into the operator $\M$.  The Hilbert--Schmidt-orthogonal
split of $\M$ defines two axes: the \emph{symmetric} component $\Msym$ carries
transport \emph{capacity}, read by conductance $\phihat$ and the second singular
value $\sigma_2$ (every transpose-invariant spectral diagnostic lives here, and
is therefore orientation-blind; \cref{thm:orientation-blindness}); the
\emph{antisymmetric} component $\Masym$ carries \emph{direction}, read by the
asymmetry coefficient $G$ (\cref{prop:ob-converse}).  Per-axis features are
aggregated across layers and heads (CVaR, standard deviation) and evaluated under
length-controlled AUROC (\cref{sec:eval-protocol}).  Theoretical statements
concern the exact object $\phi$ on $A_{\mathrm{bip}}$; empirical results concern
the estimator $\phihat$.}
\label{fig:pipeline-schematic}
\end{figure}

\begin{center}
\fbox{\parbox{\dimexpr\linewidth-2\fboxsep-2\fboxrule\relax}{%
\small\textbf{Notation: object vs.\ estimator.}~The \emph{object} $\phi$ is the
exact, scale-invariant, NP-hard \citep{SimaSchaeffer2006NPConductance} conductance
of the raw bipartite attention graph $\Hh(\B)$ (defined below).  The \emph{estimator}
$\phihat$ approximates it from finite attention matrices via the degree-normalized
operator $\M$, inheriting finite-size and degree-normalization dependence.
\textbf{All theoretical results in this paper concern the object $\phi$ on
$\Hh(\B)$; all empirical results concern the estimator $\phihat$ under
length-controlled evaluation.}  Observed length correlations in $\phihat$ are
estimator artifacts, not length-dependence of $\phi$.}}
\end{center}

\begin{definition}[Degree-normalized bipartite operator]
\label{def:bipartite-operator}
Let $\B \in \mathbb{R}_{\ge 0}^{n_q \times n_k}$ denote the row-stochastic attention matrix for a head (mask applied; softmax over keys), with degree matrices $\DQ = \mathrm{diag}(\B \mathbf{1})$ and $\DK = \mathrm{diag}(\B^\top \mathbf{1})$.
The \emph{degree-normalized cross-operator} is
\begin{equation}
\M = \DQ^{-1/2}\, \B\, \DK^{-1/2}.
\label{eq:M}
\end{equation}
This operator induces a weighted bipartite graph between queries and keys, and
depends only on $(Q,K)$ and the mask, not on values.
\end{definition}

\paragraph{Two bipartite graphs.}
The symmetric block embedding
\begin{equation}
\Hh(X) =
\begin{pmatrix}
0 & X\\
X^\top & 0
\end{pmatrix}
\label{eq:dilation}
\end{equation}
turns a rectangular matrix into a bipartite graph on $Q \cup K$.  Two such
graphs play distinct roles, related by degree normalization.  The \emph{raw
attention graph} $A_{\mathrm{bip}} = \Hh(\B)$ carries edge weights $\B_{ij}$;
\emph{graph conductance is a property of $A_{\mathrm{bip}}$}.  The
\emph{normalized adjacency} $N_{\mathrm{bip}} = \Hh(\M)$ carries the normalized
weights $\M_{ij}$ and satisfies $N_{\mathrm{bip}} = D^{-1/2} A_{\mathrm{bip}}
D^{-1/2}$, where $D$ is the degree matrix of $A_{\mathrm{bip}}$.  The dilation
$\Hh(\M)$ is symmetric with off-diagonal blocks $\M$ and $\M^\top$, turning the
rectangular operator into a symmetric spectral problem whose spectrum is exactly
the (signed) singular values of $\M$.

\begin{lemma}[Dilation spectrum; Jordan--Wielandt]
\label{lem:dilation-spectrum}
Let $\M \in \mathbb{R}^{n_q \times n_k}$ have singular values
$\sigma_1 \ge \cdots \ge \sigma_r \ge 0$, $r = \min(n_q, n_k)$.  Then
$\Hh(\M)$ has eigenvalues $\{\pm\sigma_1, \dots, \pm\sigma_r\}$ together with
$|n_q - n_k|$ zeros, so the bipartite Laplacian
$\mathcal{L}_{\mathrm{bip}} = I - \Hh(\M)$ has eigenvalues $1 \pm \sigma_i$.
This is the classical Jordan--Wielandt correspondence~\citep{hornMatrixAnalysis2012,golub13}.
\end{lemma}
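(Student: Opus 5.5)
The plan is to exhibit explicit eigenvectors of $\Hh(\M)$ built from a singular value decomposition of $\M$, and then to count dimensions so that nothing is left over.

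First, write a full SVD $\M = U\Sigma V^\top$ with $U \in \reals^{n_q\times n_q}$ and $V\in\reals^{n_k\times n_k}$ orthogonal. Let $u_i, v_i$ be the left and right singular vectors for $i\le r$, so that $\M v_i=\sigma_i u_i$ and $\M^\top u_i=\sigma_i v_i$. For each $i\le r$, form the vectors
\[
w_i^{\pm} = \tfrac{1}{\sqrt2}\begin{pmatrix} u_i\\ \pm v_i\end{pmatrix}\in\reals^{n_q+n_k}.
\]
Block multiplication with \eqref{eq:dilation} gives $\Hh(\M)w_i^{\pm} = \tfrac{1}{\sqrt2}\begin{pmatrix}\pm\M v_i\\ \M^\top u_i\end{pmatrix} = \pm\sigma_i w_i^{\pm}$, so each $w_i^{\pm}$ is an eigenvector with eigenvalue $\pm\sigma_i$.

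Next, handle the remaining directions. Assume without loss of generality that $n_q\ge n_k$; the other case follows by the symmetric argument, or by noting that $\Hh(\M^\top)$ is a permutation similarity of $\Hh(\M)$. The extra left singular vectors $u_j$ with $j>r$ lie in $\ker \M^\top$. Hence the vectors $(u_j,0)^\top$ satisfy $\Hh(\M)(u_j,0)^\top=0$, which supplies $|n_q-n_k|$ zero eigenvectors.

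The key step is to verify that all $2r+|n_q-n_k| = n_q+n_k$ vectors are orthonormal. Within a pair, $\langle w_i^+,w_i^-\rangle = \tfrac12(\|u_i\|^2-\|v_i\|^2)=0$. Across different indices, orthogonality follows from the orthonormality of the columns of $U$ and of $V$ separately. Orthonormality shows we have a complete eigenbasis, so the spectrum is exactly the claimed multiset and nothing has been double-counted. This counting is the only step needing care, particularly when some $\sigma_i=0$ or when singular values repeat. Even then the constructed vectors stay orthonormal, so multiplicities come out correctly.

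Finally, $\mathcal{L}_{\mathrm{bip}}=I-\Hh(\M)$ shares the same eigenvectors, and its eigenvalues are shifted to $1\mp\sigma_i$, together with ones for the zero block. The statement lists only the $1\pm\sigma_i$ values, consistent with reading it modulo the $|n_q-n_k|$ trivial eigenvalues equal to $1$.
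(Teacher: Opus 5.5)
Your argument is correct and is the standard textbook SVD construction behind the Jordan--Wielandt theorem: orthonormal eigenvectors $\tfrac{1}{\sqrt2}(u_i,\pm v_i)$, plus the kernel vectors, completed by a dimension count. The paper gives no proof of its own and only cites the result as classical (Horn--Johnson, Golub--Van Loan); your remark that $I-\Hh(\M)$ also has $|n_q-n_k|$ eigenvalues equal to $1$, which the statement leaves implicit, is also right.
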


\noindent So the SVD of $\M$ supplies the spectrum of
$N_{\mathrm{bip}} = \Hh(\M)$ at no extra cost.  The Cheeger inequality
(\cref{sec:cheeger}) bridges these two graphs: conductance on
$A_{\mathrm{bip}}$ to the spectral gap of $N_{\mathrm{bip}}$.


\begin{figure*}[!ht]
\centering
\begin{tikzpicture}[
    >=Latex,
    node/.style={circle, draw, thick, inner sep=1pt, minimum size=5pt},
    qnode/.style={node, fill=OIblue!25, draw=OIblue!70},
    knode/.style={node, fill=OIgreen!25, draw=OIgreen!70},
    edge/.style={-Latex, line cap=round},
    vweak/.style={edge, draw=black!20, line width=0.35pt},
    weak/.style={edge, draw=black!30, line width=0.5pt},
    med/.style={edge, draw=black!55, line width=0.75pt},
    strong/.style={edge, draw=black!80, line width=1.0pt},
    panel/.style={rounded corners=2pt, draw=black!40, thick, fill=white},
    paneltitle/.style={font=\footnotesize\bfseries, anchor=south},
    panelsub/.style={font=\tiny, text=black!70, anchor=south},
    annot/.style={font=\tiny, text=black!80},
    metric/.style={font=\tiny\itshape, text=black!70, anchor=north west},
]

\def\panelw{3.0cm}
\def\panelh{2.5cm}
\def\panelsep{0.15cm}
\def\orthosep{0.35cm}
\def\qx{0.50}
\def\kx{2.50}

\node[panel, minimum width=\panelw, minimum height=\panelh, anchor=north west] (A) at (0,0) {};
\node[paneltitle] at ($(A.north)+(0,0.25)$) {(a) Bottleneck};
\node[panelsub] at ($(A.north)+(0,0.05)$) {Concentrated};

\foreach \i/\y in {1/0.70, 2/1.15, 3/1.60} {
    \node[qnode] (Aq\i) at ($(A.south west)+(\qx,\y)$) {};
    \node[knode] (Ak\i) at ($(A.south west)+(\kx,\y)$) {};
}
\node[annot, anchor=south] at ($(Aq3.north)+(0,0.06)$) {$Q$};
\node[annot, anchor=south] at ($(Ak3.north)+(0,0.06)$) {$K$};

\draw[strong] (Aq1) -- (Ak2);
\draw[strong] (Aq2) -- (Ak2);
\draw[strong] (Aq3) -- (Ak2);
\draw[vweak] (Aq1) -- (Ak1);
\draw[vweak] (Aq3) -- (Ak3);

\draw[decorate, decoration={brace, amplitude=2pt, mirror}, thick, draw=OIverm!80]
    ($(A.south west)+(1.45,0.85)$) -- ($(A.south west)+(1.45,1.45)$);
\node[font=\tiny, text=OIverm!90, anchor=south east]
    at ($(A.south west)+(1.40,1.50)$) {cut};

\node[metric] at ($(A.south west)+(0.08,0.40)$) {$\phihat$ \textbf{low}};

\node[panel, minimum width=\panelw, minimum height=\panelh, anchor=north west] (B) at ($(A.north east)+(\panelsep,0)$) {};
\node[paneltitle] at ($(B.north)+(0,0.25)$) {(b) Intermediate};
\node[panelsub] at ($(B.north)+(0,0.05)$) {Selective};

\foreach \i/\y in {1/0.70, 2/1.15, 3/1.60} {
    \node[qnode] (Bq\i) at ($(B.south west)+(\qx,\y)$) {};
    \node[knode] (Bk\i) at ($(B.south west)+(\kx,\y)$) {};
}
\node[annot, anchor=south] at ($(Bq3.north)+(0,0.06)$) {$Q$};
\node[annot, anchor=south] at ($(Bk3.north)+(0,0.06)$) {$K$};

\draw[med] (Bq1) -- (Bk1);
\draw[weak] (Bq1) -- (Bk2);
\draw[weak] (Bq2) -- (Bk1);
\draw[med] (Bq2) -- (Bk2);
\draw[weak] (Bq2) -- (Bk3);
\draw[weak] (Bq3) -- (Bk2);
\draw[med] (Bq3) -- (Bk3);

\node[metric] at ($(B.south west)+(0.08,0.40)$) {$\phihat$ optimal; $G$ ok};

\node[panel, minimum width=\panelw, minimum height=\panelh, anchor=north west] (C) at ($(B.north east)+(\panelsep,0)$) {};
\node[paneltitle] at ($(C.north)+(0,0.25)$) {(c) Diffuse};
\node[panelsub] at ($(C.north)+(0,0.05)$) {Uniform};

\foreach \i/\y in {1/0.70, 2/1.15, 3/1.60} {
    \node[qnode] (Cq\i) at ($(C.south west)+(\qx,\y)$) {};
    \node[knode] (Ck\i) at ($(C.south west)+(\kx,\y)$) {};
}
\node[annot, anchor=south] at ($(Cq3.north)+(0,0.06)$) {$Q$};
\node[annot, anchor=south] at ($(Ck3.north)+(0,0.06)$) {$K$};

\foreach \i in {1,2,3} {
    \foreach \j in {1,2,3} {
        \draw[vweak] (Cq\i) -- (Ck\j);
    }
}

\node[metric] at ($(C.south west)+(0.08,0.40)$) {$\phihat$ \textbf{high}};

\draw[densely dotted, thick, black!50]
    ($(C.north east)+(0.18,0.30)$) -- ($(C.south east)+(0.18,-0.1)$);

\node[panel, minimum width=\panelw, minimum height=\panelh, anchor=north west] (D) at ($(C.north east)+(\orthosep,0)$) {};
\node[paneltitle] at ($(D.north)+(0,0.25)$) {(d) Self-attending};
\node[panelsub] at ($(D.north)+(0,0.05)$) {Temporal isolation};

\foreach \i/\y in {1/0.70, 2/1.15, 3/1.60} {
    \node[qnode] (Dq\i) at ($(D.south west)+(\qx,\y)$) {};
    \node[knode] (Dk\i) at ($(D.south west)+(\kx,\y)$) {};
}
\node[annot, anchor=south] at ($(Dq3.north)+(0,0.06)$) {$Q$};
\node[annot, anchor=south] at ($(Dk3.north)+(0,0.06)$) {$K$};

\draw[strong] (Dq1) -- (Dk1);
\draw[strong] (Dq2) -- (Dk2);
\draw[strong] (Dq3) -- (Dk3);
\draw[vweak] (Dq2) -- (Dk1);
\draw[vweak] (Dq3) -- (Dk1);
\draw[vweak] (Dq3) -- (Dk2);

\node[metric] at ($(D.south west)+(0.08,0.40)$) {$\phihat$ ok; $G \to 0$};

\end{tikzpicture}

\caption{\textbf{Conductance has a bounded intermediate band; asymmetry detects temporal isolation.}
Each attention head defines bipartite transport between queries ($Q$) and keys ($K$).
\textbf{(a)--(c)}~Conductance spectrum: the intermediate band sits between too low (bottleneck, a) and too high (diffuse dilution, c).
\textbf{(a)}~Bottleneck: concentrated attention yields low $\phihat$, with little weight on the rest of the context.
\textbf{(b)}~Intermediate: selective routing with task-relevant weighting; $\phihat$ in the intermediate band.
\textbf{(c)}~Diffuse: uniform attention yields high $\phihat$ but dilutes task-relevant signal (Theorem~\ref{thm:two-sided-diagnostic}).
\textbf{(d)}~Self-attending (orthogonal axis): as attention concentrates on the diagonal, historical context is ignored; $\phihat$ appears normal but $G \to 0$, detectable only by asymmetry coefficient.}
\label{fig:concept-transport}
\end{figure*}

\medskip\noindent
With $\M$ defined as a transport operator, we ask what its spectral structure reveals about routing quality.


\section{Spectral Structure of Attention}
\label{sec:cheeger}

With the transport operator $\M$ in hand, the natural diagnostic
question is: how well does $\M$ move information between query and
key sets?  On the raw bipartite attention graph $A_{\mathrm{bip}} =
\Hh(\B)$ this is measured by graph conductance, and the Cheeger
inequality bounds that conductance in terms of the spectral gap of the
normalized adjacency $N_{\mathrm{bip}} = \Hh(\M)$, making the second
singular value $\sigma_2(\M)$ a computable proxy for transport quality.
This section
develops the two diagnostic tools that follow from this framing:
conductance and its closed-form architectural benchmark, the spectral
gap and a degree-sufficiency theorem that explains when coupling
structure contributes beyond degree heterogeneity, and a two-sided
diagnostic that certifies both bottleneck and diffuse routing failures.

\paragraph{Conductance.}
Graph conductance is a property of the \emph{raw attention graph}
$A_{\mathrm{bip}} = \Hh(\B)$ (\cref{sec:transport-operators}): for a bipartite
set $S\subseteq Q\cup K$,
\[
\phi(S) = \frac{w(S,\bar S)}{\min(\mathrm{vol}(S), \mathrm{vol}(\bar S))},
\]
where the cut weight $w(S,\bar S)$ and the volumes $\mathrm{vol}(\cdot)$ are
computed from the edge weights $\B_{ij}$ of $A_{\mathrm{bip}}$.
Because $A_{\mathrm{bip}}$ is symmetric (hence reversible as a Markov chain
after volume-normalisation), the classical Cheeger machinery applies even
when $\B$ itself is not doubly-stochastic.
Slow mixing means information cannot move efficiently between distant tokens; a bottleneck is any small subset of heads that traps information flow.
Low conductance indicates bottlenecked transport (over-concentrated routing).

\paragraph{The Cheeger inequality.}
The classical normalized-Laplacian Cheeger bridge relates the conductance $\phi$ of
$A_{\mathrm{bip}}=\Hh(\B)$ to the spectral gap $1-\sigma_2$ of $N_{\mathrm{bip}}=\Hh(\M)$,
where $\sigma_2$ is the second singular value of $\M$ (\cref{lem:dilation-spectrum}):
\begin{equation}
\frac{\phi^2}{2} \leq 1 - \sigma_2 \leq 2\phi.
\label{eq:cheeger-inequality}
\end{equation}
The lower bound is the discrete Cheeger inequality
\citep{cheegerLowerBoundSmallest2015}; the bipartite-dilation form is the standard
specialization \citep[Ch.~2]{Chung1997}.  It is classical; our use is interpretive:
reading $\Hh(\M)$ as the relevant graph identifies \emph{which} routing object the
symmetric spectrum can certify (and, via orientation blindness, which it cannot), and the
closed-form landscape of \cref{thm:two-sided-diagnostic} turns ``a bottleneck cut exists''
into a per-architecture benchmark.

\paragraph{Spectral norm variability as transport diagnostic.}
When $\sigma_2$ is close to $1$ the spectral gap is small, so by
\eqref{eq:cheeger-inequality} conductance is low, a bottleneck; aggregating the
across-layer standard deviation of $\sigma_2$ therefore measures cross-layer variability
in transport capacity.  Exact $\phi$ is NP-hard, so we use a spectral-sweep estimator
$\phihat$: rank vertices by the second singular vector of $\M$ and minimise the sweep
conductance over the resulting threshold cuts (\cref{sec:asymmetric-guessing}).  Carried
out on $N_{\mathrm{bip}}$, $\phihat$ coincides with $\phi$ on $A_{\mathrm{bip}}$ when
column degrees are near-regular and otherwise differs by a degree-ratio factor $\kappa$
(\cref{prop:degree-sufficiency}); like any finite-size estimator it inherits degree and
length dependence, motivating the length-controlled protocol (\cref{sec:evaluation}).

\begin{theorem}[Spectral stability under near-regular column degrees]
\label{prop:degree-sufficiency}
Let $A \in \mathbb{R}^{n_q \times n_k}$ be row-stochastic with positive column degrees
$d_j = \sum_i A_{ij}$. Define $\M = D_Q^{-1/2} A D_K^{-1/2}$ (with $D_Q = I$ for
row-stochastic $A$), $\bar{d} = (\sum_j d_j)/n_k$ the mean column degree, and
$\kappa = d_{\max}/d_{\min}$ the degree ratio. Then:
\begin{enumerate}[nosep,label=(\roman*)]
\item \textbf{Sharp constraint:} $\sigma_1(\M) = 1$.
\item \textbf{Spectral perturbation:}
$|\sigma_2(\M) - \sigma_2(A)/\sqrt{\bar{d}}|
  \leq \max_j \bigl|1 - \sqrt{d_j/\bar{d}}\,\bigr|
  \leq \sqrt{\kappa} - 1$.
\item \textbf{Doubly-stochastic exactness:} When $A$ is doubly-stochastic ($\kappa = 1$),
$\M = A$ and the bound vanishes.
\end{enumerate}
\end{theorem}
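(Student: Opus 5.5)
The plan is to work directly with the factorization $\M = A D_K^{-1/2}$ (since $D_Q = I$ for row-stochastic $A$). Part (i) follows from an explicit singular pair together with a norm bound. Part (ii) follows from Weyl's inequality for singular values applied to a multiplicative perturbation, with part (i) controlling the multiplier's norm. Part (iii) is then a direct specialization.

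\textbf{Step 1 (part (i)).} I will first exhibit the singular value $1$. Let $u = D_K^{1/2}\mathbf{1}$, so $u_j = \sqrt{d_j}$. Then $\M u = A\mathbf{1} = \mathbf{1}$. Also $\M^\top \mathbf{1} = D_K^{-1/2}A^\top\mathbf{1} = D_K^{-1/2}d = u$. Moreover $\|u\|^2 = \sum_j d_j = n_q = \|\mathbf{1}\|^2$. Hence the normalized pair $(\mathbf{1}/\sqrt{n_q},\,u/\sqrt{n_q})$ is a left/right singular pair with singular value $1$.

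For the matching upper bound $\|\M\|_2 \le 1$, I will use Cauchy--Schwarz with the weights $A_{ij}$:
\[
|x^\top \M y| = \Bigl|\sum_{i,j} A_{ij}\, x_i\, \tfrac{y_j}{\sqrt{d_j}}\Bigr| \le \Bigl(\sum_{i,j} A_{ij} x_i^2\Bigr)^{1/2}\Bigl(\sum_{i,j} A_{ij}\tfrac{y_j^2}{d_j}\Bigr)^{1/2} = \|x\|\,\|y\|.
\]
The last equality uses the row sums $\sum_j A_{ij} = 1$ and the column sums $\sum_i A_{ij} = d_j$. Together these give $\sigma_1(\M) = 1$.

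\textbf{Step 2 (part (ii)).} The key step is to write the difference as $\M$ times a diagonal matrix:
\[
\M - A/\sqrt{\bar d} = A\bigl(D_K^{-1/2} - \bar d^{-1/2} I\bigr) = \M E, \qquad E = \mathrm{diag}\bigl(1 - \sqrt{d_j/\bar d}\bigr).
\]
Weyl's inequality for singular values then gives
\[
|\sigma_2(\M) - \sigma_2(A)/\sqrt{\bar d}| \le \|\M E\|_2 \le \|\M\|_2\,\|E\|_2 = \max_j\bigl|1 - \sqrt{d_j/\bar d}\bigr|,
\]
where the last equality uses $\|\M\|_2 = 1$ from Step 1. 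I expect this factorization to be the main point of the proof. The naive bound $\|A\|_2\,\|D_K^{-1/2} - \bar d^{-1/2}I\|$ would carry an unwanted $\|A\|_2$ factor and an absolute rather than relative degree deviation. Routing the perturbation through $\M$ itself is what makes the bound scale-free.

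For the second inequality, note that $d_{\min} \le \bar d \le d_{\max}$. Hence $\sqrt{d_j/\bar d} \in [1/\sqrt{\kappa},\, \sqrt{\kappa}]$, and so
\[
\bigl|1 - \sqrt{d_j/\bar d}\bigr| \le \max\bigl(\sqrt{\kappa} - 1,\; 1 - 1/\sqrt{\kappa}\bigr) = \sqrt{\kappa} - 1,
\]
since $1 - 1/\sqrt{\kappa} = (\sqrt{\kappa} - 1)/\sqrt{\kappa} \le \sqrt{\kappa} - 1$.

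\textbf{Step 3 (part (iii)).} If $A$ is doubly stochastic, then $n_q = n_k$ and every $d_j = 1$. Hence $\bar d = 1$, $\kappa = 1$, $D_K = I$, and $\M = A$. Then $E = 0$, and both sides of the bound in (ii) vanish.
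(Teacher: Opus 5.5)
Your proof is correct and follows the paper's argument: an explicit singular pair plus weighted Cauchy--Schwarz for (i), Weyl's inequality for (ii), and direct specialization for (iii). The only difference is in (ii), where you factor the perturbation as $\M E$ and reuse $\|\M\|_2=1$, while the paper re-runs the row-weighted Cauchy--Schwarz on $A\Delta$ and uses the cancellation $\delta_j^2 d_j=(1-\sqrt{d_j/\bar d})^2$; this is the same computation, packaged more modularly in your version.
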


\noindent
Part (i) uses only row-stochasticity and positive column degrees: $\sigma_1(\M)=1$
holds for \emph{every} such $A$, regular or not; the proof is the standard
normalized-operator argument \citep[cf.][Ch.~1]{Chung1997} adapted to the
rectangular row-stochastic case.  The near-regularity parameter
$\kappa$ enters only through the tightness of the perturbation bound in part (ii),
where the content lies: it delimits when coupling structure contributes to
$\sigma_2$ beyond degree heterogeneity.

\begin{proof}
(i) Let $\sqrt{d_K} \in \mathbb{R}^{n_k}$ denote the vector with entries
$(\sqrt{d_K})_j = \sqrt{d_j}$.  Since $D_Q = I$,
\[
(\M \sqrt{d_K})_i \;=\; \textstyle\sum_j A_{ij}\, d_j^{-1/2}\,\sqrt{d_j}
\;=\; \textstyle\sum_j A_{ij} \;=\; 1,
\]
so $\M\sqrt{d_K} = \mathbf{1}$, and
$\|\sqrt{d_K}\|^2 = \sum_j d_j = \sum_{i,j} A_{ij} = n_q = \|\mathbf{1}\|^2$;
hence $\|\M\sqrt{d_K}\| = \|\sqrt{d_K}\|$ and $\sigma_1(\M) \geq 1$.
Conversely, for any $x$ set $y = D_K^{-1/2}x$; Cauchy--Schwarz with the
row-stochastic weights $A_{ij}$ gives
$(\M x)_i^2 = \bigl(\sum_j A_{ij}\, y_j\bigr)^2 \leq \sum_j A_{ij}\, y_j^2$,
and summing over $i$ and swapping the sums,
$\|\M x\|^2 \leq \sum_j d_j\, y_j^2 = \|x\|^2$.
Hence $\sigma_1(\M) \leq 1$, and $\sigma_1(\M) = 1$.
(ii) Write $\M - A/\!\sqrt{\bar{d}} = A\Delta$ where $\Delta = D_K^{-1/2} - \bar{d}^{-1/2}I$
is diagonal with $\delta_j = d_j^{-1/2} - \bar{d}^{-1/2}$.
By Weyl's perturbation theorem,
$|\sigma_2(\M) - \sigma_2(A/\!\sqrt{\bar{d}})| \leq
\sigma_1(A\Delta)$.
Apply Cauchy--Schwarz per row with row-stochastic weights $A_{ij}$:
\begin{align*}
\|A\Delta x\|^2
  &= \textstyle\sum_i \bigl(\sum_j A_{ij}\,\delta_j\, x_j\bigr)^{\!2}
   \leq \sum_i \sum_j A_{ij}\,\delta_j^2\, x_j^2 \\
  &= \textstyle\sum_j \delta_j^2\, d_j\, x_j^2
   \leq \max_j(\delta_j^2 d_j)\;\|x\|^2,
\end{align*}
where the first inequality uses $\sum_j A_{ij} = 1$ and the sum swap uses $d_j = \sum_i A_{ij}$.
The key cancellation: $\delta_j^2 \cdot d_j = (1 - \sqrt{d_j/\bar{d}})^2$, giving
$\sigma_1(A\Delta) \leq \max_j |1 - \sqrt{d_j/\bar{d}}|$.
Since $d_{\min} \leq \bar{d} \leq d_{\max}$, each term satisfies
$|1 - \sqrt{d_j/\bar{d}}| \leq \sqrt{\kappa} - 1$
(the case $d_j \geq \bar{d}$ uses $d_j/\bar{d} \leq \kappa$; the case
$d_j < \bar{d}$ uses $d_j/\bar{d} \geq 1/\kappa$ and the AM-GM inequality
$1 - 1/\!\sqrt{\kappa} \leq \sqrt{\kappa} - 1$).
(iii) For doubly-stochastic $A$, $D_K = I$ and $\bar{d} = 1$, so $\M = A$ and the
bound equals $\sqrt{1} - 1 = 0$.
\end{proof}

\noindent
This theorem formalizes the near-regular intuition from the degree-preserving null analysis:
when degree heterogeneity is moderate ($\kappa \approx 1$, equivalently $\delta$ small
for $\kappa \leq 1 + \delta$), the degree normalization is a small perturbation of
uniform scaling. The explicit bound $\sqrt{\kappa} - 1 \leq \delta/2$ for $\kappa \leq
1 + \delta$ quantifies how much coupling structure contributes beyond degrees, a
prediction we test empirically via degree-only baselines, Sinkhorn projection to
doubly-stochastic form, and conditional AUROC residualization (Online Supplement, \suppref{app:degree-sufficiency}).

We hypothesize that factual generation accesses specific high-attention tokens (creating degree heterogeneity), while hallucinated generation involves diffuse or inappropriately concentrated attention.  \cref{prop:degree-sufficiency} bears on this only through normalization: when $\kappa\approx1$, degree normalization is close to a uniform scalar rescaling, so $\sigma_2(\M)$ tracks $\sigma_2(A)$ up to scale.  It does \emph{not} say coupling is reducible to degrees: two matrices with identical degrees can have very different $\sigma_2$ (e.g.\ $I$ versus $\tfrac1n\mathbf{1}\mathbf{1}^\top$, both with $\kappa=1$ but $\sigma_2=1$ versus $0$).  Whether label discrimination is carried by degrees or by coupling given degrees is the separate empirical question of \cref{sec:evaluation}.

\paragraph{Matched null validation.}
A natural objection is that $\phihat$ discrimination reflects finite-size
estimator artifacts rather than genuine learned structure.  Matched null
baselines that preserve attention-specific constraints while randomizing learned
structure are inconsistent with that artifact explanation: hallucination samples
deviate systematically from their own null expectations, and z-scoring against the
nulls retains discrimination
(improving it by $6$--$8$ points on the lower-baseline models;
\cref{tab:null-validation-body}).
Degree-preserving nulls further reveal a clean division of labour: $\phihat$'s
discriminative power is largely a proxy for the degree distribution (which tokens receive
attention), whereas $\sigma_2$ carries the coupling information that survives degree
control.  The two axes are complementary, not redundant: $\phihat$ is mostly degrees,
$\sigma_2$ mostly coupling; \cref{prop:degree-sufficiency} bounds only the $\sigma_2$
coupling term, and the $\phihat$-side reducibility is empirical.  Crucially, this
reducibility is the Cheeger mechanism at work, not a defect: degree concentration is
what produces the low-conductance cuts $\phihat$ detects, so conductance is not
``merely a degree statistic'' but the structural quantity through which degree
concentration manifests as a bottleneck.

\begin{table}[t]
\centering
\small
\setlength{\tabcolsep}{6pt}
\caption{\textbf{Matched-null validation (entropy-matched, HaluEval).}
Spectral discrimination persists after z-scoring each sample against $m{=}50$
entropy-matched nulls: the lower-baseline models gain $6$--$8$ points, the rest
stay near their ceiling.  Hallucination samples therefore deviate
\emph{systematically} from their own null expectation, which rules out the
objection that the signal is a finite-size estimator artifact.  Full results
(all benchmarks, $\sigma_2$ and $\tau$, and the degree-preserving nulls that
decompose the signal) are in the Online Supplement.}
\label{tab:null-validation-body}
\begin{filecontents*}{data/tables/null_validation_body.csv}
# Main-body matched-null validity check (entropy-matched, HaluEval, 2 dp)
# Generated by: scripts/generate_null_body_table.py -- DO NOT EDIT BY HAND
# Source: data/null_validation_entropy_matched.csv (halueval rows,
#   cols auroc_phi_raw/_z, auroc_sigma2_raw/_z)
Architecture,phi_raw,phi_z,sigma2_raw,sigma2_z
BERT,0.69,0.77,0.70,0.79
GPT-2,0.70,0.75,0.73,0.72
Pythia-160M,0.97,0.94,0.98,0.96
Flan-T5 cross,0.97,0.97,0.97,0.92
Flan-T5 dec.,0.95,0.97,0.97,0.97
\end{filecontents*}
\pgfplotstabletypeset[
  col sep=comma,
  string type,
  skip first n=4,  
  columns={Architecture, phi_raw, phi_z, sigma2_raw, sigma2_z},
  every head row/.style={
    before row={\toprule
      & \multicolumn{2}{c}{$\phihat$ AUROC} & \multicolumn{2}{c}{$\sigma_2$ AUROC} \\
      \cmidrule(lr){2-3}\cmidrule(lr){4-5}},
    after row=\midrule
  },
  every last row/.style={after row=\bottomrule},
  columns/Architecture/.style={column name=Architecture, column type=l},
  columns/phi_raw/.style={column name=raw, column type=c},
  columns/phi_z/.style={column name=z-score, column type=c},
  columns/sigma2_raw/.style={column name=raw, column type=c},
  columns/sigma2_z/.style={column name=z-score, column type=c},
]{data/tables/null_validation_body.csv}
\end{table}

The volume mechanism is concrete: high-degree vertices dominate the volume term, so the
characteristic degree distributions of hallucinating models are exactly what the Cheeger
inequality certifies as transport bottlenecks.  We report the quantitative decomposition
in \cref{sec:evaluation}.

\paragraph{Two-sided diagnostic.}
Unlike metrics that respond in only one direction, conductance provides signal on both sides of an intermediate-conductance band (Figure~\ref{fig:concept-transport}).

\begin{theorem}[Temporal-cut separation: window vs.\ uniform-causal]
\label{thm:two-sided-diagnostic}
In decoder self-attention ($n_q = n_k = n$, causal mask):
\begin{enumerate}[nosep,leftmargin=*]
\item \textbf{Bottleneck} ($w$-window attention):
$\phi(S_t) \le w/\min(t,n-t)$ for every nontrivial temporal cut; balanced cuts
therefore satisfy $\phi(S_t)=O(w/n)$, and late-half cuts satisfy
$\phi(S_t)\le w/(n-t)$.
\item \textbf{Diffuse} (uniform causal attention):
$\phi(S_t) \ge 1/5$ uniformly in $n$; the volume-crossover branch is
localised by the formal half-way bound.
\end{enumerate}
The two families occupy opposite ends of the temporal-cut conductance landscape.
This is a structural fact about the matrices, not a claim that either is incorrect:
window attention falls below the temporal-cut reference value on balanced cuts (a
global bottleneck family), while uniform-causal attention has a localised bottleneck
candidate flanked by high-conductance regions on both sides.  Whether either regime
\emph{corresponds} to incorrect responses is a separate, task-dependent empirical
question (\cref{sec:evaluation}).
\end{theorem}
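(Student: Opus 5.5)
Throughout, the temporal cut is $S_t=\{q_1,\dots,q_t\}\cup\{k_1,\dots,k_t\}$ in the raw graph $A_{\mathrm{bip}}=\Hh(\B)$. The plan is to compute the cut weight and both volumes in closed form for each family, and then bound the ratio directly; no spectral input (such as the Cheeger inequality \eqref{eq:cheeger-inequality}) is needed.

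\textbf{Common bookkeeping.} Under the causal mask, a query $q_i$ with $i\le t$ puts weight only on keys $k_j$ with $j\le i\le t$. Hence only edges from queries $i>t$ to keys $j\le t$ cross the cut, and
\[
w(S_t,\bar S_t)=\sum_{i>t}\sum_{j\le t}\B_{ij}.
\]
Each query has degree $1$ by row-stochasticity. It follows that $\vol(S_t)\ge t$, $\vol(\bar S_t)\ge n-t$, and $\vol(S_t)+\vol(\bar S_t)=2n$.

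\textbf{Window part.} For $w$-window attention, query $i$ is supported on keys $i-w+1,\dots,i$. Therefore only queries $i\in\{t+1,\dots,t+w-1\}$ can reach keys $j\le t$, and each such query contributes at most its unit row mass. This gives $w(S_t,\bar S_t)\le w-1<w$. Dividing by $\min(\vol(S_t),\vol(\bar S_t))\ge\min(t,n-t)$ yields $\phi(S_t)\le w/\min(t,n-t)$. The balanced case ($t=\lfloor n/2\rfloor$) and the late-half case ($t\ge n/2$, so $\min(t,n-t)=n-t$) are immediate specialisations.

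\textbf{Uniform-causal part.} Here $\B_{ij}=1/i$ for $j\le i$. Write $u=u(t)=H_n-H_t$.
\begin{itemize}
\item The cut weight is $\sum_{i>t}t/i=tu$.
\item The key-side volume of $S_t$ is $\sum_i\min(i,t)/i=t+tu$, so $\vol(S_t)=2t+tu$.
\item Consequently $\vol(\bar S_t)=2(n-t)-tu$.
\end{itemize}
Since $\phi(S_t)$ equals the cut weight divided by the smaller volume, it is at least the cut weight divided by \emph{either} volume. This gives two lower bounds simultaneously:
\[
\phi(S_t)\ \ge\ \max\Bigl\{\tfrac{u}{2+u},\ \tfrac{tu}{2(n-t)-tu}\Bigr\}.
\]
The first bound is exactly \cref{lem:uniform-causal-conductance}.

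\textbf{Obtaining the $n$-independent floor (main obstacle).} The first bound degenerates as $t\to n$, because $u\to0$, so a case split on $u$ is needed.
\begin{itemize}
\item If $u\ge1/2$, then $u/(2+u)\ge1/5$ and we are done.
\item If $u<1/2$, use the second bound. It is at least $1/5$ exactly when $6tu\ge2(n-t)$, i.e.\ when $tu\ge(n-t)/3$. The elementary estimate $u=\sum_{i=t+1}^n 1/i\ge(n-t)/n$ gives $tu\ge t(n-t)/n$, so it suffices to show $t\ge n/3$.
\item To get $t\ge n/3$, use the harmonic estimate $H_n-H_t\ge\ln\frac{n+1}{t+1}$. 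Combined with $u<1/2$, it forces $t+1>(n+1)e^{-1/2}\approx0.61(n+1)$, which comfortably exceeds $n/3$.
\end{itemize}
The delicate point is checking that these two regimes overlap for every $n$, including small $n$ where the $\pm1$ offsets matter. I would verify the small cases ($n\le3$) by hand, and otherwise rely on $e^{-1/2}>1/3$ with room to spare.

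\textbf{Localisation of the crossover.} The \emph{volume-crossover branch} is the value of $t$ at which $\vol(S_t)=\vol(\bar S_t)$, i.e.\ where the binding denominator switches. It is located by solving $2t+tu=n$. Since $u\ge0$, this gives $t\le n/2$, which is the formal half-way bound. The separation statement then follows by comparing the $O(w/n)$ balanced-cut value from the window part with the uniform $1/5$ floor.
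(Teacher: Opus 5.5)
Your proof is correct and follows essentially the paper's route. The same crossing-query count gives the window bound, the same closed forms $\mathrm{cut}(S_t)=tu$ and $\vol(S_t)=t(2+u)$ drive the uniform-causal case, and the $1/5$ floor rests on the same two estimates: $u/(2+u)\ge 1/5$ once $u\ge 1/2$, and $u\ge (n-t)/n$ together with $t\ge n/3$ on the other branch. Splitting on $u\ge 1/2$, and using that $\phi(S_t)$ dominates the cut divided by either volume, is slightly cleaner than the paper's split on volume ordering, which needs an extra sub-case. No small-$n$ hand check is needed: $(n+1)e^{-1/2}-1\ge n/3$ already holds for all $n\ge 2$, and more simply the contrapositive of the harmonic-tail bound ($t\le n/2\Rightarrow u\ge 1/2$) gives $t>n/2$ whenever $u<1/2$.
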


Low $\phihat$ marks over-concentrated routing; high $\phihat$ marks under-concentrated routing.
This two-sided sensitivity distinguishes conductance from prior spectral metrics that respond in only one direction.
The diffuse-side bound is a structural statement: uniform causal attention has conductance bounded away from zero by an $n$-independent constant.\footnote{The $1/5$ bound is a clean, fully provable constant. The \emph{exact} asymptotic floor of the closed-form sweep landscape is $c^\star = 1 - 2x^\star \approx 0.36431$, where $x^\star \in (0,1)$ is the unique root of $x(2-\ln x)=1$ (equivalently $c^\star = 1 + 2/W_{-1}(-e^{-2})$, with $W_{-1}$ the lower branch of the Lambert $W$ function); see \cref{cor:uc-sharp-floor}. This is the floor of the \emph{sweep} landscape that the diagnostic computes; the graph conductance, a minimum over all cuts, is NP-hard and strictly smaller.}
The claim that diffuse mixing indicates failure is empirical and task-dependent, validated by the tercile analysis in \cref{sec:evaluation}.

The cut-and-volume identities underlying the diffuse-side bound
(\cref{prop:uc-cut-vol-identities}, \cref{prop:dilation-total-vol}) reveal the
\emph{shape} of conductance as a function of cut location:
$\phi(S_t)\ge u(t)/(2+u(t))$ where $u(t)=H_n-H_t$ is the harmonic mass
above $t$ (\cref{lem:uniform-causal-conductance}). This functional form
is the substantive diagnostic content -- it tells us where bottleneck
candidates sit and how conductance varies smoothly around them, not
just that they exist. The same row-sum / column-sum decomposition
template applies to any causal architecture (window, exponential decay,
RoPE-style decay): each yields its own closed-form $\mathrm{cut}(S_t)$
and $\vol(S_t)$, and the conductance-as-functional-of-$t$ characterises
its failure-mode geometry in the same way.

\paragraph{Empirical landscape signatures.}
Real attention does not satisfy the uniform-causal closed form. We
compare the closed-form prediction (\cref{fig:landscape-theory}) with
the empirical landscape from
up to $50$ HaluEval samples per model (length filter $n\ge 32$, exact temporal-cut
conductance on the \emph{raw} bipartite graph $A_{\mathrm{bip}}=\Hh(\B)$ (the object
the $1/5$ floor is proved for), with no Fiedler approximation; median curves with IQR
bands in \cref{fig:landscape-empirical}).
The architectural descriptor lives at the \emph{distribution} level,
not at individual heads: the IQR bands of GPT-2 and Pythia-160M
overlap substantially across $t/n$, so head-by-head separation
between these architectures is not supported by the per-cut data.
What separates them is the population-level fraction of heads
falling below the $1/5$ temporal-cut floor: $\floorViolGPTLo$--$\floorViolGPTHi\%$ for GPT-2,
$\floorViolPyLo$--$\floorViolPyHi\%$ for Pythia-160M, and $\floorViolFlanLo$--$\floorViolFlanHi\%$ for Flan-T5 decoder
across HaluEval, MedHallu, and TruthfulQA
(\cref{tab:landscape-empirical-summary}, which reports 95\% clustered
bootstrap CIs).  Clustering the bootstrap by sample (the independent
unit) widens these intervals well beyond the $\approx 0.6\,$pp a
binomial interval over the $\sim\!7200$ correlated (sample, layer, head)
rows would give, to as much as $\pm 4\,$pp for the architectures with
strongly correlated heads; the architectural ordering GPT-2 $<$
Pythia-160M $<$ Flan-T5 decoder nonetheless remains separated at $95\%$
confidence. Because the empirical temporal-cut conductance is computed on the \emph{same} raw
bipartite graph $A_{\mathrm{bip}}=\Hh(\B)$ that the $1/5$ floor is proved for
(\cref{cor:uc-one-fifth}), the comparison is like-for-like: the floor-violation fraction
is exactly the share of heads whose temporal-cut conductance falls below the theorem's
threshold, not a comparison across mismatched objects.  (On the degree-normalized graph
$\Hh(\M)$ the same heads fall below $1/5$ more often (e.g.\ $\approx\!\floorViolGPTNormLo$--$\floorViolGPTNormHi\%$ for
GPT-2) because normalization lowers conductance; we report the raw-graph numbers as the
theorem-matched quantity.  The architectural ordering GPT-2 $<$ Pythia-160M $<$ Flan-T5
decoder is unchanged under either graph convention; the convention shifts the level of
the descriptor, not the ordering.)
Floor-violation fraction
is the empirically robust architectural descriptor;
worst-cut location $t^\ast/n$ shifts in the same direction
($\tstarGPTLo$--$\tstarGPTHi$ for GPT-2, $\tstarPyLo$--$\tstarPyHi$ for Pythia, $\tstarFlanLo$--$\tstarFlanHi$ for
Flan-T5) but with substantial within-architecture variance and
overlapping ranges, so it should be read as indicative rather than as
a clean architectural classifier.


\begin{figure}[!ht]
\centering

\begin{subfigure}[t]{0.49\textwidth}
\centering
\begin{tikzpicture}
\begin{axis}[
    academic-single,
    width=0.90\linewidth,
    height=0.78\linewidth,
    xlabel={Cut location $t/n$},
    ylabel={Conductance $\phi(S_t)$},
    label style={font=\scriptsize},
    tick label style={font=\tiny},
    xmin=0, xmax=1,
    ymin=0, ymax=1.05,
    xtick={0,0.5,1},
    xticklabels={0,0.5,1},
    ytick={0,0.2,0.5,1},
    yticklabels={0,$\tfrac15$,0.5,1},
    ymajorgrids=true, xmajorgrids=false,
    legend style={
        at={(0.5,1.02)}, anchor=south,
        font=\tiny, draw=none,
        cells={anchor=west},
        fill=white, fill opacity=0.85,
        legend columns=2,
    },
    legend cell align=left,
]
\addplot[dashed, black!50, line width=0.7pt, forget plot]
    coordinates {(0,0.2) (1,0.2)};
\addplot[fill=hallucinate!20, draw=none, forget plot]
    coordinates {(0,0)(1,0)(1,0.2)(0,0.2)} -- cycle;
\addplot[spectral, line width=1.0pt, mark=none, smooth]
    table[x=t_norm, y=phi_uc] {content/figures/conductance_landscape/data/landscape_curves.dat};
\addlegendentry{Uniform causal}
\addplot[hallucinate, line width=1.0pt, mark=none, dashed, smooth]
    table[x=t_norm, y=phi_w5] {content/figures/conductance_landscape/data/landscape_curves.dat};
\addlegendentry{Window $w{=}5$}
\addplot[OIorange, line width=1.0pt, mark=none, densely dashed, smooth]
    table[x=t_norm, y=phi_w20] {content/figures/conductance_landscape/data/landscape_curves.dat};
\addlegendentry{Window $w{=}20$}
\addplot[factual, line width=1.0pt, mark=none, dotted]
    coordinates {(0,0)(1,0)};
\addlegendentry{Diagonal $w{=}1$}
\end{axis}
\end{tikzpicture}
\caption{Theory: failure-mode shapes ($n=100$).}
\label{fig:landscape-theory}
\end{subfigure}%
\hfill
\begin{subfigure}[t]{0.49\textwidth}
\centering
\begin{tikzpicture}
\begin{axis}[
    academic-single,
    width=0.90\linewidth,
    height=0.78\linewidth,
    xlabel={Cut location $t/n$},
    ylabel={Conductance $\phihat(S_t)$},
    label style={font=\scriptsize},
    tick label style={font=\tiny},
    xmin=0, xmax=1,
    ymin=0, ymax=1.05,
    xtick={0,0.5,1},
    xticklabels={0,0.5,1},
    ytick={0,0.2,0.5,1},
    yticklabels={0,$\tfrac15$,0.5,1},
    ymajorgrids=true, xmajorgrids=false,
    legend style={
        at={(0.5,1.02)}, anchor=south,
        font=\tiny, draw=none,
        cells={anchor=west},
        fill=white, fill opacity=0.85,
        legend columns=2,
    },
    legend cell align=left,
]
\addplot[dashed, black!50, line width=0.7pt, forget plot]
    coordinates {(0,0.2) (1,0.2)};
\addplot[black, line width=0.9pt, mark=none, smooth, dashed]
    table[x=t_norm, y=phi_uc] {content/figures/conductance_landscape/data/landscape_curves.dat};
\addlegendentry{Theory (uniform causal)}
\addplot[name path=gpt2_p25_c, draw=none, forget plot]
    table[x=t_norm, y=p25] {content/figures/conductance_landscape/data/empirical_gpt2_halueval.dat};
\addplot[name path=gpt2_p75_c, draw=none, forget plot]
    table[x=t_norm, y=p75] {content/figures/conductance_landscape/data/empirical_gpt2_halueval.dat};
\addplot[fill=spectral, fill opacity=0.18, draw=none, forget plot]
    fill between[of=gpt2_p25_c and gpt2_p75_c];
\addplot[spectral, line width=1.1pt, mark=none, smooth]
    table[x=t_norm, y=median_phi]
    {content/figures/conductance_landscape/data/empirical_gpt2_halueval.dat};
\addlegendentry{GPT-2 (median, IQR)}
\addplot[name path=py_p25_c, draw=none, forget plot]
    table[x=t_norm, y=p25] {content/figures/conductance_landscape/data/empirical_pythia160m_halueval.dat};
\addplot[name path=py_p75_c, draw=none, forget plot]
    table[x=t_norm, y=p75] {content/figures/conductance_landscape/data/empirical_pythia160m_halueval.dat};
\addplot[fill=hallucinate, fill opacity=0.18, draw=none, forget plot]
    fill between[of=py_p25_c and py_p75_c];
\addplot[hallucinate, line width=1.1pt, mark=none, smooth]
    table[x=t_norm, y=median_phi]
    {content/figures/conductance_landscape/data/empirical_pythia160m_halueval.dat};
\addlegendentry{Pythia-160M}
\addplot[name path=ft5_p25_c, draw=none, forget plot]
    table[x=t_norm, y=p25] {content/figures/conductance_landscape/data/empirical_flant5_decoder_halueval.dat};
\addplot[name path=ft5_p75_c, draw=none, forget plot]
    table[x=t_norm, y=p75] {content/figures/conductance_landscape/data/empirical_flant5_decoder_halueval.dat};
\addplot[fill=OIorange, fill opacity=0.18, draw=none, forget plot]
    fill between[of=ft5_p25_c and ft5_p75_c];
\addplot[OIorange, line width=1.1pt, mark=none, smooth]
    table[x=t_norm, y=median_phi]
    {content/figures/conductance_landscape/data/empirical_flant5_decoder_halueval.dat};
\addlegendentry{Flan-T5 dec.}
\end{axis}
\end{tikzpicture}
\caption{Empirics: HaluEval, $\le 50$ samples per model.}
\label{fig:landscape-empirical}
\end{subfigure}

\caption{\textbf{Conductance landscape: theory and empirics.}
\textbf{(a)}~Theoretical $\phi(S_t)$ vs.\ cut location $t/n$ for canonical
causal architectures at $n=100$, derived in closed form from
\cref{prop:uc-cut-vol-identities,lem:window-conductance}.
Uniform causal (blue, solid) is U-shaped with an $n$-independent floor,
strictly above the $1/5$ Cheeger
floor (\cref{cor:uc-one-fifth}). Window attention (red and orange,
dashed) satisfies $\phi\le w/\min(t,n{-}t)$, hence balanced cuts scale
as $O(w/n)$ and pierce the floor when $w\ll n$ (shaded red).
Diagonal (green, dotted) has exact zero bipartite temporal-cut conductance.
Failure modes are \emph{shape-different}, not just value-different.
\textbf{(b)}~Empirical median temporal-cut conductance $\phi(S_t)$ (solid) with
interquartile-range bands ($25^\text{th}$--$75^\text{th}$ percentile across all
(sample, layer, head) tuples) for GPT-2, Pythia-160M, and Flan-T5 decoder on HaluEval
($\le 50$ samples per model after the $n\ge 32$ length filter; exact temporal-cut
conductance on the \emph{raw} bipartite graph $\Hh(\B)$ --- the object the $1/5$ floor
is proved for, no Fiedler approximation).  Black dashed: closed-form uniform-causal
prediction from panel~(a).  The architectures separate by where they sit relative to this
reference: GPT-2's band stays \emph{above} the $1/5$ floor (few cuts violate), Flan-T5
decoder dips well below it, and Pythia-160M is intermediate.  The population-level
signature is the fraction of heads falling below the $1/5$ floor: $\floorViolGPTHalu\%$, $\floorViolPyHalu\%$, $\floorViolFlanHalu\%$
on HaluEval for GPT-2, Pythia, Flan-T5 (\cref{tab:landscape-empirical-summary}).}
\label{fig:conductance-landscape}
\end{figure}


\begin{table}[!ht]
\centering
\small
\setlength{\tabcolsep}{6pt}
\caption{\textbf{Empirical temporal-cut conductance summary.}
Per-(model, data set) statistics over all (sample, layer, head) tuples
(target 50 samples; subject to the $n\ge 32$ length filter), computed by exact
enumeration of temporal cuts on the \emph{raw} bipartite graph $\Hh(\B)$ --- the
object the $1/5$ floor is proved for (\cref{cor:uc-one-fifth}), not the spectral-sweep
estimator $\phihat$: the location $t^\ast/n$ of the worst cut, the minimum temporal-cut
conductance $\min_t\phi(S_t)$, and the fraction of tuples whose minimum falls below the
$1/5$ floor. The floor is a theoretical benchmark for the idealized uniform-causal
pattern, not an empirical attractor; the violation fraction is the architectural
observable. The
floor-violation fraction carries a 95\% \emph{clustered} bootstrap CI
(resampling unit: sample; 5000 resamples), since (layer, head) tuples
within a sample are correlated and a binomial interval over all
$n_{\text{rows}}$ would be anti-conservative. The
theorem-backed uniform-causal prediction gives zero floor violations;
the optimizer location is reported as an empirical landscape statistic,
not as a theoretical prediction.
Empirical results are visualised in
\cref{fig:landscape-empirical}; Flan-T5 decoder/TruthfulQA
omitted because no samples passed the $n\ge 32$ filter (very short
T5 decoder responses on TruthfulQA prompts).}
\label{tab:landscape-empirical-summary}
\begin{filecontents*}{data/tables/landscape_summary.csv}
# Empirical conductance-landscape summary (paper-ready, LaTeX cells).
# Generated by scripts/compute_floor_violation_clustered.py
# tstar/minphi: mean +/- std over (sample,layer,head) tuples.
# viol: floor-violation %% with 95%% clustered bootstrap CI
#   (super/subscript notation, as in the main results table);
#   dagger marks cells with <25 samples (cluster bootstrap unreliable).
Model,Dataset,nrows,tstar,minphi,viol
GPT-2,HaluEval,7200,$0.36\pm0.20$,$0.45\pm0.23$,$18\%_{\scriptscriptstyle 18}^{\scriptscriptstyle 19}$
GPT-2,MedHallu,7200,$0.35\pm0.19$,$0.43\pm0.23$,$19\%_{\scriptscriptstyle 19}^{\scriptscriptstyle 20}$
GPT-2,TruthfulQA,7200,$0.37\pm0.20$,$0.45\pm0.24$,$19\%_{\scriptscriptstyle 18}^{\scriptscriptstyle 19}$
Pythia-160M,HaluEval,7200,$0.66\pm0.32$,$0.24\pm0.21$,$51\%_{\scriptscriptstyle 47}^{\scriptscriptstyle 54}$
Pythia-160M,MedHallu,7200,$0.65\pm0.29$,$0.17\pm0.12$,$63\%_{\scriptscriptstyle 62}^{\scriptscriptstyle 64}$
Pythia-160M,TruthfulQA,7200,$0.47\pm0.25$,$0.29\pm0.24$,$45\%_{\scriptscriptstyle 40}^{\scriptscriptstyle 50}$
Flan-T5 dec.,HaluEval,2400,$0.47\pm0.25$,$0.11\pm0.11$,$80\%_{\scriptscriptstyle 79}^{\scriptscriptstyle 82}$
Flan-T5 dec.,MedHallu,2400,$0.44\pm0.22$,$0.09\pm0.11$,$82\%_{\scriptscriptstyle 81}^{\scriptscriptstyle 84}$
\end{filecontents*}
\pgfplotstabletypeset[
  col sep=comma,
  string type,
  skip first n=6,
  columns={Model, Dataset, nrows, tstar, minphi, viol},
  every head row/.style={
    before row={\toprule
      & & & \textbf{$\langle t^\ast/n\rangle$} & \textbf{$\langle\min_t\phihat\rangle$} & \textbf{Viol. of $1/5$} \\
      & & & (empirical) & (theory: $\phi\ge 0.20$) & (95\% clustered CI) \\},
    after row=\midrule
  },
  every row no 2/.style={after row=\midrule},
  every row no 5/.style={after row=\midrule},
  every last row/.style={after row=\bottomrule},
  columns/Model/.style={column name=\textbf{Model}, column type=l},
  columns/Dataset/.style={column name=\textbf{Dataset}, column type=l},
  columns/nrows/.style={column name=\textbf{$n_{\text{rows}}$}, column type=r},
  columns/tstar/.style={column name={}, column type=c},
  columns/minphi/.style={column name={}, column type=c},
  columns/viol/.style={column name={}, column type=c},
]{data/tables/landscape_summary.csv}
\end{table}

Proofs can be found in \cref{app:proof-two-sided-diagnostic}; further details on regime control, masking compatibility, and the Cheeger bridge corollary can be found in \cref{app:proofs_qk_regimes}.

\medskip\noindent
Conductance and $\sigma_2$ both certify how well the operator $\M$
\emph{transports} between query and key sets, but neither distinguishes
$\M$ from $\M^\top$: they depend on $\M$ only through its singular values, which
are invariant under transpose.  This is a feature for capacity diagnosis and a
constraint for orientation diagnosis.  The next section makes the constraint precise:
every transpose-invariant spectral diagnostic is, by construction, blind to the
orientation of routing.  That limitation, in turn, identifies the residual
structure a directional diagnostic must access.

\section{Limits of Symmetric Spectral Diagnostics}
\label{sec:orientation-blindness}

Conductance answers ``how well does $\M$ transport?'' but not ``in which
direction.''  The underlying algebra is classical: singular values and the
symmetric spectral embeddings (the Hermitian dilation $\Hh(\M)$, the bipartite
Laplacian $I - \Hh(\M)$, the spectrum of $\Msym$) are all invariant under
$\M \mapsto \M^\top$, and that symmetrization destroys recoverable directed
structure is known for directed-graph clustering~\citep{Cucuringu2020Hermitian}.
Our contribution is not a new matrix-analysis fact but the identification of what
this invariance costs an attention diagnostic, and the complement that recovers
what is lost.

\paragraph{Symmetric and antisymmetric components.}
For self-attention ($n_q = n_k = n$), any attention matrix admits a canonical decomposition:
\begin{equation}
\M = \underbrace{\tfrac12(\M+\M^\top)}_{\Msym} + \underbrace{\tfrac12(\M-\M^\top)}_{\Masym}.
\label{eq:symm-antisymm-decomp}
\end{equation}
Under the Hilbert--Schmidt inner product
$\langle A, B \rangle_F := \trace(A^\top B)$~\cite{hornMatrixAnalysis2012},
this decomposition is an \emph{orthogonal projection}:
$\langle S, A \rangle_F = \trace(SA) = -\trace(SA) = 0$ for any symmetric $S$ and antisymmetric $A$.
Geometrically, $\Msym$ is the closest symmetric matrix to $\M$,
and $\Masym$ captures the antisymmetric residual, the difference $\M-\M^\top$ that distinguishes the operator from its transpose.
The decomposition has an established empirical footprint in attention:
\citet{Saponati2025SelfAttentionStructures} show that autoregressive training drives
attention toward directional, skew-dominated structure while bidirectional training
promotes symmetry, and score heads by their asymmetry.  Their result concerns what
training \emph{produces}; ours concerns what diagnostics can \emph{recover}: the
theorem below bounds every transpose-invariant functional regardless of how much
antisymmetric structure training induces.

\paragraph{A two-token example.}
The limitation is concrete already at $n=2$.  Take the degree-normalized
operator and its transpose
\[
\M=\begin{pmatrix}0&0\\[2pt]1&0\end{pmatrix},
\qquad
\M^\top=\begin{pmatrix}0&1\\[2pt]0&0\end{pmatrix},
\]
a forward read (query $1$ attends to key $0$) and its reverse.  Both share the
symmetric part
$\Msym=\bigl(\begin{smallmatrix}0&\nicefrac12\\[1pt]\nicefrac12&0\end{smallmatrix}\bigr)$
and differ only in the \emph{sign} of
$\Masym=\pm\bigl(\begin{smallmatrix}0&-\nicefrac12\\[1pt]\nicefrac12&0\end{smallmatrix}\bigr)$.
Their singular values are identical, $\{1,0\}$, so every singular-value
functional scores them alike; the asymmetry coefficient
$G(\M)=\|\Masym\|_F/(\|\M\|_F+\varepsilon)\approx\nicefrac{1}{\sqrt2}$ likewise
reads the same for both.  No transpose-invariant quantity separates the forward
operator from the backward one; only the sign of $\Masym$ does.  (The skew
matrix $\bigl(\begin{smallmatrix}0&1\\[1pt]-1&0\end{smallmatrix}\bigr)$
occasionally offered to argue that singular values ``come from asymmetry'' is
not a valid attention operator (its entries are not non-negative), but the
point stands without it: a non-symmetric non-negative $\M$, as above, already
exhibits the blindness.)

\paragraph{Why conductance cannot see direction.}
Conductance and $\sigma_2$ depend on $\M$ only through its singular values
(equivalently the spectrum of the Hermitian dilation $\Hh(\M)$), which are invariant
under $\M\mapsto\M^\top$.  (They are \emph{not} functions of $\Msym$ alone: singular
values also depend on antisymmetric energy, as the two-token example shows.)  The
following theorem formalizes the resulting constraint, combining a transpose-invariance
result with its constructive complement.

\begin{theorem}[Orientation blindness of transpose-invariant diagnostics]
\label{thm:orientation-blindness}
Let $\M \in \reals^{n \times n}$ be a square attention transport
operator with symmetric--antisymmetric decomposition
$\M = \Msym + \Masym$
(\cref{sec:transport-operators}).
\begin{enumerate}[label=(\alph*)]
\item \textbf{Orientation blindness.}
Suppose $F$ is either
(i) a function of the singular values of $\M$, or
(ii) a function of the spectrum of $\Msym$.
Then in both cases $F(\M) = F(\M^\top)$, i.e., $F$ is transpose-invariant.
Moreover, under hypothesis (ii), $F$ cannot distinguish $\M$ from
any operator sharing the same symmetric part.
\item \textbf{Projection characterization.}
$\Msym$ is the unique Frobenius-nearest symmetric matrix to $\M$, and the
residual norm equals $\|\Masym\|_F$. The \emph{matrices} $\Msym,\Masym$ thus
orthogonally decompose $\M$; a transpose-visible summary of $\Msym$ together with
the scalar $G(\M)$ reports the transpose-visible component and the normalized
magnitude of the orthogonal residual (these two scalars summarize the two
components; they do not by themselves reconstruct $\M$).
\end{enumerate}
\end{theorem}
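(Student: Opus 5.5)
The proof is elementary linear algebra, and I would handle each clause separately.

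For (a)(i), I would show that $\M$ and $\M^\top$ have the same singular values. Their squares are the eigenvalues of $\M^\top\M$ and $\M\M^\top$, respectively. These two Gram matrices share their spectrum, including multiplicities, because for square matrices $AB$ and $BA$ have the same characteristic polynomial. Equivalently, one can invoke \cref{lem:dilation-spectrum}. Conjugating $\Hh(\M)$ by the block swap $\bigl(\begin{smallmatrix}0&I\\ I&0\end{smallmatrix}\bigr)$ gives $\Hh(\M^\top)$, so the two dilations are orthogonally similar and have equal spectra $\{\pm\sigma_i\}$. Any $F$ that depends only on the ordered singular values therefore satisfies $F(\M)=F(\M^\top)$.

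For (a)(ii), transposition fixes the symmetric part: $(\M^\top)_{\mathrm{sym}}=\tfrac12(\M^\top+\M)=\Msym$. Hence $F(\M^\top)=F(\M)$ whenever $F$ depends only on the spectrum of $\Msym$. The "moreover" clause follows from the same observation. If $\M'$ is any operator with $\M'_{\mathrm{sym}}=\Msym$, then $F(\M')$ is computed from an identical matrix and so equals $F(\M)$. In other words, $F$ is constant on the affine fibre $\Msym+\{\text{antisymmetric}\}$, and $\M^\top=\Msym-\Masym$ is one point of that fibre.

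For (b), I would use the orthogonality already recorded in the text: $\langle S,A\rangle_F=\trace(S^\top A)=0$ for $S$ symmetric and $A$ antisymmetric. This holds because $\trace(SA)=\trace((SA)^\top)=\trace(A^\top S)=-\trace(AS)=-\trace(SA)$. So $\reals^{n\times n}$ is the orthogonal direct sum of the symmetric and antisymmetric subspaces, and $\M=\Msym+\Masym$ is the corresponding decomposition. Now take any symmetric $S$ and apply Pythagoras:
\[
\|\M-S\|_F^2=\|(\Msym-S)+\Masym\|_F^2=\|\Msym-S\|_F^2+\|\Masym\|_F^2 .
\]
This is at least $\|\Masym\|_F^2$, with equality if and only if $S=\Msym$. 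That gives uniqueness of the nearest symmetric matrix (the Fan--Hoffman property) and shows the residual norm is $\|\Masym\|_F$. The remaining sentence of (b) is interpretive. $G(\M)=\|\Masym\|_F/(\|\M\|_F+\varepsilon)$ is by definition a normalized magnitude of the residual. The two-token example already shows that the pair of scalars does not determine $\M$: $\M$ and $\M^\top$ share both $\Msym$ and $G$. I would state this as the witness for the parenthetical caveat.

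Nothing here is a real obstacle. The only point needing care is (a)(i), specifically that the multiplicities of the singular values, including zeros, agree for $\M$ and $\M^\top$. The similarity-of-dilations argument settles this cleanly, so I would use it rather than the Gram-matrix argument.
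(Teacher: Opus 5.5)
Your proposal is correct and follows essentially the paper's proof: the block-swap similarity $P^\top\Hh(\M)P=\Hh(\M^\top)$ for (a)(i), and the Hilbert--Schmidt orthogonal projection onto the symmetric subspace (Fan--Hoffman) for (b). You are somewhat more explicit than the paper in three places. The paper handles the fixed symmetric part $(\M^\top)_{\mathrm{sym}}=\Msym$ behind (a)(ii) only in its closing remark, and gets uniqueness in (b) from strict convexity rather than from your Pythagorean identity. Your $\M$ versus $\M^\top$ witness for the non-reconstruction caveat is a useful addition.
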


\begin{proof}
\textbf{Part (a).}
A concrete certificate of transpose-invariance is the Hermitian
dilation $\Hh(\M) = \bigl(\begin{smallmatrix}0&\M\\ \M^\top&0\end{smallmatrix}\bigr)$
\emph{(bridge)}.\footnote{Italic tags mark proof-step provenance throughout:
\emph{(foundations)} for classical facts used as-is, \emph{(bridge)} for known
results specialized to the attention setting, and \emph{(contribution)} for
steps original to this paper.}
Let $P$ be the block-swap permutation.  Direct block multiplication gives
$P^\top \Hh(\M)\,P = \Hh(\M^\top)$ \emph{(foundations)},
so $\Hh(\M)$ and $\Hh(\M^\top)$ are similar and share the same spectrum.
Since the singular values of $\M$ equal the absolute eigenvalues of $\Hh(\M)$,
any functional of singular values is transpose-invariant.
The bipartite Laplacian $\mathcal{L}_{\mathrm{bip}} = I - \Hh(\M)$
inherits the same invariance (Corollary~\ref{cor:transpose-invariant-diagnostics}).

\textbf{Part (b).}
By orthogonality of $\mathcal{S}$ (symmetric matrices) and $\mathcal{A}$
(antisymmetric matrices) under the Hilbert--Schmidt inner product
\emph{(foundations)},
$\Msym = \pi_{\mathcal{S}}(\M)$ is the orthogonal projection onto $\mathcal{S}$,
with uniqueness from strict convexity of $\|\cdot\|_F$ and residual norm
$\|\M-\Msym\|_F = \|\Masym\|_F$; the nearest-symmetric-matrix property is
classical \citep{FanHoffman1955}.  A transpose-visible summary of $\Msym$ together with $G(\M)$ therefore reports the
two orthogonal components of $\M$ \emph{(contribution)}.

The blindness of this decomposition to routing \emph{orientation} is then
immediate and, crucially, \emph{definitional} rather than a consequence of
part~(a): transpose negates the antisymmetric part,
$(\M^\top)_{\mathrm{asym}} = -\Masym$, while fixing $\Msym$, so any functional
with $F(\M)=F(\M^\top)$ assigns the same value to $\M$ and to its
orientation-reversed twin and cannot resolve the sign of $\Masym$.  Part~(a)
supplies the members of this transpose-invariant family that matter in practice
(singular-value and $\Msym$-spectrum functionals); the blindness itself needs
only $F(\M)=F(\M^\top)$.  The distinction is essential: such functionals are
blind to the \emph{orientation} of $\Masym$, not to its magnitude
(\cref{rem:polarity-not-magnitude}).
\end{proof}

\begin{remark}[Orientation-blindness vs.\ asymmetry-blindness]
\label{rem:polarity-not-magnitude}
Part~(a) says a symmetric spectral functional cannot distinguish $\M$ from
$\M^\top$: it is blind to the \emph{orientation} of the antisymmetric residual,
not to its existence or magnitude.  Singular values, for instance, do depend on
antisymmetric energy, a purely skew operator has $\Msym = 0$ yet nonzero
singular values.  The asymmetry coefficient $G$ is itself transpose-invariant,
$G(\M) = G(\M^\top)$: it quantifies \emph{how much} asymmetry is present while
remaining blind to its sign.  What no transpose-invariant functional can
recover is which of $\M,\M^\top$ is the forward operator, the routing
polarity.
\end{remark}

The blindness is not merely that the diagnostics we name happen to be invariant:
orientation is \emph{exactly} the information the entire transpose-invariant class
discards, and nothing more.

\begin{corollary}[Transpose rigidity: orientation is the complete lost quotient]
\label{cor:transpose-rigidity}
The transpose-invariant functionals jointly separate operators up to transpose: if
$F(\M) = F(\M')$ for \emph{every} functional $F$ satisfying $F(X) = F(X^\top)$ for all $X$,
then $\M' \in \{\M, \M^\top\}$.  Hence an operator and its transpose are the only distinct
pair no transpose-invariant functional resolves, and they differ exactly in the sign of
$\Masym$: within the full algebra of transpose-invariant functionals, the routing
orientation is the complete and sole unrecoverable quotient.  Any fixed spectral subfamily
separates strictly less (\cref{cor:transpose-invariant-diagnostics}).
\end{corollary}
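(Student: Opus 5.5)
The plan is to exhibit, for each operator $\M$, a single transpose-invariant functional that already pins $\M$ down up to transpose. The natural candidate is the indicator of the transpose orbit. Fix $\M$ and define
\[
F_{\M}(X) = \mathbbm{1}\bigl[X \in \{\M,\M^\top\}\bigr].
\]
This function is transpose-invariant by construction. The orbit $\{\M,\M^\top\}$ is closed under $X\mapsto X^\top$, because transposition is an involution, so $X\in\{\M,\M^\top\}$ if and only if $X^\top\in\{\M,\M^\top\}$. Hence $F_{\M}(X)=F_{\M}(X^\top)$ for every $X$, and $F_{\M}$ belongs to the class quantified over in the hypothesis.

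Now suppose $F(\M)=F(\M')$ for every transpose-invariant $F$. Applying this to $F=F_{\M}$ gives $F_{\M}(\M')=F_{\M}(\M)=1$, so $\M'\in\{\M,\M^\top\}$. That is the separation claim.

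If one prefers continuous or measurable functionals to indicators, I would instead use
\[
F_{\M}(X)=\min\bigl(\|X-\M\|_F,\;\|X-\M^\top\|_F\bigr).
\]
This is continuous, and it is transpose-invariant because transposition is a Frobenius isometry that swaps the two terms: $\|X^\top-\M\|_F=\|X-\M^\top\|_F$. It vanishes exactly on the orbit, so the same argument goes through. I would mention both versions to show the conclusion does not hinge on allowing pathological $F$.

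For the converse direction, that $\M$ and $\M^\top$ are genuinely unresolvable, every transpose-invariant $F$ satisfies $F(\M^\top)=F(\M)$ by definition. So the orbit is exactly the equivalence class of the separation relation. This is the quotient of the matrix space by the $\mathbb{Z}/2$ action of transposition.

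To identify what is lost with the sign of $\Masym$, I will reuse the computation from the proof of \cref{thm:orientation-blindness}(b): $(\M^\top)_{\mathrm{sym}}=\Msym$ and $(\M^\top)_{\mathrm{asym}}=-\Masym$. The two members of an orbit therefore share $\Msym$ and differ only by negating $\Masym$, and they are distinct exactly when $\Masym\neq0$. The phrase ``only distinct pair'' then reads: every nontrivial equivalence class has size two and consists of an orientation-reversed pair. When $\M$ is symmetric the class is a singleton and nothing is lost.

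The closing sentence, that fixed spectral subfamilies separate strictly less, I would justify with a single witness: two non-transpose-related operators sharing singular values. One example is $\M$ versus $U\M V^\top$ for permutation matrices $U,V$, chosen to stay within non-negative operators. Alternatively, I would defer to \cref{cor:transpose-invariant-diagnostics} as the statement indicates.

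The main obstacle is not mathematical but interpretive. I need to state clearly that the result is a tautology of the quotient construction once arbitrary (or arbitrary continuous) transpose-invariant functionals are allowed. If the intended class were restricted to spectral functionals, the claim would be false, and the proof should flag that scope explicitly.
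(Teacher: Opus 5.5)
Your proposal is correct and follows the paper's proof. The paper uses the separating invariant $\|X-\M\|_F^2\,\|X-\M^\top\|_F^2$ where you use the $\min$ of the two distances (or the orbit indicator); it checks transpose-invariance by the same Frobenius-isometry swap and concludes $\M'\in\{\M,\M^\top\}$ from its vanishing at $\M'$, exactly as you do. Your explicit witness for the ``strictly less'' clause is a small addition that the paper leaves to \cref{cor:transpose-invariant-diagnostics}.
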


\begin{proof}
The proof is the standard separating-invariant construction, applied to the
transpose action.  The functional
$\mathrm{sep}_{\M}(X) := \|X - \M\|_F^2\,\|X - \M^\top\|_F^2$ is
transpose-invariant in its argument \emph{(foundations)}:
$\|X^\top - \M\|_F = \|(X - \M^\top)^\top\|_F = \|X - \M^\top\|_F$ and
$\|X^\top - \M^\top\|_F = \|X - \M\|_F$, so $\mathrm{sep}_{\M}(X^\top) = \mathrm{sep}_{\M}(X)$;
and it vanishes exactly on $\{\M, \M^\top\}$, a product of squared Frobenius norms being zero
iff a factor is.  Applying the hypothesis to $F = \mathrm{sep}_{\M}$ gives
$\mathrm{sep}_{\M}(\M') = \mathrm{sep}_{\M}(\M) = 0$, hence $\M' \in \{\M, \M^\top\}$
\emph{(contribution)}.
\end{proof}

\begin{remark}[The magnitude $G$ is a lossy summary, not a complete invariant]
\label{rem:g-lossy}
\Cref{cor:transpose-rigidity} also delimits the scope of $G$: the pair $(\Msym, G(\M))$ is
\emph{not} a complete transpose-invariant.  A transpose-invariant functional resolves the
full symmetric matrix $\Masym\Masym^\top$ (for instance its spectrum, since
$(\M^\top)_{\mathrm{asym}}(\M^\top)_{\mathrm{asym}}^\top = \Masym\Masym^\top$), not only its
norm $\|\Masym\|_F$.  The complete recoverable content is $\M$ up to transpose; $G$ is the
natural normalized \emph{scalar} summary of the recoverable antisymmetric magnitude, while
orientation is precisely what it---and every transpose-invariant functional---discards.
\end{remark}

\begin{corollary}[Blindness of common spectral attention diagnostics]
\label{cor:transpose-invariant-diagnostics}
Any diagnostic computed from the eigenvalues or singular values of $\Msym$,
$\Hh(\M)$, or the bipartite Laplacian
$\mathcal{L}_{\mathrm{bip}} = I - \Hh(\M)$ (together with any eigenvector
summary invariant under the block swap $\M \mapsto \M^\top$) is invariant
under $\M \mapsto \M^\top$.  This invariant class includes spectral entropy,
the spectral gap ($1 - \sigma_2$), graph Laplacian eigenvalues, and the
eigenvalue- and singular-value-based features used in LLM-Check, EigenTrack,
and LapEigvals.
\end{corollary}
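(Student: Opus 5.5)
The plan is to reduce every listed diagnostic to one of two invariance facts and then check each named method against them. The first fact comes from part~(a) of \cref{thm:orientation-blindness}. With $P$ the block-swap permutation, we have $P^\top \Hh(\M) P = \Hh(\M^\top)$. So $\Hh(\M)$ and $\Hh(\M^\top)$ are orthogonally similar and share their eigenvalues, and by \cref{lem:dilation-spectrum} they also share singular values. The same $P$ conjugates $\mathcal{L}_{\mathrm{bip}}(\M) = I - \Hh(\M)$ to $I - \Hh(\M^\top)$, because $P^\top I P = I$; hence the Laplacian spectrum is transpose-invariant too. The second fact is for $\Msym$ and needs no similarity at all: $(\M^\top)_{\mathrm{sym}} = \tfrac12(\M^\top + \M) = \Msym$, so any function of $\Msym$, spectral or not, is literally unchanged.

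Next I will handle the eigenvector clause. Since $P$ is orthogonal, the eigenvectors of $\Hh(\M^\top)$ are exactly $Pv$ for the eigenvectors $v$ of $\Hh(\M)$, with the same eigenvalues. If a summary $s$ of the eigenvector data satisfies $s(Pv) = s(v)$, which is the hypothesis stated, it is transpose-invariant. The only care needed is with eigenspaces of multiplicity greater than one. Here I would require $s$ to be a function of the eigenprojections, or basis-independent in some other way, so that it is well defined. Conjugation by $P$ then maps eigenprojections to eigenprojections, and the argument goes through.

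Finally I will instantiate the claim. Spectral entropy is a function of the normalized singular values or eigenvalues. The spectral gap $1-\sigma_2$ is a function of the singular values. Graph Laplacian eigenvalues are covered by the Laplacian case above. For the named detectors, I would check from their published definitions that each feature is a function of an eigenvalue or singular-value list of one of these matrices. For LLM-Check this means log-determinant and eigenvalue-type attention scores. For EigenTrack it means spectral statistics of a symmetrized covariance or attention operator. For LapEigvals it means the top Laplacian eigenvalues.

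I expect this last step to be the main obstacle. The features of these methods are defined on objects such as the raw row-stochastic attention matrix or its diagonal, which are not literally $\Hh(\M)$ or $\Msym$. Where a method uses the eigenvalues of a triangular attention matrix, or a directed Laplacian, it is not automatically covered. In those cases I would either note that the corollary applies to the version of the feature computed on $\Hh(\M)$ or $\Msym$, or verify separately that the eigenvalues are invariant under transpose. That verification is easy: $X$ and $X^\top$ always share eigenvalues, since they have the same characteristic polynomial. This covers any function of the eigenvalues of $\M$ itself, which I would add as a one-line remark to close that gap.
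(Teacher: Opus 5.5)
Your core argument is correct and is essentially the paper's. The block-swap similarity $P^\top\Hh(\M)P=\Hh(\M^\top)$ from \cref{thm:orientation-blindness}(a) gives invariance of the dilation spectrum and hence of the singular values. The Laplacian case follows: the paper reads off its spectrum $1\pm\sigma_i$ from \cref{lem:dilation-spectrum}, while you conjugate $I-\Hh(\M)$ directly. $\Msym$ is fixed outright, and eigenvector statistics survive only when they are block-swap symmetric. Your requirement that such summaries be functions of eigenprojections, so that they are well defined under multiplicity and sign ambiguity, is a genuine tightening of the paper's one-line remark.

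One step in your last paragraph would not go through as written. The characteristic-polynomial fallback covers features of a matrix that itself transforms as $X\mapsto X^\top$. An example is the eigenvalues (equivalently the diagonal) of a triangular attention map, as in LLM-Check's attention score.

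It does not cover a directed Laplacian $L(X)=D(X)-X$ built from one-sided degrees. That is the form $L=D-A$ on raw attention maps which \cref{sec:related-work} attributes to LapEigvals. Transposing $X$ swaps in- and out-degrees, so $L(X^\top)\neq L(X)^\top$, and the spectra genuinely differ. For the uniform causal map at $n=3$ with in-degrees, they are $\{5/6,1/3,0\}$ versus $\{0,1/2,2/3\}$. For such a feature only your first option yields membership in the invariant class: re-reading the feature on $\Hh(\M)$ or $\Msym$, as \cref{tab:method-scope} does for LapEigvals.

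Likewise, EigenTrack's hidden-state covariance is not a function of $\M$ at all. Checking its published definition will not place it among functionals of $\Msym$, $\Hh(\M)$, or $\mathcal{L}_{\mathrm{bip}}$; it is included only in the trivial sense recorded in \cref{tab:method-scope}.

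The paper's own proof does not verify the named methods either. So this is a caution about the instantiation step, not about the invariance argument itself.
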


\begin{proof}
The SVD singular values equal the absolute eigenvalues of $\Hh(\M)$,
which are transpose-invariant by Theorem~\ref{thm:orientation-blindness}(a).
The bipartite Laplacian eigenvalues are $1 \pm \sigma_i$
(\cref{lem:dilation-spectrum}), hence also transpose-invariant.
Eigenvectors themselves are \emph{not} invariant: the block swap sends an
eigenvector $(x,y)$ of $\Hh(\M)$ to $(y,x)$, so an eigenvector statistic that
reads the query and key blocks separately (using token roles or temporal
coordinates) can break the invariance.  The corollary therefore covers
eigenvalue- and singular-value-based diagnostics, and only those eigenvector
summaries that are themselves block-swap symmetric.
\end{proof}

\Cref{tab:method-scope} makes the scope concrete: it pairs each prior diagnostic
with the exact spectral object it extracts and the reason it is
transpose-invariant.  The partition is deliberate: methods that diagnose the
attention operator ($\phihat$, $\sigma_2$, LapEigvals, LLM-Check's attention
branch) are orientation-blind in the strong sense of \cref{thm:orientation-blindness},
whereas hidden-state methods (LLM-Check's hidden branch, EigenTrack) are
transpose-invariant only because a covariance is symmetric, and diagnose the
residual stream rather than attention direction; the theorem bounds the former,
not the latter.
\begin{table}[t]
\centering
\footnotesize
\setlength{\tabcolsep}{4pt}
\renewcommand{\arraystretch}{1.2}
\caption{\textbf{Scope of the orientation-blindness theorem across prior
  spectral diagnostics.}  Methods that diagnose the \emph{attention operator}
  (top block) extract functionals of the symmetric/Laplacian spectrum and are
  orientation-blind by \cref{thm:orientation-blindness}; methods that diagnose a
  \emph{hidden-state covariance} (bottom block) are transpose-invariant on a
  symmetric object but read the residual stream rather than the attention
  operator, so the theorem constrains attention-direction recovery, not these
  methods directly (\cref{sec:discussion}).}
\label{tab:method-scope}
\begin{tabular}{@{}p{1.9cm}p{3.2cm}p{3.7cm}p{3.5cm}@{}}
\toprule
\textbf{Method} & \textbf{Spectral object} & \textbf{Why transpose-invariant}
& \textbf{Relation to \cref{thm:orientation-blindness}} \\
\midrule
\multicolumn{4}{@{}l}{\emph{Attention-operator diagnostics (in scope)}} \\
\addlinespace[2pt]
This work ($\phihat$, $\sigma_2$)
& sweep conductance / second singular value of $\M$
& functionals of singular values of $\M$ (equivalently $\Msym$ spectrum)
& orientation-blind; complemented by $G$ \\
LapEigvals
& eigenvalues of the attention-graph Laplacian
& Laplacian of the symmetrized graph; an $\Msym$-spectrum functional
& orientation-blind \\
LLM-Check (attn.)
& diagonal / spectral statistics of the attention map
& diagonal is fixed by transpose; spectral statistics are singular-value functionals
& orientation-blind \\
\midrule
\multicolumn{4}{@{}l}{\emph{Hidden-state diagnostics (transpose-invariant, but outside the attention-operator scope)}} \\
\addlinespace[2pt]
LLM-Check (hidden)
& eigenvalues of the hidden-state covariance $Z^\top J Z$
& a covariance is symmetric ($=$ its own transpose)
& reads the residual stream, not $\M$; theorem adds no constraint \\
EigenTrack
& covariance-spectrum statistics of sliding-window activations
& hidden-activation covariance is symmetric
& reads activations, not $\M$; theorem adds no constraint \\
\bottomrule
\end{tabular}
\end{table}

\begin{proposition}[Converse: asymmetry bounds transpose discrepancy]
\label{prop:ob-converse}
For any $L$-Lipschitz functional $F$ on
$(\mathbb{R}^{n \times n}, \|\cdot\|_F)$,
\begin{equation}
|F(\M) - F(\M^\top)| \;\leq\; 2L\,\|\Masym\|_F.
\label{eq:ob-converse}
\end{equation}
In particular, if $F(\M) \neq F(\M^\top)$, then
$\|\Masym\|_F \geq |F(\M) - F(\M^\top)| / (2L)$.
\end{proposition}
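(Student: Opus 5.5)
The plan is a one-line application of the Lipschitz hypothesis, preceded by the identity that expresses the gap between $\M$ and $\M^\top$ through the antisymmetric part. From the decomposition \eqref{eq:symm-antisymm-decomp}, transposition fixes $\Msym$ and negates $\Masym$. Hence
\[
\M - \M^\top \;=\; (\Msym + \Masym) - (\Msym - \Masym) \;=\; 2\Masym .
\]
This is the same fact used in the proof of \cref{thm:orientation-blindness}(b), namely that $(\M^\top)_{\mathrm{asym}} = -\Masym$. Taking Frobenius norms gives $\|\M - \M^\top\|_F = 2\|\Masym\|_F$.

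Next we apply the Lipschitz property of $F$ with respect to $\|\cdot\|_F$ to the pair of points $\M$ and $\M^\top$:
\[
|F(\M) - F(\M^\top)| \;\le\; L\,\|\M - \M^\top\|_F \;=\; 2L\,\|\Masym\|_F,
\]
which is \eqref{eq:ob-converse}.

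For the ``in particular'' clause, assume $F(\M)\neq F(\M^\top)$. The left-hand side is then strictly positive, which forces $L>0$. Dividing by $2L$ rearranges the inequality into the stated lower bound on $\|\Masym\|_F$.

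There is no genuine obstacle here. The only points that need care are conventions. First, the Lipschitz constant must be taken with respect to the Frobenius norm, so that the identity above feeds in directly; for a different norm the bound would pick up the corresponding norm-equivalence constant. Second, $L>0$ is needed for the division, and it is automatic when $F$ is transpose-sensitive at $\M$. It is also worth remarking that the bound is attained, for example by $F(X)=\langle X, \Masym\rangle_F/\|\Masym\|_F$, which has $L=1$. This shows the factor $2$ cannot be improved.
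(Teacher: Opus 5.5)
Your proof is correct and is essentially the paper's own argument: the identity $\M-\M^\top=2\Masym$ followed by the Frobenius-Lipschitz bound. Your attainment remark holds when $\Masym\neq 0$, and it matches the paper's separate tightness result, \cref{prop:ob-converse-tight}, whose template $T=\M-\M^\top$ is a rescaling of yours.
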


\begin{proof}
Since $\M^\top = \Msym - \Masym$, we have
$\M - \M^\top = 2\Masym$, so $\|\M - \M^\top\|_F = 2\|\Masym\|_F$.
The Lipschitz bound gives
$|F(\M) - F(\M^\top)| \leq L\,\|\M - \M^\top\|_F = 2L\,\|\Masym\|_F$.
\end{proof}

The bound is not merely an upper estimate: it is attained, so $\|\Masym\|_F$ is the
\emph{exact} orientation-sensitivity capacity of the normalized Lipschitz class.

\begin{proposition}[Tightness: exact transpose-discrepancy capacity]
\label{prop:ob-converse-tight}
The bound \eqref{eq:ob-converse} is achieved.  For the linear template-alignment functional
$F_T(X) = \langle X, T\rangle_F / \|T\|_F$, which is $1$-Lipschitz,
\begin{equation}
\sup_{\substack{F\ 1\text{-Lipschitz}}} \bigl|F(\M) - F(\M^\top)\bigr|
= \max_{T \neq 0}\, \bigl(F_T(\M) - F_T(\M^\top)\bigr)
= \|\M - \M^\top\|_F = 2\,\|\Masym\|_F,
\label{eq:ob-tight}
\end{equation}
attained at the data-driven template $T = \M - \M^\top$ (for symmetric $\M$ the template
vanishes and both sides of \eqref{eq:ob-tight} are zero).  Equivalently
$\|\Masym\|_F = \tfrac12 \sup\{|F(\M) - F(\M^\top)| : F\ 1\text{-Lipschitz}\}$.
\end{proposition}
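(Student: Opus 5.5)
The plan is a sandwich: an upper bound on the supremum from \cref{prop:ob-converse}, and a matching lower bound from one explicit member of the class. First, for every $1$-Lipschitz $F$ on $(\reals^{n\times n},\|\cdot\|_F)$, \cref{prop:ob-converse} with $L=1$ gives
\[
|F(\M)-F(\M^\top)| \le \|\M-\M^\top\|_F = 2\|\Masym\|_F .
\]
So the supremum in \eqref{eq:ob-tight} is at most $2\|\Masym\|_F$. This step is just a citation.

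Second, I check that each template functional $F_T(X)=\langle X,T\rangle_F/\|T\|_F$ with $T\neq 0$ is $1$-Lipschitz. It is linear, and Cauchy--Schwarz in the Hilbert--Schmidt inner product gives $|F_T(X)-F_T(Y)| = |\langle X-Y,T\rangle_F|/\|T\|_F \le \|X-Y\|_F$. Hence the family $\{F_T\}$ sits inside the $1$-Lipschitz class, and
\[
\max_{T\neq 0}\bigl(F_T(\M)-F_T(\M^\top)\bigr) \le \sup_{F\ 1\text{-Lip}}|F(\M)-F(\M^\top)| \le 2\|\Masym\|_F .
\]
By linearity, $F_T(\M)-F_T(\M^\top) = \langle \M-\M^\top, T\rangle_F/\|T\|_F$. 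Cauchy--Schwarz bounds this by $\|\M-\M^\top\|_F$, with equality exactly when $T$ is a positive multiple of $\M-\M^\top$.

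Third, I exhibit the maximiser. Assume $\M\neq\M^\top$ and take $T=\M-\M^\top = 2\Masym$. Then $F_T(\M)-F_T(\M^\top) = \|\M-\M^\top\|_F^2/\|\M-\M^\top\|_F = \|\M-\M^\top\|_F = 2\|\Masym\|_F$. This closes the chain of inequalities, so all three quantities in \eqref{eq:ob-tight} are equal and the max over $T$ is attained. Since $F_T(\M)-F_T(\M^\top)$ is a signed difference, I need that its maximum is nonnegative and equals the absolute-value supremum; choosing $T$ rather than $-T$ takes care of this. Dividing by two yields the stated characterisation of $\|\Masym\|_F$.

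The only real obstacle is the degenerate symmetric case $\M=\M^\top$. There the data-driven template vanishes and $F_T$ is undefined at $T=0$. I would treat it separately. The upper bound from \cref{prop:ob-converse} already gives a supremum of $0$. For any $T\neq0$, $F_T(\M)-F_T(\M^\top)=0$, so the max over $T\neq0$ is $0$ and is attained by any nonzero $T$. This matches the statement's parenthetical convention. Everything else is Cauchy--Schwarz equality, so I expect no further difficulty.
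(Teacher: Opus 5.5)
Your proposal is correct and matches the paper's proof step for step. The upper bound is \cref{prop:ob-converse} with $L=1$, and the matching lower bound is Cauchy--Schwarz applied to $\langle \M-\M^\top, T\rangle_F/\|T\|_F$, with equality at $T=\M-\M^\top$. Your explicit check that $F_T$ is $1$-Lipschitz and your separate treatment of the symmetric case $\M=\M^\top$ are, if anything, slightly more careful than the paper, which only gestures at both.
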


\begin{proof}
$F_T(\M) - F_T(\M^\top) = \langle \M - \M^\top,\, T\rangle_F / \|T\|_F$
\emph{(foundations)}.  Cauchy--Schwarz bounds this by $\|\M - \M^\top\|_F$, with equality at
$T = \M - \M^\top$, where it equals
$\langle \M - \M^\top, \M - \M^\top\rangle_F / \|\M - \M^\top\|_F = \|\M - \M^\top\|_F$
\emph{(contribution)}.  The upper bound over all $1$-Lipschitz $F$ is
\cref{prop:ob-converse} with $L = 1$; the linear $F_T$ attains it.  The
attainment step is the standard duality argument for Lipschitz functionals,
specialized to the pair $(\M, \M^\top)$.
\end{proof}

\noindent
The maximizer is exactly a \emph{signed} alignment of $\M - \M^\top$ with a template, the
construction the orientation-recovering statistic of \suppref{app:signed-directionality} is
built from: orientation sensitivity is impossible below, and fully realized at, the
$G$-axis magnitude.

\noindent
This is a \emph{transpose-stability bound}: any diagnostic that distinguishes $\M$ from
$\M^\top$ has transpose discrepancy at most
$2L\,\|\Masym\|_F = 2L\,G(\M)(\|\M\|_F + \varepsilon)$.  Because the bound scales with both
the Lipschitz constant $L$ and $\|\M\|_F$, it has comparative content only within a
normalized class of diagnostics (fixed $L$, comparable $\|\M\|_F$); within such a class,
small $G$ across a model's heads implies every Lipschitz diagnostic is approximately
orientation-blind.  The bound limits transpose \emph{sensitivity}; it does not establish
that $G$ \emph{suffices} for direction-sensitive diagnosis, only that direction-sensitivity
is impossible without antisymmetric energy.

\paragraph{The asymmetry-magnitude complement.}
The orthogonal decomposition motivates a natural complement to conductance.
$G$ is the normalized \emph{magnitude} of the antisymmetric residual that symmetric
summaries discard: a structural certificate of how much asymmetry is present, not of
its orientation (recovering orientation requires a genuinely signed statistic;
\suppref{app:signed-directionality}).  The symmetric and antisymmetric \emph{matrices}
$\Msym,\Masym$ together span the Frobenius-orthogonal decomposition of $\M$:
\begin{equation}
G(\M) \;=\; \frac{\|\Masym\|_F}{\|\M\|_F + \varepsilon}
\;=\; \frac{d(\M, \mathcal{S})}{\|\M\|_F + \varepsilon},
\label{eq:G_def}
\end{equation}
where $d(\M, \mathcal{S})$ is the Frobenius distance to the symmetric subspace.
Its extremes pin down the structural meaning of the antisymmetric axis.

\begin{proposition}[Range and extremes of $G$]
\label{prop:g-extremes}
For any $\M$, $G(\M, 0) \in [0, 1]$, with $G = 0$ iff $\M$ is symmetric and
$G = 1$ iff $\M$ is purely skew ($\M^\top = -\M$).  A non-negative attention
matrix cannot be purely skew, so $G < 1$; within the causal class the
antisymmetric-maximal pattern is strict lower-triangular (zero diagonal), where
$G = 1/\sqrt{2}$ exactly, and every causal Toeplitz operator obeys
$G \le 1/\sqrt{2}$.
\end{proposition}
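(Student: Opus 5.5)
The plan is to reduce everything to the Hilbert--Schmidt orthogonality already established in \cref{thm:orientation-blindness}(b). Since $\Msym\perp\Masym$ in $\langle\cdot,\cdot\rangle_F$, Pythagoras gives
\[
\|\M\|_F^2=\|\Msym\|_F^2+\|\Masym\|_F^2 .
\]
With $\varepsilon=0$ and $\M\neq 0$ (the case $\M=0$ is excluded or set to $G=0$ by convention), this immediately yields $G(\M,0)^2=\|\Masym\|_F^2/(\|\Msym\|_F^2+\|\Masym\|_F^2)\in[0,1]$. The two equality cases follow directly. First, $G=0$ iff $\Masym=0$, i.e.\ $\M=\M^\top$. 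Second, $G=1$ iff $\Msym=0$, i.e.\ $\M^\top=-\M$.

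For the non-negativity claim, I will argue entrywise. If $\M\ge 0$ and $\Msym=0$, then $\M_{ij}+\M_{ji}=0$ with both terms non-negative, so $\M=0$. This contradicts $\M\ne0$. Hence a non-zero non-negative operator has $\Msym\ne 0$ and $G<1$. The degree normalization $\DQ^{-1/2}\B\DK^{-1/2}$ preserves non-negativity, so this applies to every $\M$ of \cref{def:bipartite-operator}.

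For the causal class, the key step is an explicit computation of both norms for a lower-triangular $\M$. I first note that the diagonal rescaling by $\DQ^{-1/2},\DK^{-1/2}$ preserves both the lower-triangular support of $\B$ and a zero diagonal. For $i>j$, the entries of $\Masym$ at $(i,j)$ and $(j,i)$ are $\pm\M_{ij}/2$, and its diagonal vanishes. Therefore
\[
\|\Masym\|_F^2=\tfrac12\sum_{i>j}\M_{ij}^2,
\qquad
\|\M\|_F^2=\sum_{i>j}\M_{ij}^2+\sum_i\M_{ii}^2 .
\]
This gives
\[
G^2=\frac{\tfrac12\,\mathrm{off}}{\mathrm{off}+\mathrm{diag}}\le\tfrac12 ,
\]
where $\mathrm{off}=\sum_{i>j}\M_{ij}^2$ and $\mathrm{diag}=\sum_i\M_{ii}^2$. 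Equality holds iff $\mathrm{diag}=0$ and $\mathrm{off}>0$, i.e.\ exactly for the strictly lower-triangular (zero-diagonal) pattern, where $G=1/\sqrt2$. The argument never uses the Toeplitz structure, so the Toeplitz bound follows as a special case of the bound for all causal operators.

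The main obstacle I anticipate is definitional rather than computational. A strictly lower-triangular $\B$ has an empty first row and a zero last column, so row-stochasticity fails there and $\DQ^{-1/2},\DK^{-1/2}$ are undefined on those indices. I would handle this by restricting to the support, equivalently by using pseudo-inverse degrees. This leaves the zero-diagonal lower-triangular pattern of $\M$ intact, so the computation above goes through. I would state this convention explicitly alongside the $\varepsilon=0$, $\M\neq0$ convention for $G$.
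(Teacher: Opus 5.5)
Your proof is correct. For the range and the two extremes it follows the paper exactly: Hilbert--Schmidt orthogonality plus Pythagoras. Your entrywise step ($\M_{ij}+\M_{ji}=0$ with both terms non-negative forces $\M=0$) makes precise what the paper only asserts in passing (``impossible for a non-negative attention matrix with shared-support mass''). For the ceiling, the paper works only with a causal Toeplitz kernel, computing $\|\Masym\|_F^2=\tfrac12\sum_{i>j}f(i-j)^2$ and $\|\M\|_F^2=\sum_{i>j}f(i-j)^2+n f(0)^2$. Your $\mathrm{off}/\mathrm{diag}$ computation is the same calculation with the Toeplitz hypothesis dropped, and that buys two things. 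First, it actually proves the clause that the zero-diagonal pattern is antisymmetric-maximal over the whole causal class, which the paper's Toeplitz-only argument does not literally cover. Second, it applies to the degree-normalized operator itself: $\DQ^{-1/2}\B\DK^{-1/2}$ preserves lower-triangular support but generally destroys Toeplitz structure, because the column degrees vary with $j$. You can drop your definitional worry, since the proposition is a statement about the matrix $\M$ and a strictly lower-triangular $\M$ need not come from a row-stochastic $\B$. It does reveal something, though: every causal row-stochastic $\B$ has $\B_{00}=1$, so $\M_{00}>0$. Your formula then gives $G<1/\sqrt2$ strictly for every genuine causal head; the value $1/\sqrt2$ is reached only by the idealized zero-diagonal pattern.
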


\noindent The symmetry claim is immediate from \eqref{eq:G_def}
($G = 0 \iff \Masym = 0 \iff \M = \M^\top$); the causal $1/\sqrt{2}$ ceiling
follows from the Toeplitz structure of $\Masym$.  Full proofs are in
\cref{app:g-formal-verification}.

\begin{remark}[Optimality of $G$]
\label{rem:g-optimality}
Among diagnostics of the form $D(\M) = \|\M - S\|_F / \|\M\|_F$
with $S \in \mathcal{S}$, the minimum is attained uniquely at
$S = \Msym$ and equals $G(\M, 0)$.  $G$ therefore measures the
minimal normalized Frobenius perturbation required to reach the
transpose-invariant class.
\end{remark}

A geometric illustration of how $G$ captures orientation failures
that $\phihat$ misses appears in the Online Supplement.

\medskip\noindent
This closes the structural analysis of the symmetric--antisymmetric boundary: $\phi$ and $\sigma_2$ are transpose-invariant capacity summaries, $G$ is the normalized magnitude of the antisymmetric residual, and the transpose-stability bound limits what any Lipschitz diagnostic can achieve beyond this decomposition.  The next section develops $G$'s properties under causality and positional encoding.

\section{Transport Diagnostics}
\label{sec:asymmetric-guessing}

The orientation-blindness theorem in \cref{sec:orientation-blindness}
identifies $G = \|\Masym\|_F/(\|\M\|_F+\varepsilon)$ as the residual
that any directional diagnostic must access.  Two questions follow.
First, when is $G > 0$ guaranteed?  Causal masking turns out to force
$G > 0$ for any non-trivial attention pattern, so the question becomes
quantitative: how large is $G$ as a function of the architecture's
positional structure?  Second, given conductance, $\sigma_2$, and $G$,
what diagnostic system do they jointly form?  This section answers
both, then closes by also defining the empirical conductance estimator
$\phihat$ used in the experiments.

\paragraph{Symmetry and causal structure.}
The vanishing locus $G = 0$ is exactly the symmetric subspace $\M = \M^\top$: the
operator equals its transpose under the query--key identification.  (This is distinct
from reversibility of the random walk on the dilation $\Hh(\M)$, which is symmetric for
\emph{every} $\M$ and hence always reversible.)  Causal attention falls outside this
locus by construction: lower-triangular support cannot be symmetric (except trivially),
so a causal head that attends beyond the diagonal must have $G > 0$.

\paragraph{Geometric constraint under causality.}
For causal self-attention, the interpretation of $G = 0$ is determined by a geometric constraint.
Any lower-triangular $\M$ with off-diagonal mass has $(\Masym)_{ij} = \M_{ij}/2 \neq 0$ for $i > j$
(since $\M_{ji} = 0$ by the causal mask).
Therefore, any causal head that attends beyond the diagonal necessarily has $G > 0$.

\begin{proposition}[Causal attention implies $G > 0$]
\label{prop:asymmetry-energy}
For causal self-attention, the intersection of symmetric and lower-triangular is diagonal.
Therefore $G = 0$ implies temporal isolation, where each token attends only to itself.
\end{proposition}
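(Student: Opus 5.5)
The plan is to argue at the level of supports and then transfer the conclusion from $\M$ back to the attention pattern $\B$. First I fix what ``causal'' means for the operator. Under the causal mask, $\B_{ij}=0$ for $j>i$. The degree matrices $\DQ,\DK$ are diagonal with positive entries, assuming positive column degrees as in \cref{prop:degree-sufficiency}. Hence $\M=\DQ^{-1/2}\B\DK^{-1/2}$ has exactly the same support as $\B$: $\M_{ij}=0$ iff $\B_{ij}=0$. So $\M$ is lower-triangular, and its off-diagonal support coincides with that of $\B$.

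The first claim is the linear-algebra fact that the intersection of the symmetric and lower-triangular subspaces is the diagonal subspace. Take $X$ both symmetric and lower-triangular. For $i>j$ we have $X_{ij}=X_{ji}$ by symmetry, and $X_{ji}=0$ because $j<i$ lies above the diagonal. Hence $X_{ij}=0$ for all $i\neq j$. Conversely, every diagonal matrix is both symmetric and lower-triangular.

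The second claim follows from the vanishing locus of $G$. By \eqref{eq:G_def} and \cref{prop:g-extremes}, $G(\M)=0$ iff $\Masym=0$ iff $\M=\M^\top$. For $\varepsilon>0$ I would also note that $\|\Masym\|_F=0$ is still forced, since the denominator is finite. A causal $\M$ with $G=0$ is therefore symmetric and lower-triangular, hence diagonal by the first step. Equivalently, using the entrywise formula already noted in the text, $(\Masym)_{ij}=\M_{ij}/2$ for $i>j$, so $\|\Masym\|_F^2=\tfrac12\sum_{i>j}\M_{ij}^2$, which vanishes iff all strictly-lower entries vanish.

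Finally I translate ``$\M$ diagonal'' back to attention. By the support equivalence, $\B_{ij}=0$ for $i\neq j$. Row-stochasticity then forces $\B_{ii}=1$, so $\B=I$: each token attends only to itself, which is temporal isolation. The contrapositive gives the headline: any causal head that places positive mass off the diagonal has $G>0$.

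The only step that needs care is the support-preservation claim. It requires $\DK$ to be invertible, i.e.\ every key receives some attention. I would state this as the standing assumption of \cref{def:bipartite-operator}; otherwise $\M$ is undefined on the zero-degree columns. If needed, I would restrict to the positive-degree columns, where the same argument applies verbatim.
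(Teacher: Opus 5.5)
Your proof is correct and follows essentially the paper's route. You use that $G=0$ iff $\Masym=0$ iff $\M=\M^\top$, that symmetric $\cap$ lower-triangular is diagonal, and the entrywise identity $(\Masym)_{ij}=\M_{ij}/2$ for $i>j$. Your explicit support-transfer step and the row-stochastic conclusion $\B=I$ spell out what the paper compresses into the remark that degree normalization preserves lower-triangularity.

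One small caveat concerns your fallback for zero-degree columns. Dropping those columns makes $\M$ non-square, so $G$ is undefined. Dropping the matching rows as well can give $G=0$ for a non-identity $\B$ (e.g.\ rows $(1,0)$ and $(1,0)$). Use the pseudo-inverse convention $d_j^{-1/2}:=0$ instead, which keeps the supports of $\M$ and $\B$ equal.
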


An illustration of the two failure modes (bottleneck and diffuse) and
how they coexist in causally masked self-attention is provided in the
Online Supplement.

\paragraph{Quantitative bounds from positional structure.}
Decoder models exhibit $G$ well above the qualitative $G > 0$ floor: Pythia, for instance, reaches $G \approx 0.5\text{--}0.65$.  Two
\emph{distinct} structural mechanisms can raise $G$, and they yield bounds of
different strength.  The first is a \emph{monotone primacy profile}: rows
whose weight decreases from the earliest key toward the diagonal.  The second
is a \emph{first-token mass floor}: every query retaining a fixed fraction of
attention on the earliest key at all sequence lengths, the attention-sink
pattern documented across decoder language models~\citep{Xiao2024AttentionSink}.
Softmax preserves monotone order in the pre-softmax logits, and the causal
Toeplitz--Frobenius identity converts either profile into an antisymmetric-energy
statement.  The two mechanisms give the bounds below; \emph{they are independent:
the second is not a consequence of the first}.  The supporting softmax-order and
Toeplitz--Frobenius lemmas, and the proofs of these bounds, are in the Online
Supplement, \suppref{suppl:structural-regimes-extra}:

\begin{proposition}[Monotone primacy decay gives an $O(1/\sqrt{n})$ $G$ bound]
\label{prop:monotone-g-bound}
For a causal, row-stochastic matrix with monotone decreasing rows (weight
largest at the earliest key) and $n \ge 2$:
$G(\M, \varepsilon) \ge 1/(4\sqrt{n} + 4\varepsilon)$.
\end{proposition}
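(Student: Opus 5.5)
The plan is to bound the numerator and the denominator of $G(\M,\varepsilon)=\|\Masym\|_F/(\|\M\|_F+\varepsilon)$ separately. I will read the statement as applying $G$ to the causal row-stochastic matrix itself, written $\M$, with entries $\M_{ij}=0$ for $j>i$ and $\sum_{j\le i}\M_{ij}=1$. The monotonicity hypothesis is that $\M_{i1}\ge \M_{i2}\ge\cdots\ge \M_{ii}$.

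\emph{Denominator.} Each row is a probability vector, so its squared Euclidean norm is at most $1$. Summing over rows gives $\|\M\|_F^2\le n$, hence $\|\M\|_F+\varepsilon\le \sqrt n+\varepsilon$. If one instead works with the degree-normalized operator, the same bound holds. By \cref{prop:degree-sufficiency}(i) we have $\sigma_1=1$, so $\|\M\|_F^2=\sum_i\sigma_i^2\le n$.

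\emph{Numerator.} Causality means the support of $\M$ and the support of $\M^\top$ meet only on the diagonal. So for $i>j$ we have $(\Masym)_{ij}=\M_{ij}/2$ and $(\Masym)_{ji}=-\M_{ij}/2$, and therefore
\[
\|\Masym\|_F^2=\tfrac12\sum_{i>j}\M_{ij}^2 .
\]
This is the elementary case of the causal Toeplitz--Frobenius identity.

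Monotonicity is needed only to force some strictly-lower mass to exist. In row $i=2$ the two entries satisfy $\M_{21}\ge \M_{22}$ and $\M_{21}+\M_{22}=1$, so $\M_{21}\ge 1/2$. Keeping just this one term gives $\|\Masym\|_F^2\ge \tfrac12\cdot\tfrac14=\tfrac18$. Hence $\|\Masym\|_F\ge 1/(2\sqrt2)\ge 1/4$, and combining the two bounds yields $G\ge (1/4)/(\sqrt n+\varepsilon)=1/(4\sqrt n+4\varepsilon)$.

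A slightly stronger version is available if wanted. In each row $i\ge2$ the diagonal entry is the smallest, so $\M_{ii}\le 1/i$ and the off-diagonal mass is at least $1-1/i\ge 1/2$. Cauchy--Schwarz over the $i-1$ strict entries then gives $\sum_{j<i}\M_{ij}^2\ge 1/(4(i-1))$. The constant $1/4$ in the statement, however, already follows from row $2$ alone.

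The main obstacle is the interpretation of $\M$, not the estimates. If the degree-normalized $\DQ^{-1/2}\B\DK^{-1/2}$ is intended, the entry $\M_{21}=\B_{21}/\sqrt{d_1}$ carries the column degree $d_1$. Under primacy decay $d_1$ can be as large as order $n$, so the row-$2$ argument alone would lose a factor.

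In that case I would first try to use all rows. The strict-lower entries in column $1$ are $\B_{i1}\ge 1/i$ for $i\ge2$, giving $\sum_{i\ge2}\B_{i1}^2/d_1$. Cauchy--Schwarz yields $\sum_{i\ge2}\B_{i1}^2\ge (d_1-1)^2/(n-1)$, and $d_1\ge H_n$. This recovers an antisymmetric energy of order a constant divided by $\sqrt n$ at worst, still enough for a bound of the stated shape.

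Since the statement speaks of a causal row-stochastic matrix, I would present the unnormalized argument above as the proof. I would record this column-$1$ estimate as the route to the normalized variant.
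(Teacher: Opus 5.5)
Your proof is correct and is essentially the paper's argument: row-stochasticity gives $\|\M\|_F\le\sqrt n$, and monotonicity in the first two-entry row forces one strictly-lower entry to be at least $1/2$, so $\|\Masym\|_F\ge 1/4$ (the stated constant comes from keeping just the single antisymmetric entry $\M_{21}/2\ge 1/4$; your mirrored pair gives the slightly better $1/(2\sqrt2)$). Your closing aside on the degree-normalized operator is not needed, since the statement, like your main argument, applies $G$ to the row-stochastic matrix itself, and as written it would not give the stated shape anyway: a column-$1$ antisymmetric norm of order $1/\sqrt n$ (up to a logarithm), divided by your denominator bound $\sqrt n$, yields only roughly $1/n$, so that route would need a sharper bound on $\|\M\|_F$.
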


\begin{proposition}[First-column mass floor gives an $n$-independent $G$ bound]
\label{prop:firstcol-g-bound}
If a causal, row-stochastic matrix satisfies a first-column mass floor
$\M_{i,0} \ge c > 0$ for every row $i \ge 1$, then
$G(\M, 0) \ge c/2$, independent of sequence length.
\end{proposition}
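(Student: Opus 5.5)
The plan is a direct Frobenius-norm computation. I treat $\M$ as the causal row-stochastic matrix named in the hypothesis (with $D_Q=I$ and the floor stated on its entries), and I bound the numerator and denominator of $G(\M,0)=\|\Masym\|_F/\|\M\|_F$ separately. I use only the first column and first row for the numerator, and only row-stochasticity for the denominator. Throughout I assume $n\ge 2$. For $n=1$ the floor hypothesis is vacuous, and a $1\times1$ matrix is symmetric, so the bound cannot hold there.

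\emph{Numerator.} Causality forces $\M_{0,i}=0$ for every $i\ge 1$, since the first query sees only key $0$. Hence for $i\ge1$ the antisymmetric part has entries
\[
(\Masym)_{i,0}=\tfrac12(\M_{i,0}-\M_{0,i})=\tfrac12\M_{i,0},
\qquad
(\Masym)_{0,i}=-\tfrac12\M_{i,0}.
\]
These $2(n-1)$ entries occupy distinct off-diagonal positions. Dropping every other entry of $\Masym$ and applying the floor $\M_{i,0}\ge c$ gives
\[
\|\Masym\|_F^2\;\ge\;2\sum_{i\ge1}\tfrac14\M_{i,0}^2\;\ge\;\tfrac12(n-1)c^2 .
\]

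\emph{Denominator.} The entries are non-negative and each row sums to $1$, so $\sum_j \M_{ij}^2\le(\sum_j\M_{ij})^2=1$ for every row. Summing over the $n$ rows gives $\|\M\|_F^2\le n$.

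\emph{Combining.} The two bounds give
\[
G(\M,0)^2\;\ge\;\frac{(n-1)c^2}{2n}.
\]
Since $(n-1)/n\ge \tfrac12$ for $n\ge2$, the right-hand side is at least $c^2/4$. Therefore $G(\M,0)\ge c/2$, and no factor depending on $n$ survives.

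There is no real obstacle; the key design choice is where length-independence comes from. Both the antisymmetric energy and the Frobenius mass grow linearly in $n$: one through the $n-1$ floored first-column entries, the other through the $n$ rows of norm at most $1$. Their ratio is therefore $O(1)$, in contrast to the $1/\sqrt n$ rate in \cref{prop:monotone-g-bound}. The only points needing care are the $n=1$ degeneracy and the convention that the floor is imposed on the row-stochastic matrix rather than on its degree-normalized version. If it were imposed on the normalized operator instead, the bound $\|\M\|_F^2\le n$ would have to be replaced by a degree-dependent estimate.
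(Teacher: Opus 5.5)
Your proof is correct and follows essentially the paper's route. On a causal matrix the antisymmetric energy is $\tfrac12\sum_{i>j}\M_{ij}^2$, and restricting it to the floored first column gives at least $\tfrac12(n-1)c^2$; row-stochasticity gives $\|\M\|_F^2\le n$, and the two $\Theta(n)$ growths cancel to leave $G(\M,0)\ge c/2$. Your observation that $n\ge 2$ is needed is a legitimate tightening the statement leaves implicit: in the $1\times 1$ case the hypothesis is vacuous and $G=0$.
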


\begin{corollary}[Exponential decay: $n$-independent $G$ bound]
\label{cor:exponential-g-bound}
For exponential decay $f(k) = Ce^{-\alpha k}$ with $\alpha > 0$, every row
satisfies the first-column floor $\M_{i,0} \ge 1 - e^{-\alpha}$.  By
\cref{prop:firstcol-g-bound}, $G(\M, 0) \ge (1 - e^{-\alpha})/2$, independent
of sequence length.
\end{corollary}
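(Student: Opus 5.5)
The plan is to verify the hypothesis of \cref{prop:firstcol-g-bound} directly, with the explicit constant $c = 1-e^{-\alpha}$, and then invoke that proposition. I read the decay profile in the primacy convention of this paragraph: row $i$ of the causal row-stochastic matrix puts weight on key $j \in \{0,\dots,i\}$ proportional to $f(j) = Ce^{-\alpha j}$. The weight is therefore largest at the earliest key, and the row normalization is
\[
\M_{i,j} = \frac{e^{-\alpha j}}{\sum_{k=0}^{i} e^{-\alpha k}}, \qquad 0 \le j \le i .
\]
The constant $C$ cancels in the row normalization, so it plays no role.

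\textbf{Step 1 (first-column mass).} For every row $i \ge 1$, bound the normalizer by the full geometric series:
\[
\sum_{k=0}^{i} e^{-\alpha k} \le \sum_{k=0}^{\infty} e^{-\alpha k} = \frac{1}{1-e^{-\alpha}} .
\]
This gives
\[
\M_{i,0} = \frac{1}{\sum_{k=0}^{i} e^{-\alpha k}} \ge 1-e^{-\alpha} > 0 .
\]
The bound is uniform in both $i$ and $n$, because the truncated sum is dominated by the infinite one. Row $0$ has $\M_{0,0}=1$, so it also satisfies the bound, although \cref{prop:firstcol-g-bound} only requires rows $i\ge 1$.

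\textbf{Step 2 (apply the floor).} With $c = 1-e^{-\alpha}$, the matrix is causal, row-stochastic, and satisfies $\M_{i,0}\ge c$ for all $i\ge1$. \cref{prop:firstcol-g-bound} then yields $G(\M,0)\ge c/2 = (1-e^{-\alpha})/2$. Since $c$ does not depend on $n$, neither does the bound.

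The main point to handle carefully is the indexing convention, not any computation. If $f$ were instead indexed by the query--key distance $i-j$ (a recency profile), then $\M_{i,0} = e^{-\alpha i}/\sum_k e^{-\alpha k}$. That quantity decays to zero with $i$, so no uniform first-column floor exists. I will therefore state explicitly that $k$ is the key position, consistent with the ``monotone primacy'' / attention-sink mechanism that \cref{prop:firstcol-g-bound} formalizes.

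I will also note that \cref{prop:firstcol-g-bound} is phrased for the row-stochastic matrix. The floor is verified on exactly the object that proposition takes as input, so no further degree-normalization argument is needed here.
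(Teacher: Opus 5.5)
Your proposal is correct and follows the paper's own route. The geometric-series bound $\M_{i,0}=1/\sum_{k=0}^{i}e^{-\alpha k}\ge 1-e^{-\alpha}$ verifies the first-column floor, \cref{prop:firstcol-g-bound} with $c=1-e^{-\alpha}$ finishes, and your key-position (primacy) reading of $f$ is exactly the one under which the stated floor holds. The only thing worth adding is the tacit assumption $n\ge 2$, inherited from \cref{prop:firstcol-g-bound}: for $n=1$ the floor hypothesis is vacuous, $\M=[1]$ is symmetric, and $G=0$, so the bound fails there.
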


\noindent
The two bounds describe genuinely different regimes.  A monotone primacy
profile alone guarantees only $G > 0$ with an $O(1/\sqrt{n})$ rate, because
the lower bound on antisymmetric energy rests on a single matrix entry.  A
first-column mass floor (the attention-sink structure) instead makes the
antisymmetric energy grow as $\Theta(n)$, cancelling the $\Theta(n)$ growth of
$\|\M\|_F^2$ and leaving a constant.  It is the sink mechanism, not recency
bias, that underwrites a sequence-length-independent floor on $G$; recency
decay (as induced by RoPE~\cite{Su2024RoFormer}) raises attention toward the
diagonal and is a separate effect.  This dependence on architectural
structure (conductance is universal while a tight $G$ bound requires a
first-token floor) is the cost of the tighter bound; we return to this
tradeoff in \cref{sec:discussion}.

\paragraph{Architecture-dependent interpretation.}
The geometric constraint explains why $G$ has different diagnostic value across architectures:
\begin{itemize}[nosep,leftmargin=*]
\item \textbf{Decoder self-attention} (causal):
$G = 0$ implies symmetric, and symmetric $\cap$ causal $=$ diagonal.
Therefore $G = 0$ indicates temporal isolation, a \emph{failure mode}.

\item \textbf{Encoder self-attention} (bidirectional):
$G = 0$ requires only $\M = \M^\top$, achievable with arbitrary off-diagonal structure.
This is \emph{normal operation}, not pathological.
\end{itemize}
For encoder-decoder models, $G$ should be interpreted as a temporal isolation diagnostic
\emph{only for decoder self-attention}.

\paragraph{Empirical conductance estimator.}
Exact conductance minimization is NP-hard~\citep{SimaSchaeffer2006NPConductance}, so we use a spectral sweep on the second singular
vectors of $\M$ (equivalently, the second eigenvector of $\Hh(\M)$) to obtain an
empirical estimate $\phihat$. The sweep evaluates $O(n_q + n_k)$ candidate threshold
cuts; its approximation ratio is unbounded in the worst case but empirically within
$2{\times}$ on matched nulls.
In practice, most heads lie within their intrinsic
spectral band; rather than relying on literal ``violations'', we use $\phihat$
as a continuous structural score whose interpretation is anchored by the classical
Cheeger inequality~\citep{cheegerLowerBoundSmallest2015,Chung1997}.
For causally masked matrices, the sweep operates on the lower-triangular support;
the bipartite Hermitian dilation $\Hh(\M)$ is always square ($n_q + n_k$),
so the procedure applies unchanged to rectangular cross-attention
(\cref{lem:dilation-spectrum}).

\paragraph{Diagnostic synthesis.}
Together, $\phihat$, $\sigma_2$, and $G$ form a structurally grounded diagnostic system: $\phihat$ provides a theory-grounded capacity certificate via the Cheeger inequality; $\sigma_2$ captures coupling structure beyond degree heterogeneity (Theorem~\ref{prop:degree-sufficiency}); $G$ captures the antisymmetric residual that no symmetric method can access (Theorem~\ref{thm:orientation-blindness}).
Alternative matrix norms
($G_{\mathrm{op}}$, $G_{*}$, KL-divergence rate of the induced random walk) partially strengthen the asymmetry
signal in encoder-decoder architectures but do not achieve cross-architecture consistency
(Online Supplement, \suppref{app:alternative-g-norms}).  The diagnostic scope of $G$ (where it discriminates
and where it does not) is assessed empirically in \cref{sec:evaluation}.

\medskip\noindent
The three diagnostics are defined.  Before reporting results, we establish the evaluation protocol required to interpret them without length confounding.


\section{Evaluation Protocol}
\label{sec:length-confound}

Before the transport framework can be tested empirically, a methodological threat must be addressed: spectral features inherit length dependence that can inflate apparent discrimination.  We characterize three distinct confounding channels and establish a length-controlled evaluation protocol that governs all subsequent empirical claims.

When hallucination rate correlates with response length (HaluEval: $r{=}{+}0.70$, $p{<}10^{-30}$; MedHallu: $r{=}{-}0.21$, $p{=}0.037$; TruthfulQA: $r{=}{-}0.13$, $p{=}0.057$), any feature that implicitly encodes length will appear discriminative even if it carries no semantic signal.
We organize length confounding into three channels, using LLM-Check's~\citep{Sriramanan2024LLMCheck} feature families (hidden-state, attention, output) as the organizing scaffold.

\subsection{Three Channels of Length Exposure}

\paragraph{Hidden-state channel.}
Methods that analyze covariance spectra of hidden representations inherit length
dependence through centering and normalization: LLM-Check's centering matrix
produces $\log(n-1)$ scaling in its mean log-eigenvalue score, and
EigenTrack's~\citep{Ettori2025EigenTrack} Marchenko--Pastur reference law depends
on the aspect ratio $\gamma = D/n$, whose edges shift with $n$.

\paragraph{Attention channel.}
Methods that read statistics off attention matrices suffer position--length
coupling: LLM-Check's diagonal feature has $\mathbb{E}[A_{ii}] = 1/i$ under causal
masking ($O(\log n / n)$ dependence) and discards off-diagonal routing structure,
and LapEigvals~\citep{Binkowski2025LapEigvals} couples position to length through
its degree matrix.  In cross-attention ($n_q \neq n_k$), growing query
count with keys fixed scales column degrees, adding explicit length dependence
(Proposition~\ref{prop:length-entanglement}, \cref{app:proofs_qk_regimes}).

\paragraph{Output channel.}
Token-level entropy and perplexity are relatively robust (means converge with
$O(1/n)$ variance) though entropy extrema carry $O(\log n)$ order-statistic
effects.  This channel has the mildest exposure but still warrants control.
Per-method length-dependence derivations for LLM-Check, EigenTrack, and
LapEigvals are in the Online Supplement, \suppref{app:llmcheck-spectral} and
\suppref{app:lapeigvals}.

\subsection{Length-Controlled AUROC}
\label{sec:eval-protocol}

The three-channel taxonomy explains why different spectral methods behave inconsistently across benchmarks: without explicit length control, structural exposure remains and cross-benchmark generalization suffers.
Table~\ref{tab:auroc-variants-main} summarizes the evaluation metrics used throughout; formal definitions appear in the Online Supplement, \suppref{app:evaluation-metrics}.

\begin{table}[h]
\centering
\small
\begin{tabular}{@{}lp{2.8cm}cc@{}}
\toprule
\textbf{Metric} & \textbf{Purpose} & \textbf{Conf.} & \textbf{Assump.} \\
\midrule
Raw & Baseline discrim. & None & -- \\
Resid. & Remove linear len. & Linear & Linear \\
Strat. & Within-bin discrim. & Nonpar. & Bins \\
LC & Strat. + within-bin resid. & Both & Bins + linear \\
Overlap & Balanced-bin & Nonpar. & Prev. \\
\bottomrule
\end{tabular}
\caption{\textbf{AUROC variants and their properties.}
Conf.: confound addressed. Assump.: modeling assumptions.}
\label{tab:auroc-variants-main}
\end{table}

\paragraph{Length-controlled AUROC (LC-AUROC).}
To control for length confounding at two levels, we partition samples by \emph{response length}
(model-specific tokenizer, excluding prompt) into $B$ equal-frequency bins, where $B$ is chosen
adaptively per data set as the smallest value satisfying: (i)~all bins contain $\geq 25$
positive-negative pairs, (ii)~no bin is degenerate, and (iii)~the maximum within-bin Spearman
rank correlation between score and length satisfies $|\rho_b| < 0.10$.
Within each bin $b$, scores are residualized via OLS against length ($r_i = s_i - \hat{s}_i$)
to remove residual within-bin linear correlation, and per-bin AUROC values are aggregated with
pair-weighting:\footnote{Despite the two-stage structure (stratification then residualization),
LC-AUROC is not a semiparametric doubly-robust estimator in the sense of
\citet{RobinsRotnitzkyZhao1994}: it does not guarantee consistency under misspecification of
either the propensity or outcome model.}
\begin{align}
\text{AUROC}_{\text{LC}} &= \sum_b w_b \cdot \text{AUROC}_b(y, r_b), \label{eq:lc-auroc} \\
w_b &= \frac{n_{+,b} \times n_{-,b}}{\sum_{b'} n_{+,b'} \times n_{-,b'}}. \nonumber
\end{align}
Pair-weighting ensures bins with class imbalance have appropriately reduced influence.
Because a transport feature can discriminate with either polarity (high or
low values flagging hallucination), we report \emph{flipped} LC-AUROC,
$\max(\text{AUROC}(s), 1{-}\text{AUROC}(s))$.  Flipping selects the polarity
using the labels, so a flipped value answers ``does this feature carry
signal'' rather than ``does a label-free detector exist'': the features
themselves are computed without labels, but a deployable detector
additionally needs a small calibration set to fix the sign and threshold
(\cref{sec:discussion}).  Unless stated otherwise, reported LC-AUROC values
are flipped, and should be read as evidence of signal strength rather than as
the accuracy of a deployed label-free classifier.
Feature rankings under LC-AUROC are robust to hyperparameter choices: across a grid
of bin counts and correlation thresholds, the mean Kendall $\tau$ between
feature rankings exceeds 0.75 for the majority of configurations
(see Online Supplement, \suppref{app:lc-auroc-sensitivity}).

A representative per-quartile breakdown for GPT-2 on HaluEval, including
bootstrap CIs and the raw vs.\ residualized vs.\ LC-AUROC comparison
that motivates LC-AUROC, is shown in the Online Supplement (\suppref{app:stratified-auroc}).
On HaluEval the length--prevalence coupling is extreme: short responses are almost never
hallucinated and long ones almost always are (quartile prevalence ranges from a few percent
to $\sim$99\%).  Length control therefore rests on limited common support (concentrated
in the middle of the length range, where both classes co-occur), so the HaluEval
length-controlled numbers should be read as the most confounded of the three benchmarks;
\suppref{app:stratified-auroc} reports the per-quartile structure and the
overlap-region restriction that bound this.

\paragraph{Datasets and models.}
We evaluate on three benchmarks chosen to span distinct length-label
correlation regimes: HaluEval~\citep{Li2023HaluEval} ($n{=}10{,}000$;
$r_\ell{=}{+}0.70$, strong positive), TruthfulQA~\citep{Lin2022TruthfulQA}
($n{=}817$; $r_\ell{=}{-}0.13$, near-zero), and
MedHallu~\citep{Pandit2025MedHallu} ($n{=}1{,}000$, the expert-labelled
subset of the MedHallu benchmark; $r_\ell{=}{-}0.21$, weak negative).
Spectral diagnostics are extracted by \emph{forced scoring}: each prompt is concatenated with the benchmark-provided reference response (factual or hallucinated) and passed through the model in a single forward pass with attention outputs enabled; the models under study generate no text, so our claims concern internal processing of labelled responses rather than online generation dynamics, and are associational rather than causal (\cref{sec:discussion}).
Zero-shot methods (OC: conductance $\phihat$; AG: asymmetry coefficient $G$; LLM-Check) are evaluated on a 30\% held-out test split; our EigenTrack port uses 5-fold cross-validation with a linear classifier (logistic regression with dropout 0.1).
Models span two tiers by design.  The \emph{mechanistic grid} comprises GPT-2 and
Pythia-160M (decoder-only), BERT (encoder-only), and Flan-T5 (encoder--decoder with
cross-attention and decoder self-attention): compact models on which the full
measurement program, exact raw-graph temporal-cut landscapes (\cref{sec:cheeger}),
matched-null refits, degree-preserving decompositions, baseline ports, and feature
ablations, is computed exhaustively across all layers, heads, and aggregations.
The \emph{modern panel} comprises Qwen2.5 (0.5B, 1.5B, 3B), SmolLM2-1.7B, and
LLaMA-3.1-8B (grouped-query attention), decoder-only models in current use, on which
we test transfer of the transport diagnostics under the identical protocol
(\cref{sec:llama-generalization}).
We evaluate using bootstrap confidence intervals (1000 samples, 95\% CI) on 15 model-dataset combinations.
All empirical claims in this paper use LC-AUROC as the primary metric, with raw AUROC reported only for comparability.
Features are aggregated across all layers and heads using CVaR (conditional value-at-risk; \citealp{RockafellarUryasev2000CVaR}) and robust statistics (mean, median, std, IQR, range),
which avoids tuning a per-layer or per-head window; the aggregation function itself remains a modeling choice, selected as described in \cref{sec:evaluation}.
As above, reported values use label-informed polarity; deployment additionally requires polarity and threshold calibration (\cref{sec:discussion}).

\paragraph{Baseline method limitations.}
Some method--architecture combinations are undefined and marked ``---'' in Table~\ref{tab:main-results}:
(i)~\emph{Cross-attention baselines}: LLM-Check and EigenTrack analyze self-attention or hidden states; cross-attention is not supported by these methods.
(ii)~\emph{Encoder-only logit entropy}: BERT lacks autoregressive token prediction, so LLM-Check logit features are undefined.
(iii)~\emph{Asymmetric guessing ($G$) for cross-attention}: The Hermitian decomposition requires square attention matrices; cross-attention matrices are rectangular ($n_q \neq n_k$), making $G$ undefined.

\subsection{Empirical Illustration}

We validate this analysis on a composite design set (900 samples: 300 HaluEval, 300 TruthfulQA, 300 MedHallu).
Response length alone achieves AUROC $\lenAurocHalueval$ on HaluEval, $\lenAurocMedhallu$ on MedHallu, and $\lenAurocTruthfulqa$ on TruthfulQA, confirming length is a strong univariate predictor where correlation exists.
Spectral features inherit this confounding: $\lambda_2$ shows $r = 0.78$ correlation with length on GPT-2, and normalizing by $\sqrt{n}$ over-corrects to $r = {-}0.61$, indicating finite-size effects; the Online Supplement (\suppref{app:finite_size_small_nk}) gives the per-method length-confounding derivations and why random-matrix theory fails for attention.

\medskip\noindent
With LC-AUROC defined and confounding channels characterized, we report results organized around the three research questions.

\section{Empirical Results}
\label{sec:evaluation}

With the transport framework defined (\cref{sec:transport-operators,sec:cheeger,sec:orientation-blindness,sec:asymmetric-guessing}) and the evaluation protocol established (\cref{sec:length-confound}), we test three research questions.  We use
hallucination benchmarks as the testbed throughout: they supply \emph{labeled} instances
of routing failure, but the orientation-blindness limit and the conductance landscape are
properties of the attention operator, so the questions below concern what the diagnostics
measure, not hallucination per se.
The experiments characterize what each diagnostic axis measures in
practice: where it carries signal, what structural features drive that
signal, and what it structurally cannot detect.  AUROC serves as
quantitative evidence for these structural characterizations, not as an end in
itself; where we name a candidate mechanism, we mark it as a hypothesis
consistent with the patterns rather than a demonstrated cause.
Models span the two tiers of \cref{sec:length-confound}: Q1--Q3 run on the
mechanistic grid (GPT-2, Pythia-160M, BERT, Flan-T5), where the full measurement
program is exhaustive, and \cref{sec:llama-generalization} tests transfer on the
modern panel (Qwen2.5 at 0.5/1.5/3B, SmolLM2-1.7B, LLaMA-3.1-8B) under the
identical protocol.


\begin{table*}[t]
\centering
\caption{%
  Length-controlled AUROC comparison on the mechanistic grid: zero-shot transport
  features versus baselines (modern-panel transfer in
  \cref{fig:model-panel-bars} and \cref{sec:llama-generalization}).
  \textbf{Left of bar}: Our zero-shot features---$\phihat$ CVaR$_{75}$ (high conductance),
  CVaR$_{25}$ (low conductance); $\sigma_2$ std (spectral variability).
  \textbf{Right of bar}: Baselines---LLM-Check (attn/hidden/logit probes) and EigenTrack (supervised).
  Values show AUROC with 95\% bootstrap confidence intervals (super/subscript notation).
  \textbf{Bold} = best per row; this paper's contribution is diagnostic characterization, not detection ranking.
  Per-row cell shading: green for the top two values (darker = higher), vermillion for the bottom two (darker = lower); tied values share a shade.
  All methods use length-controlled (within-bin residualized, pair-weighted) evaluation.
  Reported values use \emph{label-informed polarity} (the flip is selected on the
  evaluation split): the features are computed without labels, but a deployable
  detector additionally needs a small calibration set to fix the sign and
  threshold (\cref{sec:eval-protocol}).
  $G$ std (asymmetry) results appear in Figure~\ref{fig:g-barchart}.
}
\label{tab:main-results}
\footnotesize
\setlength{\tabcolsep}{5pt}
\begin{filecontents*}{data/tables/main_results_combined.csv}
# Main Results Table: Zero-shot features + Baselines
# Generated: 2026-02-05 (per-dataset data with bootstrap CIs)
# Our features: phi (conductance variants), sigma2 (spectral norm), G (asymmetric guessing)
# Baselines: LLMCheck (attn/hidden/logit) + EigenTrack (all per-dataset)
# Format: LaTeX-ready with super/subscript CIs and \best{} markup
# Best method per row (across ALL features and baselines) is bolded
Model,Dataset,phi_cvar_top,phi_cvar_bottom,phi_mean,sigma2_std,G_std,LLM_attn,LLM_hidden,LLM_logit,EigenTrack
GPT-2,HaluEval,0.68$_{\scriptscriptstyle .66}^{\scriptscriptstyle .71}$,\cellcolor{OIverm!50!white}0.55$_{\scriptscriptstyle .53}^{\scriptscriptstyle .58}$,0.64$_{\scriptscriptstyle .61}^{\scriptscriptstyle .67}$,\cellcolor{OIgreen!50!white}\best{0.77$_{\scriptscriptstyle .74}^{\scriptscriptstyle .79}$},0.63$_{\scriptscriptstyle .60}^{\scriptscriptstyle .66}$,0.64$_{\scriptscriptstyle .57}^{\scriptscriptstyle .75}$,\cellcolor{OIgreen!25!white}0.71$_{\scriptscriptstyle .62}^{\scriptscriptstyle .81}$,0.69$_{\scriptscriptstyle .62}^{\scriptscriptstyle .81}$,\cellcolor{OIverm!25!white}0.63$_{\scriptscriptstyle .61}^{\scriptscriptstyle .66}$
GPT-2,TruthfulQA,\cellcolor{OIgreen!25!white}0.61$_{\scriptscriptstyle .59}^{\scriptscriptstyle .64}$,0.57$_{\scriptscriptstyle .54}^{\scriptscriptstyle .59}$,0.59$_{\scriptscriptstyle .57}^{\scriptscriptstyle .62}$,0.60$_{\scriptscriptstyle .58}^{\scriptscriptstyle .62}$,0.58$_{\scriptscriptstyle .56}^{\scriptscriptstyle .61}$,\cellcolor{OIverm!25!white}0.55$_{\scriptscriptstyle .53}^{\scriptscriptstyle .63}$,\cellcolor{OIgreen!50!white}\best{0.63$_{\scriptscriptstyle .58}^{\scriptscriptstyle .71}$},0.58$_{\scriptscriptstyle .53}^{\scriptscriptstyle .65}$,\cellcolor{OIverm!50!white}0.50$_{\scriptscriptstyle .50}^{\scriptscriptstyle .50}$
GPT-2,MedHallu,0.58$_{\scriptscriptstyle .55}^{\scriptscriptstyle .64}$,0.61$_{\scriptscriptstyle .57}^{\scriptscriptstyle .66}$,0.62$_{\scriptscriptstyle .57}^{\scriptscriptstyle .67}$,\cellcolor{OIverm!50!white}0.55$_{\scriptscriptstyle .52}^{\scriptscriptstyle .61}$,0.61$_{\scriptscriptstyle .57}^{\scriptscriptstyle .66}$,0.62$_{\scriptscriptstyle .57}^{\scriptscriptstyle .69}$,\cellcolor{OIgreen!50!white}\best{0.65$_{\scriptscriptstyle .59}^{\scriptscriptstyle .71}$},\cellcolor{OIgreen!25!white}0.64$_{\scriptscriptstyle .58}^{\scriptscriptstyle .72}$,\cellcolor{OIverm!25!white}0.57$_{\scriptscriptstyle .54}^{\scriptscriptstyle .64}$
BERT,HaluEval,\cellcolor{OIverm!25!white}0.57$_{\scriptscriptstyle .55}^{\scriptscriptstyle .60}$,0.59$_{\scriptscriptstyle .56}^{\scriptscriptstyle .62}$,0.60$_{\scriptscriptstyle .58}^{\scriptscriptstyle .63}$,\cellcolor{OIverm!50!white}0.55$_{\scriptscriptstyle .53}^{\scriptscriptstyle .58}$,0.58$_{\scriptscriptstyle .55}^{\scriptscriptstyle .60}$,\cellcolor{OIgreen!25!white}0.66$_{\scriptscriptstyle .59}^{\scriptscriptstyle .79}$,\cellcolor{OIgreen!50!white}\best{0.70$_{\scriptscriptstyle .59}^{\scriptscriptstyle .83}$},,0.62$_{\scriptscriptstyle .59}^{\scriptscriptstyle .64}$
BERT,TruthfulQA,\cellcolor{OIverm!25!white}0.55$_{\scriptscriptstyle .53}^{\scriptscriptstyle .58}$,0.56$_{\scriptscriptstyle .54}^{\scriptscriptstyle .58}$,0.52$_{\scriptscriptstyle .51}^{\scriptscriptstyle .55}$,\cellcolor{OIverm!25!white}0.55$_{\scriptscriptstyle .53}^{\scriptscriptstyle .58}$,0.59$_{\scriptscriptstyle .57}^{\scriptscriptstyle .62}$,\cellcolor{OIgreen!25!white}0.57$_{\scriptscriptstyle .54}^{\scriptscriptstyle .65}$,\cellcolor{OIgreen!50!white}\best{0.62$_{\scriptscriptstyle .57}^{\scriptscriptstyle .69}$},,\cellcolor{OIverm!50!white}0.52$_{\scriptscriptstyle .51}^{\scriptscriptstyle .56}$
BERT,MedHallu,\cellcolor{OIverm!25!white}0.58$_{\scriptscriptstyle .55}^{\scriptscriptstyle .63}$,\cellcolor{OIverm!50!white}0.54$_{\scriptscriptstyle .52}^{\scriptscriptstyle .60}$,0.58$_{\scriptscriptstyle .55}^{\scriptscriptstyle .63}$,0.63$_{\scriptscriptstyle .59}^{\scriptscriptstyle .68}$,0.59$_{\scriptscriptstyle .56}^{\scriptscriptstyle .64}$,\cellcolor{OIgreen!25!white}0.64$_{\scriptscriptstyle .59}^{\scriptscriptstyle .71}$,\cellcolor{OIgreen!50!white}\best{0.72$_{\scriptscriptstyle .64}^{\scriptscriptstyle .78}$},,\cellcolor{OIverm!50!white}0.54$_{\scriptscriptstyle .53}^{\scriptscriptstyle .62}$
Pythia-160M,HaluEval,\cellcolor{OIgreen!25!white}0.82$_{\scriptscriptstyle .80}^{\scriptscriptstyle .84}$,\cellcolor{OIverm!25!white}0.62$_{\scriptscriptstyle .59}^{\scriptscriptstyle .64}$,0.83$_{\scriptscriptstyle .80}^{\scriptscriptstyle .85}$,\cellcolor{OIgreen!50!white}\best{0.84$_{\scriptscriptstyle .81}^{\scriptscriptstyle .86}$},0.82$_{\scriptscriptstyle .80}^{\scriptscriptstyle .84}$,0.70$_{\scriptscriptstyle .62}^{\scriptscriptstyle .82}$,0.72$_{\scriptscriptstyle .63}^{\scriptscriptstyle .80}$,0.80$_{\scriptscriptstyle .59}^{\scriptscriptstyle .89}$,\cellcolor{OIverm!50!white}0.60$_{\scriptscriptstyle .58}^{\scriptscriptstyle .62}$
Pythia-160M,TruthfulQA,\cellcolor{OIverm!25!white}0.57$_{\scriptscriptstyle .55}^{\scriptscriptstyle .59}$,\cellcolor{OIgreen!25!white}0.58$_{\scriptscriptstyle .56}^{\scriptscriptstyle .61}$,0.57$_{\scriptscriptstyle .55}^{\scriptscriptstyle .60}$,\cellcolor{OIverm!25!white}0.57$_{\scriptscriptstyle .55}^{\scriptscriptstyle .60}$,0.60$_{\scriptscriptstyle .58}^{\scriptscriptstyle .62}$,\cellcolor{OIverm!50!white}0.52$_{\scriptscriptstyle .52}^{\scriptscriptstyle .61}$,\cellcolor{OIgreen!50!white}\best{0.63$_{\scriptscriptstyle .58}^{\scriptscriptstyle .72}$},\cellcolor{OIverm!50!white}0.52$_{\scriptscriptstyle .52}^{\scriptscriptstyle .60}$,\cellcolor{OIverm!50!white}0.52$_{\scriptscriptstyle .51}^{\scriptscriptstyle .56}$
Pythia-160M,MedHallu,\cellcolor{OIgreen!25!white}0.61$_{\scriptscriptstyle .58}^{\scriptscriptstyle .67}$,\cellcolor{OIgreen!50!white}\best{0.62$_{\scriptscriptstyle .58}^{\scriptscriptstyle .68}$},0.60$_{\scriptscriptstyle .57}^{\scriptscriptstyle .65}$,\cellcolor{OIverm!25!white}0.59$_{\scriptscriptstyle .55}^{\scriptscriptstyle .65}$,0.61$_{\scriptscriptstyle .57}^{\scriptscriptstyle .66}$,\cellcolor{OIgreen!50!white}0.62$_{\scriptscriptstyle .57}^{\scriptscriptstyle .69}$,\cellcolor{OIgreen!50!white}0.62$_{\scriptscriptstyle .57}^{\scriptscriptstyle .69}$,\cellcolor{OIverm!50!white}0.58$_{\scriptscriptstyle .54}^{\scriptscriptstyle .65}$,\cellcolor{OIgreen!25!white}0.61$_{\scriptscriptstyle .56}^{\scriptscriptstyle .68}$
Flan-T5 (cross),HaluEval,\cellcolor{OIverm!25!white}0.62$_{\scriptscriptstyle .59}^{\scriptscriptstyle .64}$,\cellcolor{OIgreen!50!white}\best{0.71$_{\scriptscriptstyle .68}^{\scriptscriptstyle .73}$},0.70$_{\scriptscriptstyle .67}^{\scriptscriptstyle .72}$,\cellcolor{OIgreen!25!white}0.66$_{\scriptscriptstyle .64}^{\scriptscriptstyle .69}$,0.53$_{\scriptscriptstyle .52}^{\scriptscriptstyle .56}$,\cellcolor{OIverm!25!white}0.62$_{\scriptscriptstyle .55}^{\scriptscriptstyle .74}$,,,\cellcolor{OIverm!50!white}0.50$_{\scriptscriptstyle .50}^{\scriptscriptstyle .50}$
Flan-T5 (cross),TruthfulQA,\cellcolor{OIverm!50!white}0.53$_{\scriptscriptstyle .52}^{\scriptscriptstyle .56}$,\cellcolor{OIgreen!25!white}0.58$_{\scriptscriptstyle .55}^{\scriptscriptstyle .60}$,0.57$_{\scriptscriptstyle .55}^{\scriptscriptstyle .60}$,\cellcolor{OIverm!25!white}0.57$_{\scriptscriptstyle .55}^{\scriptscriptstyle .60}$,0.53$_{\scriptscriptstyle .52}^{\scriptscriptstyle .56}$,\cellcolor{OIgreen!50!white}\best{0.59$_{\scriptscriptstyle .55}^{\scriptscriptstyle .67}$},,,\cellcolor{OIverm!50!white}0.53$_{\scriptscriptstyle .52}^{\scriptscriptstyle .57}$
Flan-T5 (cross),MedHallu,0.56$_{\scriptscriptstyle .53}^{\scriptscriptstyle .61}$,\cellcolor{OIgreen!25!white}0.67$_{\scriptscriptstyle .63}^{\scriptscriptstyle .72}$,0.62$_{\scriptscriptstyle .59}^{\scriptscriptstyle .68}$,\cellcolor{OIgreen!50!white}\best{0.68$_{\scriptscriptstyle .63}^{\scriptscriptstyle .73}$},0.58$_{\scriptscriptstyle .55}^{\scriptscriptstyle .64}$,\cellcolor{OIverm!25!white}0.55$_{\scriptscriptstyle .53}^{\scriptscriptstyle .64}$,,,\cellcolor{OIverm!50!white}0.50$_{\scriptscriptstyle .50}^{\scriptscriptstyle .50}$
Flan-T5 (dec),HaluEval,\cellcolor{OIgreen!25!white}0.73$_{\scriptscriptstyle .71}^{\scriptscriptstyle .75}$,\cellcolor{OIgreen!25!white}0.73$_{\scriptscriptstyle .71}^{\scriptscriptstyle .76}$,0.76$_{\scriptscriptstyle .73}^{\scriptscriptstyle .78}$,0.69$_{\scriptscriptstyle .67}^{\scriptscriptstyle .72}$,0.78$_{\scriptscriptstyle .76}^{\scriptscriptstyle .80}$,\cellcolor{OIgreen!50!white}\best{0.77$_{\scriptscriptstyle .69}^{\scriptscriptstyle .85}$},0.72$_{\scriptscriptstyle .60}^{\scriptscriptstyle .82}$,\cellcolor{OIverm!25!white}0.68$_{\scriptscriptstyle .61}^{\scriptscriptstyle .80}$,\cellcolor{OIverm!50!white}0.50$_{\scriptscriptstyle .50}^{\scriptscriptstyle .50}$
Flan-T5 (dec),TruthfulQA,\cellcolor{OIverm!25!white}0.54$_{\scriptscriptstyle .53}^{\scriptscriptstyle .58}$,0.55$_{\scriptscriptstyle .53}^{\scriptscriptstyle .58}$,0.55$_{\scriptscriptstyle .53}^{\scriptscriptstyle .58}$,\cellcolor{OIverm!50!white}0.53$_{\scriptscriptstyle .51}^{\scriptscriptstyle .56}$,0.57$_{\scriptscriptstyle .54}^{\scriptscriptstyle .59}$,\cellcolor{OIgreen!50!white}\best{0.60$_{\scriptscriptstyle .57}^{\scriptscriptstyle .67}$},\cellcolor{OIgreen!25!white}0.59$_{\scriptscriptstyle .56}^{\scriptscriptstyle .67}$,0.56$_{\scriptscriptstyle .53}^{\scriptscriptstyle .64}$,\cellcolor{OIverm!50!white}0.53$_{\scriptscriptstyle .52}^{\scriptscriptstyle .57}$
Flan-T5 (dec),MedHallu,\cellcolor{OIverm!25!white}0.53$_{\scriptscriptstyle .52}^{\scriptscriptstyle .58}$,0.57$_{\scriptscriptstyle .53}^{\scriptscriptstyle .61}$,0.55$_{\scriptscriptstyle .53}^{\scriptscriptstyle .61}$,0.56$_{\scriptscriptstyle .53}^{\scriptscriptstyle .61}$,0.58$_{\scriptscriptstyle .55}^{\scriptscriptstyle .64}$,\cellcolor{OIgreen!25!white}0.59$_{\scriptscriptstyle .55}^{\scriptscriptstyle .66}$,\cellcolor{OIgreen!50!white}0.60$_{\scriptscriptstyle .55}^{\scriptscriptstyle .67}$,\cellcolor{OIgreen!50!white}\best{0.60$_{\scriptscriptstyle .56}^{\scriptscriptstyle .69}$},\cellcolor{OIverm!50!white}0.50$_{\scriptscriptstyle .50}^{\scriptscriptstyle .50}$
\end{filecontents*}
\pgfplotstabletypeset[
  col sep=comma,
  string type,
  skip first n=5,  
  columns={Model, Dataset, phi_cvar_top, phi_cvar_bottom, sigma2_std, LLM_attn, LLM_hidden, LLM_logit, EigenTrack},
  every head row/.style={
    before row={\toprule
      & & \multicolumn{3}{c}{\textbf{Transport features (ours)}}
        & \multicolumn{4}{c}{\textbf{Baselines}} \\
      \cmidrule(lr){3-5}\cmidrule(lr){6-9}},
    after row=\midrule
  },
  every last row/.style={
    after row=\bottomrule
  },
  columns/Model/.style={column name=Model, column type=l},
  columns/Dataset/.style={column name=Dataset, column type=l},
  columns/phi_cvar_top/.style={column name=$\phihat$ C$_{75}$, column type=c},
  columns/phi_cvar_bottom/.style={column name=$\phihat$ C$_{25}$, column type=c},
  columns/sigma2_std/.style={column name=$\sigma_2$ std, column type=c|},  
  columns/LLM_attn/.style={column name=LLM$_\text{attn}$, column type=c},
  columns/LLM_hidden/.style={column name=LLM$_\text{hid}$, column type=c},
  columns/LLM_logit/.style={column name=LLM$_\text{log}$, column type=c},
  columns/EigenTrack/.style={column name=ET, column type=c},
]{data/tables/main_results_combined.csv}
\end{table*}

\subsection{Q1: Does the Symmetric Axis Capture Capacity Failures under Length Control?}
\label{sec:eval-q1}

\paragraph{Response-length baseline.}
Response length alone achieves raw AUROC of $\lenAurocHalueval$ on HaluEval, $\lenAurocMedhallu$ on MedHallu,
and $\lenAurocTruthfulqa$ on TruthfulQA (\cref{sec:length-confound}). Under LC-AUROC, response
length collapses to ${\approx}0.50$ by construction, since within-bin residualization
removes the signal that length provides. Transport features retain meaningful
LC-AUROC (0.62--0.84), indicating that their discriminative power is not reducible
to length encoding.

\paragraph{Two failure regimes persist after length control.}
Table~\ref{tab:main-results} presents LC-AUROC for attention transport
features aggregated across all layers and heads.
CVaR tail analysis decomposes the conductance distribution into its
structural extremes: high-conductance regions (CVaR$_{75}$, top 25\%)
capture diffuse routing where attention dilutes across too many tokens;
low-conductance regions (CVaR$_{25}$, bottom 25\%) capture
over-constrained bottlenecks where attention concentrates on too few
tokens.  On Pythia-160M/HaluEval the diffuse tail produces a stronger hallucination signal
(\pythiaHalPhiDiffuse{} [\pythiaHalPhiDiffuseLo{}, \pythiaHalPhiDiffuseHi{}]) than the bottleneck tail (\pythiaHalPhiBottleneck{} [\pythiaHalPhiBottleneckLo{}, \pythiaHalPhiBottleneckHi{}]),
and overall spectral norm variability reaches \pythiaHalSigmaTwoStd{} [\pythiaHalSigmaTwoStdLo{}, \pythiaHalSigmaTwoStdHi{}] on
HaluEval.  This ranking concerns which extreme of the per-head conductance
distribution makes the better \emph{aggregated feature}; it is a separate
question from which routing regime a benchmark occupies, which we test below by
per-sample stratification.  The two need not coincide: a tail aggregation can
discriminate even where hallucination mass concentrates at the opposite
extreme, so the polarity prediction is evaluated on the stratification, not on
this tail ranking.


\begin{figure*}[!ht]
\centering
\begin{tikzpicture}
\begin{axis}[
    academic-single,
    name=phiplot,
    width=0.48\textwidth,
    height=0.35\textwidth,
    xlabel={Layer index},
    ylabel={Conductance $\phihat$ (layer mean)},
    xmin=-0.5, xmax=11.5,
    ymin=0, ymax=0.32,
    xtick={0,2,...,10},
    title={\small (a) Per-layer conductance},
    title style={at={(0.5,1.02)}, anchor=south},
    legend to name=sharedlegend,
    legend style={
        font=\scriptsize,
        draw=none,
        legend columns=5,
        /tikz/every even column/.append style={column sep=4pt},
    },
]

\addplot[name path=fac_hi, draw=none, forget plot]
    table[x=layer, y=hi] {content/figures/per_layer_profiles/data/phi_pop_factual.dat};
\addplot[name path=fac_lo, draw=none, forget plot]
    table[x=layer, y=lo] {content/figures/per_layer_profiles/data/phi_pop_factual.dat};
\addplot[fill=factual, fill opacity=0.12, forget plot]
    fill between[of=fac_hi and fac_lo];
\addplot[factual, thin, densely dotted, forget plot]
    table[x=layer, y=mean] {content/figures/per_layer_profiles/data/phi_pop_factual.dat};

\addplot[name path=hal_hi, draw=none, forget plot]
    table[x=layer, y=hi] {content/figures/per_layer_profiles/data/phi_pop_hallucinated.dat};
\addplot[name path=hal_lo, draw=none, forget plot]
    table[x=layer, y=lo] {content/figures/per_layer_profiles/data/phi_pop_hallucinated.dat};
\addplot[fill=hallucinate, fill opacity=0.12, forget plot]
    fill between[of=hal_hi and hal_lo];
\addplot[hallucinate, thin, densely dotted, forget plot]
    table[x=layer, y=mean] {content/figures/per_layer_profiles/data/phi_pop_hallucinated.dat};

\addplot[hallucinate, thick, mark=*, mark size=1.5pt]
    table[x=layer, y=phi] {content/figures/per_layer_profiles/data/phi_bottleneck.dat};
\addlegendentry{Bottleneck (hal.)}

\addplot[spectral, thick, mark=square*, mark size=1.5pt]
    table[x=layer, y=phi] {content/figures/per_layer_profiles/data/phi_diffuse.dat};
\addlegendentry{Diffuse (hal.)}

\addplot[factual, thick, dashed, mark=triangle*, mark size=1.8pt]
    table[x=layer, y=phi] {content/figures/per_layer_profiles/data/phi_factual_baseline.dat};
\addlegendentry{Factual baseline}

\addplot[factual, thin, densely dotted]
    coordinates {(-1,-1)};
\addlegendentry{Pop.\ mean (fac.)}
\addplot[hallucinate, thin, densely dotted]
    coordinates {(-1,-1)};
\addlegendentry{Pop.\ mean (hal.)}

\end{axis}

\begin{axis}[
    academic-single,
    at={(phiplot.outer east)},
    anchor=outer west,
    xshift=0.4cm,
    width=0.48\textwidth,
    height=0.35\textwidth,
    xlabel={Layer index},
    ylabel={Spectral norm $\sigma_2$ (layer mean)},
    xmin=-0.5, xmax=11.5,
    ymin=0, ymax=1.05,
    xtick={0,2,...,10},
    title={\small (b) Per-layer spectral norm},
    title style={at={(0.5,1.02)}, anchor=south},
]

\addplot[name path=sfac_hi, draw=none, forget plot]
    table[x=layer, y=hi] {content/figures/per_layer_profiles/data/sigma2_pop_factual.dat};
\addplot[name path=sfac_lo, draw=none, forget plot]
    table[x=layer, y=lo] {content/figures/per_layer_profiles/data/sigma2_pop_factual.dat};
\addplot[fill=factual, fill opacity=0.12, forget plot]
    fill between[of=sfac_hi and sfac_lo];
\addplot[factual, thin, densely dotted, forget plot]
    table[x=layer, y=mean] {content/figures/per_layer_profiles/data/sigma2_pop_factual.dat};

\addplot[name path=shal_hi, draw=none, forget plot]
    table[x=layer, y=hi] {content/figures/per_layer_profiles/data/sigma2_pop_hallucinated.dat};
\addplot[name path=shal_lo, draw=none, forget plot]
    table[x=layer, y=lo] {content/figures/per_layer_profiles/data/sigma2_pop_hallucinated.dat};
\addplot[fill=hallucinate, fill opacity=0.12, forget plot]
    fill between[of=shal_hi and shal_lo];
\addplot[hallucinate, thin, densely dotted, forget plot]
    table[x=layer, y=mean] {content/figures/per_layer_profiles/data/sigma2_pop_hallucinated.dat};

\addplot[hallucinate, thick, mark=*, mark size=1.5pt]
    table[x=layer, y=sigma2] {content/figures/per_layer_profiles/data/sigma2_bottleneck.dat};

\addplot[spectral, thick, mark=square*, mark size=1.5pt]
    table[x=layer, y=sigma2] {content/figures/per_layer_profiles/data/sigma2_diffuse.dat};

\addplot[factual, thick, dashed, mark=triangle*, mark size=1.8pt]
    table[x=layer, y=sigma2] {content/figures/per_layer_profiles/data/sigma2_factual_baseline.dat};

\end{axis}
\end{tikzpicture}

\vspace{-2pt}
\pgfplotslegendfromname{sharedlegend}

\caption{\textbf{Per-layer transport profiles reveal distinct failure signatures.}
Conductance $\phihat$ \textbf{(a)} and spectral norm $\sigma_2$ \textbf{(b)}
averaged across heads within each layer (Pythia-160M).
Shaded bands show population $\pm 1$ std around dotted mean lines
(green: factual, red: hallucinated from HaluEval).
\textcolor{hallucinate}{\textbf{Bottleneck}} (HaluEval, hallucinated): uniformly depressed
$\phihat$ and elevated $\sigma_2$ across all layers---the spectral gap is small everywhere,
indicating global over-concentration.
\textcolor{spectral}{\textbf{Diffuse}} (MedHallu, hallucinated): elevated $\phihat$ in
middle layers with $\sigma_2$ slightly lower---attention spreads too broadly.
\textcolor{factual}{\textbf{Factual}}: intermediate values within population bands.}
\label{fig:per-layer-profiles}
\end{figure*}

Per-layer profiles are consistent with these failures being structurally distinct
across the full transformer depth, rather than artifacts of individual layers
(Figure~\ref{fig:per-layer-profiles}): bottleneck samples show uniformly depressed
$\phihat$ across all 12 layers, while diffuse samples show selectively elevated
$\phihat$ in middle layers.


\begin{figure*}[!ht]
\centering
\begin{tikzpicture}
\begin{axis}[
    academic-single,
    name=haluplot,
    width=0.48\textwidth,
    height=0.38\textwidth,
    xlabel={Conductance $\phihat$ (mean)},
    ylabel={Spectral norm $\sigma_2$ (mean)},
    xmin=0.02, xmax=0.28,
    ymin=0.55, ymax=1.02,
    legend style={
        at={(0.02,0.02)},
        anchor=south west,
        font=\scriptsize,
        draw=none,
        fill=white,
        fill opacity=0.85,
    },
    title={\small (a) HaluEval (Pythia-160M)},
    title style={at={(0.5,1.02)}, anchor=south},
]

\addplot[
    only marks,
    mark=*,
    mark size=0.8pt,
    factual,
    opacity=0.35,
] table[x=phi, y=sigma2] {content/figures/scatter_comparison/data/halueval_factual.dat};
\addlegendentry{Factual}

\addplot[
    only marks,
    mark=*,
    mark size=0.8pt,
    hallucinate,
    opacity=0.35,
] table[x=phi, y=sigma2] {content/figures/scatter_comparison/data/halueval_hallucinated.dat};
\addlegendentry{Hallucinated}

\draw[-{Latex[length=2mm]}, thick, hallucinate!80, dashed]
    (axis cs:0.16,0.78) -- (axis cs:0.08,0.92)
    node[pos=0.5, above left, font=\tiny, text=hallucinate!90] {bottleneck};

\end{axis}

\begin{axis}[
    academic-single,
    at={(haluplot.outer east)},
    anchor=outer west,
    xshift=0.5cm,
    width=0.48\textwidth,
    height=0.38\textwidth,
    xlabel={Conductance $\phihat$ (mean)},
    ylabel={},
    xmin=0.02, xmax=0.28,
    ymin=0.55, ymax=1.02,
    legend style={
        at={(0.02,0.02)},
        anchor=south west,
        font=\scriptsize,
        draw=none,
        fill=white,
        fill opacity=0.85,
    },
    title={\small (b) MedHallu (Pythia-160M)},
    title style={at={(0.5,1.02)}, anchor=south},
]

\addplot[
    only marks,
    mark=*,
    mark size=1.0pt,
    factual,
    opacity=0.4,
] table[x=phi, y=sigma2] {content/figures/scatter_comparison/data/medhallu_factual.dat};
\addlegendentry{Factual}

\addplot[
    only marks,
    mark=*,
    mark size=1.0pt,
    hallucinate,
    opacity=0.4,
] table[x=phi, y=sigma2] {content/figures/scatter_comparison/data/medhallu_hallucinated.dat};
\addlegendentry{Hallucinated}

\draw[-{Latex[length=2mm]}, thick, hallucinate!80, dashed]
    (axis cs:0.08,0.92) -- (axis cs:0.12,0.86)
    node[pos=0.5, below right, font=\tiny, text=hallucinate!90] {diffuse};

\end{axis}
\end{tikzpicture}

\caption{\textbf{Conductance--spectral norm scatter reveals regime-dependent polarity.}
Each point is one sample; axes show conductance $\phihat$ and spectral norm $\sigma_2$
averaged across all heads and layers (Pythia-160M, 500 subsampled per class for HaluEval).
The near-perfect anti-correlation ($\rho{=}{-}0.99$) confirms the Cheeger inequality:
low $\phihat$ implies high $\sigma_2$ (small spectral gap).
\textbf{(a)}~HaluEval: hallucinated samples cluster at low $\phihat$ / high $\sigma_2$
(bottleneck regime, Cohen's $d{=}{-}2.6$).
\textbf{(b)}~MedHallu: hallucinated samples shift toward higher $\phihat$ / lower $\sigma_2$
(diffuse regime, Cohen's $d{=}{+}0.3$).
The polarity reversal between data sets visualizes the two-sided diagnostic:
the same spectral signature detects opposite failure modes across data sets.}
\label{fig:phi-sigma2-scatter}
\end{figure*}

Conductance also dissociates from simpler attention statistics: mean attention
entropy and conductance are uncorrelated ($\rho{<}0.04$) and entropy itself
has near-zero discriminative power (Cohen's $d{=}0.04$ on HaluEval vs.\
$d{=}{-}2.6$ for $\phihat$), so conductance captures
graph-theoretic transport structure not reducible to per-row entropy.

\paragraph{Polarity variation reflects regime-dependent failure modes.}
Between-dataset polarity variation is \emph{consistent with} the two-sided theory
and \emph{supported} by tercile analysis. Stratifying samples by OC tercile
reveals which failure mode dominates per data set.
On HaluEval, hallucinations cluster in the \emph{low}-OC tercile (mean 47.5\%
vs.\ 33\% expected), consistent with bottleneck routing (attention over-concentrating
on a few tokens).
On MedHallu, hallucinations cluster in the \emph{high}-OC tercile (mean 45.1\%),
consistent with diffuse routing (attention spread broadly across tokens).
The same diagnostic (OC) carries signal for failure in both cases; the
\emph{direction} of pathology differs (visualized in \cref{fig:phi-sigma2-scatter}).
The reversal is directional as predicted, but \emph{asymmetric in strength}: the
HaluEval bottleneck signal is markedly stronger than the MedHallu diffuse signal
(\cref{tab:main-results}), so the prediction is borne out in sign more sharply
than in magnitude.
Full tercile distributions appear in the Online Supplement, \suppref{app:complementary-diagnostics}.
Matched null baselines (\cref{sec:cheeger}) confirm these signatures reflect
learned structure rather than finite-size estimator artifacts: z-score
normalization against entropy-matched and degree-preserving nulls improves
AUROC by 6--8 points.

\paragraph{Why the polarity reverses: the architectural-signature account.}
The architectural-fraction analysis of \cref{sec:cheeger}
(\cref{tab:landscape-empirical-summary}) offers a structural account
of the dataset-specific polarity reversal.  Architectures differ
distributionally in the fraction of heads falling below the $1/5$ temporal-cut floor
(GPT-2 $\floorViolGPTLo$--$\floorViolGPTHi\%$, Pythia-160M $\floorViolPyLo$--$\floorViolPyHi\%$, Flan-T5 decoder $\floorViolFlanLo$--$\floorViolFlanHi\%$),
not head-by-head.  On HaluEval (hallucinations in the low-OC tercile) the bottleneck-routing
signature is the discriminative one, while on MedHallu (hallucinations in the high-OC
tercile) the diffuse-routing signature is.
Architectures with a larger bottleneck-regime fraction therefore polarise more
on HaluEval; architectures with fewer leave more diffuse-signal weight and
polarise more on MedHallu.  The reversal is distributional, not per-head.

\paragraph{Degree-preserving null decomposition.}
\cref{prop:degree-sufficiency} bounds $\sigma_2$'s coupling-beyond-degree term by
$\sqrt{\kappa}-1$, predicting $\sigma_2$ should retain signal once degrees are controlled;
the degree-preserving null tests this and supplies the complementary \emph{empirical} half
about $\phihat$, which the theorem does not predict.  In the configurations studied,
$\phihat$'s discrimination is largely the \emph{degree distribution} (which tokens receive
attention): its z-AUROC drops to near chance (0.51--0.57) under degree-preserving nulls,
while $\sigma_2$ retains 0.72--0.80, capturing coupling beyond degrees
(\suppref{app:null-empirical}).  The separation holds across length quartiles within each
data set (ANOVA $p>0.05$), so it is not a sequence-length artifact.

\subsection{Q2: Does the Antisymmetric Axis Detect Directional Failures?}
\label{sec:eval-q2}



\begin{figure*}[!ht]
\centering
\resizebox{\textwidth}{!}{%
\begin{tikzpicture}
\begin{axis}[
    academic-single-bar,
    name=barchart,
    scale only axis,
    width=0.24\textwidth,
    height=0.22\textwidth,
    ybar=1.5pt,
    bar width=5pt,
    ylabel={$G$ std LC-AUROC},
    ymin=0.40, ymax=0.92,
    ytick={0.4, 0.5, 0.6, 0.7, 0.8, 0.9},
    ymajorgrids=true,
    xmajorgrids=false,
    symbolic x coords={GPT-2, BERT, Pythia, FT5-C, FT5-D},
    xtick=data,
    x tick label style={font=\scriptsize, rotate=35, anchor=north east},
    enlarge x limits=0.15,
    every axis plot/.append style={fill opacity=0.85},
    title={\small (a) When $G$ activates},
    title style={at={(0.5,1.05)}, anchor=south},
    legend style={
        font=\scriptsize,
        draw=none,
        legend columns=3,
        /tikz/every even column/.append style={column sep=4pt},
        at={(0.5,-0.32)},
        anchor=north,
    },
    legend cell align=left,
    every error bar/.style={line width=0.7pt},
]

\addplot[fill=hallucinate, draw=hallucinate!80!black,
    error bars/y dir=both, error bars/y explicit,
    error bars/error bar style={hallucinate!80!black}]
    table[x=model, y=value, y error=err]
    {content/figures/g_temporal_isolation/data/g_std_halueval.dat};
\addlegendentry{HaluEval}

\addplot[fill=spectral, draw=spectral!80!black,
    error bars/y dir=both, error bars/y explicit,
    error bars/error bar style={spectral!80!black}]
    table[x=model, y=value, y error=err]
    {content/figures/g_temporal_isolation/data/g_std_truthfulqa.dat};
\addlegendentry{TruthfulQA}

\addplot[fill=factual, draw=factual!80!black,
    error bars/y dir=both, error bars/y explicit,
    error bars/error bar style={factual!80!black}]
    table[x=model, y=value, y error=err]
    {content/figures/g_temporal_isolation/data/g_std_medhallu.dat};
\addlegendentry{MedHallu}

\draw[dashed, black!40, line width=0.6pt]
    ([xshift=-12pt]axis cs:GPT-2, 0.5) -- ([xshift=12pt]axis cs:FT5-D, 0.5);
\node[anchor=east, font=\tiny, text=black!50] at ([xshift=-14pt]axis cs:GPT-2, 0.5) {chance};

\end{axis}

\begin{axis}[
    academic-single,
    at={(barchart.outer east)},
    anchor=outer west,
    xshift=0.1cm,
    name=panelb,
    scale only axis,
    width=0.24\textwidth,
    height=0.22\textwidth,
    xlabel={Depth percentile (\%)},
    ylabel={$G$ (layer mean)},
    xmin=-3, xmax=103,
    ymin=0.30, ymax=0.75,
    xtick={0,25,50,75,100},
    title={\small (b) Cross-attn.\ grounding},
    title style={at={(0.5,1.02)}, anchor=south},
]


\addplot[name path=ft5_fac_hi, draw=none, forget plot]
    table[x=depth, y=hi] {content/figures/g_temporal_isolation/data/g_depth_flant5dec_factual.dat};
\addplot[name path=ft5_fac_lo, draw=none, forget plot]
    table[x=depth, y=lo] {content/figures/g_temporal_isolation/data/g_depth_flant5dec_factual.dat};
\addplot[fill=factual, fill opacity=0.15, forget plot]
    fill between[of=ft5_fac_hi and ft5_fac_lo];
\addplot[factual, thick, forget plot]
    table[x=depth, y=mean] {content/figures/g_temporal_isolation/data/g_depth_flant5dec_factual.dat};

\addplot[name path=ft5_hal_hi, draw=none, forget plot]
    table[x=depth, y=hi] {content/figures/g_temporal_isolation/data/g_depth_flant5dec_hallucinated.dat};
\addplot[name path=ft5_hal_lo, draw=none, forget plot]
    table[x=depth, y=lo] {content/figures/g_temporal_isolation/data/g_depth_flant5dec_hallucinated.dat};
\addplot[fill=hallucinate, fill opacity=0.15, forget plot]
    fill between[of=ft5_hal_hi and ft5_hal_lo];
\addplot[hallucinate, thick, forget plot]
    table[x=depth, y=mean] {content/figures/g_temporal_isolation/data/g_depth_flant5dec_hallucinated.dat};


\addplot[name path=gpt2b_fac_hi, draw=none, forget plot]
    table[x=depth, y=hi] {content/figures/g_temporal_isolation/data/g_depth_gpt2_factual.dat};
\addplot[name path=gpt2b_fac_lo, draw=none, forget plot]
    table[x=depth, y=lo] {content/figures/g_temporal_isolation/data/g_depth_gpt2_factual.dat};
\addplot[fill=factual, fill opacity=0.07, forget plot]
    fill between[of=gpt2b_fac_hi and gpt2b_fac_lo];
\addplot[factual, thick, dashed, forget plot]
    table[x=depth, y=mean] {content/figures/g_temporal_isolation/data/g_depth_gpt2_factual.dat};

\addplot[name path=gpt2b_hal_hi, draw=none, forget plot]
    table[x=depth, y=hi] {content/figures/g_temporal_isolation/data/g_depth_gpt2_hallucinated.dat};
\addplot[name path=gpt2b_hal_lo, draw=none, forget plot]
    table[x=depth, y=lo] {content/figures/g_temporal_isolation/data/g_depth_gpt2_hallucinated.dat};
\addplot[fill=hallucinate, fill opacity=0.07, forget plot]
    fill between[of=gpt2b_hal_hi and gpt2b_hal_lo];
\addplot[hallucinate, thick, dashed, forget plot]
    table[x=depth, y=mean] {content/figures/g_temporal_isolation/data/g_depth_gpt2_hallucinated.dat};

\draw[dashed, black!40, line width=0.6pt] (axis cs:-3,0.5) -- (axis cs:103,0.5);
\node[anchor=south west, font=\tiny, text=black!50] at (axis cs:1,0.505) {$G{=}0.5$};

\end{axis}

\begin{axis}[
    academic-single,
    at={(panelb.outer east)},
    anchor=outer west,
    xshift=0.1cm,
    name=panelc,
    scale only axis,
    width=0.24\textwidth,
    height=0.22\textwidth,
    xlabel={Depth percentile (\%)},
    ylabel={},
    yticklabel=\empty,
    xmin=-3, xmax=103,
    ymin=0.30, ymax=0.75,
    xtick={0,25,50,75,100},
    title={\small (c) RoPE positional decay},
    title style={at={(0.5,1.02)}, anchor=south},
]


\addplot[name path=py_fac_hi, draw=none, forget plot]
    table[x=depth, y=hi] {content/figures/g_temporal_isolation/data/g_depth_pythia160m_factual.dat};
\addplot[name path=py_fac_lo, draw=none, forget plot]
    table[x=depth, y=lo] {content/figures/g_temporal_isolation/data/g_depth_pythia160m_factual.dat};
\addplot[fill=factual, fill opacity=0.15, forget plot]
    fill between[of=py_fac_hi and py_fac_lo];
\addplot[factual, thick, forget plot]
    table[x=depth, y=mean] {content/figures/g_temporal_isolation/data/g_depth_pythia160m_factual.dat};

\addplot[name path=py_hal_hi, draw=none, forget plot]
    table[x=depth, y=hi] {content/figures/g_temporal_isolation/data/g_depth_pythia160m_hallucinated.dat};
\addplot[name path=py_hal_lo, draw=none, forget plot]
    table[x=depth, y=lo] {content/figures/g_temporal_isolation/data/g_depth_pythia160m_hallucinated.dat};
\addplot[fill=hallucinate, fill opacity=0.15, forget plot]
    fill between[of=py_hal_hi and py_hal_lo];
\addplot[hallucinate, thick, forget plot]
    table[x=depth, y=mean] {content/figures/g_temporal_isolation/data/g_depth_pythia160m_hallucinated.dat};


\addplot[name path=gpt2c_fac_hi, draw=none, forget plot]
    table[x=depth, y=hi] {content/figures/g_temporal_isolation/data/g_depth_gpt2_factual.dat};
\addplot[name path=gpt2c_fac_lo, draw=none, forget plot]
    table[x=depth, y=lo] {content/figures/g_temporal_isolation/data/g_depth_gpt2_factual.dat};
\addplot[fill=factual, fill opacity=0.07, forget plot]
    fill between[of=gpt2c_fac_hi and gpt2c_fac_lo];
\addplot[factual, thick, dashed, forget plot]
    table[x=depth, y=mean] {content/figures/g_temporal_isolation/data/g_depth_gpt2_factual.dat};

\addplot[name path=gpt2c_hal_hi, draw=none, forget plot]
    table[x=depth, y=hi] {content/figures/g_temporal_isolation/data/g_depth_gpt2_hallucinated.dat};
\addplot[name path=gpt2c_hal_lo, draw=none, forget plot]
    table[x=depth, y=lo] {content/figures/g_temporal_isolation/data/g_depth_gpt2_hallucinated.dat};
\addplot[fill=hallucinate, fill opacity=0.07, forget plot]
    fill between[of=gpt2c_hal_hi and gpt2c_hal_lo];
\addplot[hallucinate, thick, dashed, forget plot]
    table[x=depth, y=mean] {content/figures/g_temporal_isolation/data/g_depth_gpt2_hallucinated.dat};

\draw[dashed, black!40, line width=0.6pt] (axis cs:-3,0.5) -- (axis cs:103,0.5);
\node[anchor=south west, font=\tiny, text=black!50] at (axis cs:1,0.505) {$G{=}0.5$};

\end{axis}

\coordinate (legendanchor) at ($(panelb.below south east)!0.5!(panelc.below south west)+(0,-6pt)$);
\node[anchor=north, font=\scriptsize] at (legendanchor) {%
    \tikz[baseline=-0.5ex]{%
        \draw[factual, thick] (0,0) -- (0.4,0); \node[right, inner sep=1.5pt] at (0.4,0) {Factual};
        \draw[hallucinate, thick] (2.0,0) -- (2.4,0); \node[right, inner sep=1.5pt] at (2.4,0) {Halluc.};
        \draw[black!50, thick, dashed] (4.0,0) -- (4.4,0); \node[right, inner sep=1.5pt] at (4.4,0) {GPT-2 (dashed)};
    }%
};
\end{tikzpicture}%
}

\caption{\textbf{Temporal isolation ($G$) is sparse but architecture- and position-encoding-dependent.}
\textbf{(a)}~LC-AUROC for $G$ std across all 15 model-dataset combinations with 95\%
bootstrap CIs.  Most configurations cluster near chance; Flan-T5 decoder/HaluEval
(0.78) and Pythia/HaluEval (0.82) are exceptions.
\textbf{(b)}~Per-layer $G$ profiles on HaluEval (mean $\pm\,1$ std ribbons).
Flan-T5 decoder (solid): clear class separation; factual $G$ dips below
0.5 in middle layers, consistent with decoder self-attention relaxing toward
symmetric transport when cross-attention grounding succeeds.
GPT-2 (dashed): ribbons overlap (mean gap ${<}\,0.01$), the null case.
\textbf{(c)}~Pythia-160M (solid) shows \emph{reversed polarity}: factual $G >$
hallucinated $G$ (mean gap $-0.025$), concentrated in middle-to-late layers.
This is consistent with a RoPE positional decay hypothesis: rotary embeddings
impose a structured recency bias that factual generation must override to
attend to earlier context, while hallucinated generation defaults to the
decay pattern. GPT-2 (dashed, repeated as null reference) uses learned absolute
positions and shows no such separation.}
\label{fig:g-barchart}
\end{figure*}

The asymmetric coefficient $G$ shows weak discrimination for most configurations
(0.53--0.63 LC-AUROC; Figure~\ref{fig:g-barchart}), indicating that temporal
isolation is \emph{not} the dominant failure mode across the architectures studied.
Two exceptions on HaluEval stand out: Flan-T5 decoder self-attention (0.78) and
Pythia-160M (0.82).  We read the accounts that follow as explanatory hypotheses
consistent with the patterns, not as validated mechanisms.

The orientation blindness theorem (\cref{thm:orientation-blindness}) is established by proof; the experiments characterize the conditions under which the antisymmetric axis carries discriminative signal.  They do not test the theorem, which holds regardless of whether temporal isolation is currently active.

For Flan-T5, an interpretable hypothesis: decoder self-attention is informed by
cross-attention to encoder representations, so its $G$ may track whether grounding
succeeds.  Factual $G$ dips below $0.5$ in middle layers (Figure~\ref{fig:g-barchart}b),
consistent with relaxation toward symmetric transport when context is adequate; the
separation is absent in GPT-2 (gap ${<}\,0.01$), which lacks that pathway
(\cref{prop:asymmetry-energy}).

Pythia's high $G$ on HaluEval ($0.82$ [$0.80$, $0.84$]) does not fit this account, since
it is decoder-only.  Instead its polarity is \emph{reversed}: factual samples show higher
$G$ than hallucinated (gap $-0.025$, layers 4--10; Figure~\ref{fig:g-barchart}c).  A
position-encoding hypothesis is consistent: RoPE~\citep{Su2024RoFormer} induces a recency
bias that factual retrieval must override, raising temporal directionality, whereas
GPT-2's learned absolute embeddings show no such separation.  Both accounts are
unconfirmed and dataset-specific (Pythia's $G$ is near chance on TruthfulQA $0.60$ and
MedHallu $0.61$); a response-length explanation does not apply, since HaluEval responses
are the shortest of the three benchmarks (median $7$ tokens vs.\ $10$ and $31$).

\begin{table*}[t]
\centering
\caption{\textbf{Feature-set ablation: does the asymmetry axis $G$ add
  discriminative signal beyond the symmetric axis?} Length-controlled AUROC
  (LC-AUROC) with 95\% bootstrap confidence intervals (super/subscript
  notation) of logistic-regression probes fit on z-scored
  feature sets via 5-fold cross-validation (out-of-fold scores; same supervised
  convention as the EigenTrack baseline), using the main-table aggregations
  ($\phihat$ CVaR$_{75}$, $\phihat$ CVaR$_{25}$, $\sigma_2$ std, $G$ std).
  $\Delta$ is the paired difference (full minus symmetric); $\Delta > 0$ means
  adding $G$ improves discrimination, with a paired-bootstrap CI on the difference.
  \textbf{Bold} $\Delta$: CI excludes zero in favor of $G$;
  \emph{italic} $\Delta$: CI excludes zero against $G$ (reversal).
  Llama-3.1-8B rows are the modern-model generalization check, not part of the
  main results grid.}
\label{tab:g-ablation}
\footnotesize
\begin{filecontents*}{data/tables/g_ablation.csv}
# Feature-set G-ablation table (LaTeX-ready, super/subscript CIs)
# Generated by: scripts/collect_ablation_macros.py -- DO NOT EDIT BY HAND
# Source: data/ablation/feature_set_ablation.csv
# Run date: 2026-06-10; bold = CI excludes zero in favor of G,
# italic = CI excludes zero against G; Llama rows = generalization check
Model,Dataset,n,sym,full,gonly,delta
GPT-2,HaluEval,5904,0.750$_{\scriptscriptstyle .726}^{\scriptscriptstyle .780}$,0.750$_{\scriptscriptstyle .725}^{\scriptscriptstyle .779}$,0.627$_{\scriptscriptstyle .603}^{\scriptscriptstyle .656}$,-0.000$_{\scriptscriptstyle -.006}^{\scriptscriptstyle +.005}$
,TruthfulQA,1963,0.594$_{\scriptscriptstyle .572}^{\scriptscriptstyle .621}$,0.604$_{\scriptscriptstyle .580}^{\scriptscriptstyle .631}$,0.581$_{\scriptscriptstyle .558}^{\scriptscriptstyle .606}$,+0.010$_{\scriptscriptstyle -.002}^{\scriptscriptstyle +.021}$
,MedHallu,618,0.588$_{\scriptscriptstyle .549}^{\scriptscriptstyle .642}$,0.521$_{\scriptscriptstyle .519}^{\scriptscriptstyle .588}$,0.609$_{\scriptscriptstyle .562}^{\scriptscriptstyle .658}$,-0.068$_{\scriptscriptstyle -.094}^{\scriptscriptstyle +.012}$
Pythia,HaluEval,5904,0.828$_{\scriptscriptstyle .802}^{\scriptscriptstyle .856}$,0.837$_{\scriptscriptstyle .813}^{\scriptscriptstyle .864}$,0.816$_{\scriptscriptstyle .790}^{\scriptscriptstyle .841}$,\textbf{+0.009$_{\scriptscriptstyle +.002}^{\scriptscriptstyle +.016}$}
,TruthfulQA,1963,0.543$_{\scriptscriptstyle .527}^{\scriptscriptstyle .570}$,0.587$_{\scriptscriptstyle .562}^{\scriptscriptstyle .611}$,0.594$_{\scriptscriptstyle .572}^{\scriptscriptstyle .621}$,\textbf{+0.044$_{\scriptscriptstyle +.013}^{\scriptscriptstyle +.066}$}
,MedHallu,618,0.530$_{\scriptscriptstyle .519}^{\scriptscriptstyle .593}$,0.618$_{\scriptscriptstyle .578}^{\scriptscriptstyle .672}$,0.605$_{\scriptscriptstyle .571}^{\scriptscriptstyle .661}$,\textbf{+0.089$_{\scriptscriptstyle +.013}^{\scriptscriptstyle +.126}$}
BERT,HaluEval,5904,0.547$_{\scriptscriptstyle .527}^{\scriptscriptstyle .581}$,0.569$_{\scriptscriptstyle .542}^{\scriptscriptstyle .600}$,0.578$_{\scriptscriptstyle .552}^{\scriptscriptstyle .608}$,+0.022$_{\scriptscriptstyle -.028}^{\scriptscriptstyle +.061}$
,TruthfulQA,1963,0.538$_{\scriptscriptstyle .522}^{\scriptscriptstyle .565}$,0.581$_{\scriptscriptstyle .558}^{\scriptscriptstyle .608}$,0.592$_{\scriptscriptstyle .571}^{\scriptscriptstyle .618}$,\textbf{+0.044$_{\scriptscriptstyle +.011}^{\scriptscriptstyle +.067}$}
,MedHallu,618,0.566$_{\scriptscriptstyle .539}^{\scriptscriptstyle .624}$,0.576$_{\scriptscriptstyle .544}^{\scriptscriptstyle .633}$,0.592$_{\scriptscriptstyle .557}^{\scriptscriptstyle .645}$,+0.010$_{\scriptscriptstyle -.011}^{\scriptscriptstyle +.024}$
Flan-T5 (cross),HaluEval,5904,0.723$_{\scriptscriptstyle .696}^{\scriptscriptstyle .748}$,0.722$_{\scriptscriptstyle .695}^{\scriptscriptstyle .748}$,0.535$_{\scriptscriptstyle .521}^{\scriptscriptstyle .565}$,-0.001$_{\scriptscriptstyle -.004}^{\scriptscriptstyle +.004}$
,TruthfulQA,1951,0.538$_{\scriptscriptstyle .520}^{\scriptscriptstyle .566}$,0.562$_{\scriptscriptstyle .540}^{\scriptscriptstyle .588}$,0.525$_{\scriptscriptstyle .517}^{\scriptscriptstyle .557}$,\textbf{+0.024$_{\scriptscriptstyle +.000}^{\scriptscriptstyle +.048}$}
,MedHallu,618,0.679$_{\scriptscriptstyle .630}^{\scriptscriptstyle .724}$,0.686$_{\scriptscriptstyle .636}^{\scriptscriptstyle .730}$,0.581$_{\scriptscriptstyle .549}^{\scriptscriptstyle .640}$,+0.007$_{\scriptscriptstyle -.005}^{\scriptscriptstyle +.020}$
Flan-T5 (dec),HaluEval,5904,0.768$_{\scriptscriptstyle .745}^{\scriptscriptstyle .793}$,0.756$_{\scriptscriptstyle .731}^{\scriptscriptstyle .781}$,0.779$_{\scriptscriptstyle .756}^{\scriptscriptstyle .800}$,\emph{-0.012$_{\scriptscriptstyle -.017}^{\scriptscriptstyle -.008}$}
,TruthfulQA,1951,0.550$_{\scriptscriptstyle .531}^{\scriptscriptstyle .578}$,0.562$_{\scriptscriptstyle .538}^{\scriptscriptstyle .590}$,0.563$_{\scriptscriptstyle .537}^{\scriptscriptstyle .590}$,+0.012$_{\scriptscriptstyle -.011}^{\scriptscriptstyle +.031}$
,MedHallu,618,0.546$_{\scriptscriptstyle .526}^{\scriptscriptstyle .598}$,0.541$_{\scriptscriptstyle .523}^{\scriptscriptstyle .592}$,0.582$_{\scriptscriptstyle .551}^{\scriptscriptstyle .635}$,-0.005$_{\scriptscriptstyle -.035}^{\scriptscriptstyle +.027}$
Llama-3.1-8B,HaluEval,5904,0.721$_{\scriptscriptstyle .696}^{\scriptscriptstyle .747}$,0.745$_{\scriptscriptstyle .721}^{\scriptscriptstyle .770}$,0.539$_{\scriptscriptstyle .521}^{\scriptscriptstyle .566}$,\textbf{+0.025$_{\scriptscriptstyle +.014}^{\scriptscriptstyle +.034}$}
,TruthfulQA,1963,0.609$_{\scriptscriptstyle .585}^{\scriptscriptstyle .635}$,0.608$_{\scriptscriptstyle .584}^{\scriptscriptstyle .635}$,0.541$_{\scriptscriptstyle .525}^{\scriptscriptstyle .567}$,-0.001$_{\scriptscriptstyle -.008}^{\scriptscriptstyle +.006}$
,MedHallu,618,0.547$_{\scriptscriptstyle .530}^{\scriptscriptstyle .606}$,0.538$_{\scriptscriptstyle .524}^{\scriptscriptstyle .595}$,0.588$_{\scriptscriptstyle .546}^{\scriptscriptstyle .640}$,-0.009$_{\scriptscriptstyle -.037}^{\scriptscriptstyle +.019}$
\end{filecontents*}
\pgfplotstabletypeset[
  col sep=comma,
  string type,
  skip first n=5,  
  columns={Model, Dataset, n, sym, full, gonly, delta},
  every head row/.style={before row=\toprule, after row=\midrule},
  every last row/.style={after row=\bottomrule},
  every row no 15/.style={before row=\midrule},  
  columns/Model/.style={column name=Model, column type=l},
  columns/Dataset/.style={column name=Data set, column type=l},
  columns/n/.style={column name=$n$, column type=r},
  columns/sym/.style={column name={$\{\phihat, \sigma_2\}$}, column type=c},
  columns/full/.style={column name={$\{\phihat, \sigma_2, G\}$}, column type=c},
  columns/gonly/.style={column name={$\{G\}$}, column type=c},
  columns/delta/.style={column name={$\Delta$ (full $-$ sym)}, column type=c},
]{data/tables/g_ablation.csv}
\end{table*}

\paragraph{Feature-set ablation: the antisymmetric axis adds signal where
directional failure modes are active.}
The orientation-blindness theorem guarantees that symmetric functionals
cannot access $\Masym$; it does not by itself imply that the inaccessible
component carries usable signal.  Table~\ref{tab:g-ablation} tests this
directly, comparing logistic probes fit on the symmetric feature set
$\{\phihat\,\text{CVaR}_{75}, \phihat\,\text{CVaR}_{25}, \sigma_2\,\text{std}\}$
against the same set augmented with $G$ std (5-fold out-of-fold scoring,
the same supervised convention as the EigenTrack baseline; LC-AUROC
throughout).  Adding $G$ yields CI-significant gains in
\gabNumSigPositive{} of \gabNumCells{} main-grid cells spanning three
architectures, largest on \gabMaxDeltaCell{}
($\Delta = \gabMaxDelta$ [\gabMaxDeltaLo, \gabMaxDeltaHi]); notably,
in the Llama-3.1-8B/HaluEval generalization cell $G$ alone is near chance
(\gabllamaHaluevalGonly{}) yet still contributes
$\Delta = \gabllamaHaluevalDelta$ [\gabllamaHaluevalDeltaLo,
\gabllamaHaluevalDeltaHi] to the pooled probe: orientation information
the symmetric axis does not carry.  Cross-attention cells are neutral, as
expected: $G$ is unreliable for rectangular attention matrices
(\suppref{app:prompt-response-ratio-confounding}).  One reversal occurs
(Flan-T5 decoder on HaluEval,
$\Delta = \gabflantFiveDecoderHaluevalDelta$
[\gabflantFiveDecoderHaluevalDeltaLo,
\gabflantFiveDecoderHaluevalDeltaHi]): there $G$ alone is already strong
(\gabflantFiveDecoderHaluevalGonly{}), and the two axes carry overlapping
rather than complementary signal in that configuration.  The mean paired
difference across the main grid is $\gabMeanDelta$.  The pattern matches
the two-axis reading of the theory: the antisymmetric axis adds
discriminative signal precisely where directional failure modes are
active, and is inert where the theory predicts no orientation signal
exists.

\subsection{Q3: What Does Each Method Class Measure?}
\label{sec:eval-q3}

\paragraph{Different diagnostic objects have structurally different
length exposure.}
Length confounding inflates raw AUROC by up to \confoundMaxDelta{} points
(compare with Table~\ref{tab:main-results}).  The confounding structure is
informative: spectral methods show substantial exposure
($\Delta{=}{+}\confoundOCPythiaHal$ for OC on Pythia/HaluEval), hidden-state features
show moderate exposure ($\Delta \approx \confoundHiddenMeanDelta$), and output entropy
shows near-zero exposure ($\Delta{\approx}\confoundLogitMeanDelta$), as predicted by
the intensive/extensive distinction (\cref{sec:length-confound}).
Raw and per-quartile AUROC breakdowns appear in the Online Supplement (\suppref{suppl:detailed-auroc}, \suppref{app:stratified-auroc}).

\paragraph{Hidden-state probes.}
Hidden-state features detect \emph{that} generation is unreliable
without distinguishing \emph{how}: they do not recover the
bottleneck--diffuse--knowledge-gap taxonomy that the transport
framework provides.  Across 7 of 11 applicable configurations,
LLM-Check hidden-state features achieve best or tied-best LC-AUROC
(0.62--0.72), making them the strongest general-purpose discrimination
signal, though one that is structurally agnostic to failure mode.
Hidden features show substantial length confounding
($\Delta \approx \confoundHiddenMeanDelta$); output entropy is length-robust
($\Delta \approx \confoundLogitMeanDelta$) but more variable (0.52--0.80 LC-AUROC).

\paragraph{Learned classifiers.}
Supervised spectral classifiers access the same diagnostic object
(attention spectra) but through a learned lens, making them vulnerable
to dataset-specific overfitting.  EigenTrack achieves marginal signal
on HaluEval for small models (GPT-2: 0.63, BERT: 0.62) but
near-chance on TruthfulQA and MedHallu across most architectures (see
Online Supplement, \suppref{app:eigentrack}), consistent with overfitting to
dataset-specific spectral signatures, though short-sequence effects
cannot be excluded.  Transport diagnostics (OC, $G$) are zero-shot
and provide interpretable failure mode characterization independent of
training data size or dataset-specific tuning.

\paragraph{TruthfulQA as stress test.}
TruthfulQA induces near-uniform routing statistics across all methods
(0.50--0.57), with hallucinations distributing uniformly across OC terciles
($\approx 33\%$ each).  Short responses (median 10 tokens) limit spectral
observables, so the null hypothesis ``spectral methods fail on short
sequences'' cannot be cleanly distinguished from ``spectral methods correctly
return null for non-routing failures.''  We interpret the near-chance result
as consistent with specificity (knowledge gaps produce no detectable
routing pathology) while acknowledging that formal validation requires a
data set with no length-label correlation, sufficiently long responses, and
known knowledge-gap failures.

\subsection{Transfer to the Modern Panel}
\label{sec:llama-generalization}

The mechanistic grid maximizes measurement depth; the modern panel tests the
diagnostics on the models practitioners use.  We evaluate (i)~modern
decoder-only models in current use: Qwen2.5 (0.5B, 1.5B, 3B) and
SmolLM2-1.7B; and (ii)~LLaMA-3.1-8B (grouped-query attention), roughly an order
of magnitude larger than the mechanistic-grid decoders.  All use the same
canonical spectral-sweep estimator and length-controlled protocol; we report
transport diagnostics only (the LLM-Check and EigenTrack baselines were not
re-run on these models).

\begin{figure*}[tbp]
\centering
\resizebox{\textwidth}{!}{%
\begin{tikzpicture}
\begin{axis}[
    academic-single-bar,
    name=modern,
    scale only axis,
    width=0.34\textwidth,
    height=0.24\textwidth,
    ybar=1.2pt,
    bar width=6pt,
    ylabel={LC-AUROC (HaluEval)},
    ymin=0.40, ymax=0.92,
    ytick={0.4,0.5,0.6,0.7,0.8,0.9},
    ymajorgrids=true, xmajorgrids=false,
    symbolic x coords={Qwen2.5-0.5B,Qwen2.5-1.5B,Qwen2.5-3B,SmolLM2-1.7B,LLaMA-3.1-8B},
    xtick=data,
    x tick label style={font=\scriptsize, rotate=35, anchor=north east},
    enlarge x limits=0.18,
    every axis plot/.append style={fill opacity=0.85},
    title={\small (a) Modern decoder-only models},
    title style={at={(0.5,1.04)}, anchor=south},
    every error bar/.style={line width=0.6pt},
]
\addplot[fill=spectral, draw=spectral!80!black,
    error bars/y dir=both, error bars/y explicit,
    error bars/error bar style={spectral!80!black}]
    table[x=model, y=value, y error=err] {content/figures/model_panel/data/modern_phi.dat};
\addplot[fill=factual, draw=factual!80!black,
    error bars/y dir=both, error bars/y explicit,
    error bars/error bar style={factual!80!black}]
    table[x=model, y=value, y error=err] {content/figures/model_panel/data/modern_sigma2.dat};
\addplot[fill=hallucinate, draw=hallucinate!80!black,
    error bars/y dir=both, error bars/y explicit,
    error bars/error bar style={hallucinate!80!black}]
    table[x=model, y=value, y error=err] {content/figures/model_panel/data/modern_g.dat};
\draw[dashed, black!40, line width=0.6pt]
    ([xshift=-14pt]axis cs:Qwen2.5-0.5B,0.5) -- ([xshift=14pt]axis cs:LLaMA-3.1-8B,0.5);
\node[anchor=south east, font=\tiny, text=black!50]
    at ([xshift=14pt,yshift=1pt]axis cs:LLaMA-3.1-8B,0.5) {chance};
\end{axis}

\begin{axis}[
    academic-single-bar,
    at={(modern.east)}, anchor=west, xshift=0.35cm,
    name=pythia,
    scale only axis,
    width=0.34\textwidth,
    height=0.24\textwidth,
    ybar=1.2pt,
    bar width=6pt,
    ylabel={},
    yticklabel=\empty,
    ymin=0.40, ymax=0.92,
    ytick={0.4,0.5,0.6,0.7,0.8,0.9},
    ymajorgrids=true, xmajorgrids=false,
    symbolic x coords={70M,160M,410M,1B,1.4B},
    xtick=data,
    x tick label style={font=\scriptsize},
    xlabel={Pythia size},
    enlarge x limits=0.14,
    every axis plot/.append style={fill opacity=0.85},
    title={\small (b) Within-family size sweep (Pythia)},
    title style={at={(0.5,1.04)}, anchor=south},
    every error bar/.style={line width=0.6pt},
]
\addplot[fill=spectral, draw=spectral!80!black,
    error bars/y dir=both, error bars/y explicit,
    error bars/error bar style={spectral!80!black}]
    table[x=model, y=value, y error=err] {content/figures/model_panel/data/pythia_phi.dat};
\addplot[fill=factual, draw=factual!80!black,
    error bars/y dir=both, error bars/y explicit,
    error bars/error bar style={factual!80!black}]
    table[x=model, y=value, y error=err] {content/figures/model_panel/data/pythia_sigma2.dat};
\addplot[fill=hallucinate, draw=hallucinate!80!black,
    error bars/y dir=both, error bars/y explicit,
    error bars/error bar style={hallucinate!80!black}]
    table[x=model, y=value, y error=err] {content/figures/model_panel/data/pythia_g.dat};
\draw[dashed, black!40, line width=0.6pt]
    ([xshift=-12pt]axis cs:70M,0.5) -- ([xshift=12pt]axis cs:1.4B,0.5);
\end{axis}

\coordinate (legmid) at
    ($(modern.below south east)!0.5!(pythia.below south west)+(0,-10pt)$);
\node[anchor=north, font=\scriptsize] at (legmid) {%
    \tikz[baseline=-0.4ex]{\fill[spectral,fill opacity=0.85](0,0)rectangle(1.3ex,1.3ex);}\,$\phihat$ CVaR$_{75}$\quad
    \tikz[baseline=-0.4ex]{\fill[factual,fill opacity=0.85](0,0)rectangle(1.3ex,1.3ex);}\,$\sigma_2$ std\,\;(capacity)\qquad
    \tikz[baseline=-0.4ex]{\fill[hallucinate,fill opacity=0.85](0,0)rectangle(1.3ex,1.3ex);}\,$G$ std (direction)%
};
\end{tikzpicture}%
}
\caption{\textbf{Transport diagnostics across modern models and a within-family
size sweep (HaluEval LC-AUROC, 95\% bootstrap CIs).}  Bars are discrete per
model, not a continuous trend.  \textbf{(a)}~On the modern panel the capacity
axis ($\phihat$ CVaR$_{75}$, $\sigma_2$ std; cool bars) is high and stable across
Qwen2.5 $0.5$--$3$B, SmolLM2-1.7B, and LLaMA-3.1-8B (spanning $0.5$--$8$B), while
the asymmetry axis $G$ (vermillion) stays near chance.  \textbf{(b)}~Within the Pythia family the capacity axis is
non-monotonic (peaking at 160M), and $G$ is elevated only at 160M---a
checkpoint-specific spike that does not recur on the modern models.  Both panels
share the $y$-axis.  Exact values and the other benchmarks appear in the Online
Supplement.}
\label{fig:model-panel-bars}
\end{figure*}

The capacity axis transfers cleanly across the modern panel: on HaluEval
$\sigma_2$~std and $\phihat$~CVaR$_{75}$ sit in a stable $0.66$--$0.79$ band across
Qwen2.5 at $0.5$/$1.5$/$3$B, SmolLM2-1.7B, and LLaMA-3.1-8B
(\cref{fig:model-panel-bars}a), with no collapse out to 8B.  The asymmetry axis
$G$ stays near chance ($0.54$--$0.65$) throughout, mirroring the sparse-$G$ pattern
of the mechanistic-grid decoders; notably the elevated $G$ of Pythia-160M does \emph{not}
recur on the modern models, so that spike is checkpoint-specific rather than a
general RoPE-decoder property.  As elsewhere, signal is strongest on HaluEval and
weaker on the less length-confounded TruthfulQA and MedHallu.  We therefore read
the diagnostics as retaining interpretable signal across model families and
scales, robustly so on the modern panel, rather than improving monotonically with
size; a within-family Pythia sweep (\cref{fig:model-panel-bars}b) shows the same
capacity signal is itself non-monotonic in scale.  Exact per-benchmark values for
the full panel (all models, all benchmarks) are in the Online Supplement
(\suppref{sec:model-panel}).

\subsection{Summary}
\label{sec:eval-summary}
In short: conductance ($\phihat$, $\sigma_2$) captures capacity failures with strong
LC-AUROC (0.62--0.84) and CVaR-tail failure-mode characterization; $G$ is empirically
sparse (0.53--0.63) outside two architecture-specific HaluEval exceptions; hidden-state
probes give the strongest general-purpose discrimination (0.62--0.72) but are
failure-mode-agnostic; the capacity axis transfers to the modern panel (Qwen2.5,
SmolLM2-1.7B, LLaMA-3.1-8B) in a stable 0.66--0.79 band with no collapse out to 8B;
and the near-chance TruthfulQA result is consistent with the
framework returning null for non-routing failures (finite-size effects at median 10 tokens
aside).  The contribution is not a single best-performing method but a framework that
characterizes \emph{how} attention routing fails.

\section{Discussion and Implications}
\label{sec:discussion}

The paper's organizing result is an identifiability limit: symmetric spectral
diagnostics of an attention operator are orientation-blind
(\cref{thm:orientation-blindness,prop:ob-converse}), so distinguishing
routing-failure shapes (bottleneck versus diffuse) is a question about which
mathematical object a diagnostic analyses, not which statistic is extracted from
it.  Everything else follows.  The symmetric--antisymmetric partition is a
consequence of Hilbert--Schmidt orthogonality, not a design choice; conductance is
the strongest quantity the surviving symmetric axis supports; the closed-form
bipartite-Cheeger landscape
(\cref{lem:uniform-causal-conductance,cor:uc-one-fifth,cor:uc-sharp-floor}) says
what it means architecturally; and the empirics characterise where each axis
carries signal.  The resulting two-axis diagnostic $(\phi, G)$ yields a falsifiable
polarity prediction (bottleneck routing on HaluEval, diffuse on MedHallu),
borne out in sign under length-controlled evaluation across the tested decoder-only,
encoder-only, and encoder--decoder models.

\paragraph{What the analysis adds over a single statistic.}
Beyond a single SVD statistic, the framework (i) gives a structural mechanism for
the cross-dataset polarity prediction via the two-sided conductance theory; (ii)
separates bottleneck from diffuse failures through CVaR tail analysis; (iii)
explains the $\sigma_2$/$\phihat$ ordering through the Cheeger inequality
$\phi^2/2\le 1-\sigma_2\le 2\phi$; (iv) shows via degree-preserving nulls that
$\phihat$ is largely a degree proxy while $\sigma_2$ retains coupling signal; and
(v) turns the existence of a conductance floor into a per-cut closed-form
landscape (\cref{lem:uniform-causal-conductance}) against which empirical
$\phihat$ reads position-encoding regimes
(\cref{tab:landscape-empirical-summary}).  Conductance $\phi$ is
architecture-universal (Cheeger holds for any weighted graph); $G$ is
architecture-dependent and, in the decoder-only models studied, sparse
(0.53--0.63 LC-AUROC); attention asymmetry there is dominated by positional
structure (causal mask, attention sinks) rather than content-dependent
directional routing.  By the orientation-blindness theorem no symmetric method
can access that axis, so its sparsity reflects the architectures, not the
diagnostic.

\paragraph{Limitations.}
Diagnostics are computed under forced scoring of benchmark-provided responses in a
single forward pass (\cref{sec:eval-protocol}); the results are associational and
concern co-occurrence with the labelled response class after length control, not
generation dynamics.  Matched nulls separate estimator artifacts from learned
structure but do not exclude third-variable confounds.  The diagnostics require
attention tensors (so are inapplicable to API-only models) and are blind to value
geometry and layer normalization; diffuse-but-ungrounded routing and
value-projection errors can yield hallucinations with no routing pathology.
``Zero-shot'' refers to feature computation; calibrating decision polarity needs a
modest set (50--100 examples), and OC polarity is bin-consistent in only about
half (7 of 12) of model--dataset pairings, a finite-size effect (estimator
variance correlates with response length, Spearman $r=-0.55$).  This calibration
requirement is structural, not incidental: recovering orientation is \emph{necessary
but not sufficient} for a regime-stable diagnostic.  A signed statistic resolves the
orientation that \cref{cor:transpose-rigidity} shows no transpose-invariant functional
can; yet the sign of the orientation-to-failure association is itself a regime-dependent,
unidentified parameter---the cross-benchmark polarity reversal between HaluEval and
MedHallu---so decision polarity must be fixed per regime.  This is a second identifiability
gap, stacked on the transpose-invariance limit of \cref{thm:orientation-blindness}: the
first bounds what a symmetric functional can read, the second bounds what even a signed one
delivers without calibration.  Coverage is partial: the diagnostics transfer to LLaMA-3.1-8B (GQA, RoPE;
\cref{sec:llama-generalization}), but sliding-window and mixture-of-experts
routing are untested.

\paragraph{Future directions.}
The clearest next step is to recover the orientation $G$ discards.  Since $G$ is a
\emph{magnitude} certificate (transpose-invariant), the sign of $\Masym$ is read only by
a genuinely signed statistic: aligning $\M-\M^\top$ with an orientation template
separates forward from reverse routing exactly where $G$ and every transpose-invariant
functional cannot.  A synthetic check and an exploratory cross-architecture evaluation
(\suppref{app:signed-directionality}) confirm such a signed score recovers the discarded
orientation and carries complementary signal where content-directional routing is strong
(though not as a uniform detector).  Lifting this to the spectral level (a directional
theory that retains $\Masym$ in the bipartite dilation $\Hh(\M)$ while keeping the
$\phi$-side analysis intact, as magnetic-Laplacian and directed/Hodge Cheeger
inequalities~\citep{LangeLiuPeyerimhoffPost2015MagneticCheeger,chung2005laplacians} do
for orientation alone) remains open.  Separately, the cut/volume template
(\cref{prop:uc-cut-vol-identities,prop:dilation-total-vol}) extends to other masks
(window, exponential, RoPE) and to mixture-of-experts dispatch as a second bipartite
routing operator, suggesting a taxonomy of attention by conductance-landscape geometry.

\medskip\noindent
The orientation-blindness theorem and the closed-form Cheeger landscape together
establish that attention diagnostics partition by which axis they access, and that
partition has architectural consequences empirical conductance signatures can read.


\section{Related Work}
\label{sec:related-work}

We conclude by situating the transport framework within the broader landscape of hallucination detection methods.
The closest prior work (Lookback Lens~\citep{Chuang2024LookbackLens} and
LapEigvals~\citep{Binkowski2025LapEigvals}) also analyzes attention, but without
degree normalization (introducing length dependence) or orthogonal decomposition
(conflating capacity and orientation failures). LLM-Check~\citep{Sriramanan2024LLMCheck}
achieves competitive detection through hidden states but does not structurally distinguish routing failure modes.
The transport framework's contribution is not ``better detection'' but \emph{diagnostic
decomposition}: partitioning routing failures into interpretable, orthogonal axes with
formal guarantees (Cheeger inequality, orientation blindness theorem) that explain what
each axis can and cannot detect.

We organize the landscape by \emph{diagnostic object} (the representation from which the
signal is derived), progressing from output-level (cheapest access) to transport-level
(richest structure): methods with deeper access can distinguish \emph{how} generation fails,
lighter ones only \emph{that} it is unreliable.

\subsection{Output-Level Methods}

Output-level methods require only generation samples or token probabilities, making them
applicable to black-box APIs.

\paragraph{Self-consistency.}
SelfCheckGPT~\citep{Manakul2023SelfCheckGPT} generates multiple responses to the same
prompt and uses inter-sample variance (measured via BERTScore, QA, or n-gram overlap) as
a hallucination proxy. This approach is principled for \emph{confabulations} (arbitrary
incorrect generations) but misses \emph{consistent} errors where the model reliably
produces the same wrong answer. It also inherits the cost of multiple forward passes.

\paragraph{Semantic entropy.}
\citet{farquhar2024probes} measure uncertainty at the \emph{meaning} level by clustering
sampled responses into semantic equivalence classes and computing entropy over these clusters.
This addresses a fundamental limitation of token-level entropy: paraphrases of the same
correct answer inflate token entropy without indicating unreliability. Semantic entropy
achieves strong detection across tasks and generalizes to unseen prompts. However, it
requires multiple generation passes and semantic similarity computation, and cannot
distinguish failure \emph{modes}: high semantic entropy indicates uncertainty, not the
structural cause (routing failure vs.\ knowledge gap vs.\ calibration error).

\subsection{External Verification}

\paragraph{Factual decomposition.}
FActScore~\citep{Min2023FActScore} decomposes long-form generations into atomic factual
claims and verifies each against Wikipedia, achieving high precision on biographical
generation. SAFE~\citep{Wei2024SAFE} extends this to search-augmented verification using
language models as judges. These methods achieve high precision on factual claims but require
external knowledge bases, scale poorly to real-time detection, and cannot assess claims
outside the knowledge base's coverage. Importantly, they evaluate \emph{factual accuracy}
rather than \emph{generation process integrity}: a factually correct response generated
through pathological routing would pass verification, while our transport diagnostics would
flag the routing anomaly.

\subsection{Internal-State Methods}

Methods that probe hidden representations occupy an intermediate position: they access
richer structure than output-level methods but do not model the attention mechanism's
routing function.

\paragraph{Knowledge localization and hidden-state probes.}
Feed-forward layers store factual associations as key-value
memories~\citep{Geva2021Transformer,Meng2022Locating}, suggesting that some
hallucinations arise from retrieval failures in MLP parameters rather than
routing failures in attention, a distinction our framework makes explicit.

LLM-Check~\citep{Sriramanan2024LLMCheck} and INSIDE
(EigenScore)~\citep{Chen2024INSIDE} extract covariance spectra from
hidden-state activations, using eigenvalue statistics as hallucination
features. Both approaches capture population-level representational
change correlated with hallucination and achieve strong detection
performance (LLM-Check is best or tied-best in 7/11 model--dataset
configurations in our evaluation; INSIDE is structurally analogous to
LLM-Check's hidden branch (both read eigenvalues of a hidden-state covariance
matrix) and falls in the same orientation-blind
class by \cref{thm:orientation-blindness}). However, the signal
indicates \emph{that} generation is unreliable without distinguishing
\emph{how}: bottleneck routing, diffuse routing, and knowledge gaps
all produce similar representational change. Hidden-state features
also show substantial length confounding ($\Delta \approx \confoundHiddenMeanDelta$
between raw and length-controlled AUROC; \cref{sec:length-confound}).
The confound itself is established prior art: \citet{Santilli2025LengthBias}
formally prove that a shared length bias in the uncertainty score and the
correctness function non-randomly skews AUROC rankings, and
\citet{Janiak2025IllusionProgress} show that simple response-length heuristics
can match complex detectors under standard automatic evaluation; our length-controlled
protocol operationalizes the control these works call for.

\paragraph{Supervised spectral classifiers.}
EigenTrack~\citep{Ettori2025EigenTrack} streams covariance-spectrum statistics of
hidden activations into a learned (recurrent) classifier; because these statistics
are referenced to a Marchenko--Pastur baseline with aspect ratio $\gamma = D/n$,
they inherit irreducible length dependence (derivation in the Online Supplement,
\suppref{app:eigentrack}). In our evaluation, a linear-classifier port achieves
marginal signal on HaluEval for small models (GPT-2: 0.63, BERT: 0.62) but
near-chance on TruthfulQA and MedHallu.

\subsection{Attention-Based Methods}

Attention matrices encode the routing decisions that determine information flow through
the transformer. A growing body of work analyzes attention patterns directly.

\paragraph{Attention as grounding signal.}
Lookback Lens~\citep{Chuang2024LookbackLens} identifies contextual hallucinations
from the ratio of attention to context versus newly generated tokens, closest in
spirit to our work, since both read hallucination signal from attention.  Two
differences matter.  \textit{Structurally}, it reduces each attention map to a
single scalar, whereas the transport view treats the full operator $\M$ and its
$\Msym/\Masym$ decomposition; the lookback ratio is itself transpose-invariant and
falls in the orientation-blind class of \cref{thm:orientation-blindness}.
\textit{Methodologically}, it trains a supervised classifier on the lookback
features, placing it outside the zero-shot regime that frames our empirical
comparison; we therefore retain it as a related-work comparison rather than a
baseline.  Its orientation-blind prediction is testable independently of the
classifier and would manifest as failure on the $G$-dominant Pythia/HaluEval
regime.

\paragraph{Attention head analysis.}
\citet{Voita2019Analyzing} demonstrate that multi-head attention exhibits functional
specialization: a small number of heads perform critical functions (positional, syntactic,
rare-word), while the majority can be pruned without performance degradation. This head
specialization is relevant to our aggregation strategy: rather than pre-selecting
critical heads (which would require supervised identification), we aggregate
conductance across all heads using CVaR and robust statistics,
capturing the distributional structure of routing quality across the full set of heads.

\paragraph{Spectral attention phenomena.}
Attention entropy collapse during training~\citep{Zhai2023EntropyCollapse} (where attention
distributions sharpen to near-deterministic patterns) and doubly-exponential rank loss in
deep self-attention~\citep{Dong2021RankCollapse} are spectral phenomena that affect the
transport properties we measure. \citet{NaitSaada2025MindTheGap} provide a spectral analysis
of rank collapse and signal propagation in attention layers, relating the spectral gap
to how representations collapse in width. Our Cheeger inequality connection formalizes a complementary
aspect: the spectral gap of the degree-normalized operator bounds conductance, which in
turn bounds how quickly routing can mix information across token positions.

\paragraph{Self-attention as a transport operator.}
\citet{Geshkovski2023Emergence} analyse self-attention as a continuous
mean-field transport flow on the sphere and prove that token
representations cluster asymptotically. Their framing motivates the
transport view we adopt: where they study \emph{trajectories} of
tokens under iterated attention, we study the
\emph{instantaneous spectral structure} of a single normalised
attention map and ask which routing failures are detectable from it.
The two analyses are complementary; our orientation-blindness theorem
identifies a structural ceiling that any symmetric-spectral diagnostic
of an attention transport step must respect, regardless of whether it
is read off a single layer or aggregated across the dynamical
trajectory.
The operator-level reading of attention also has direct antecedents:
Sinkformers~\citep{Sander2021Sinkformers} interpret Sinkhorn-normalized
attention as a doubly stochastic transport plan between queries and keys,
and \citet{Erel2025AttentionMarkovChains} analyze the row-stochastic
attention matrix as a discrete-time Markov chain, deriving structural
properties of its steady-state distribution.
We adopt the same object but ask a different question: which routing
failures its spectral geometry can and cannot certify.
The transport geometry of attention has also been used directly for hallucination
detection: \citet{Guerreiro2023OTHallucination} detect hallucinations in machine
translation by the Wasserstein distance between a generation's cross-attention mass
distribution and reference distributions, an optimal-transport anomaly reading and
the origin of transport-on-attention hallucination detection.  Our use of transport
is structural rather than metric: we solve no transport problem and use no reference
distribution, reading capacity and orientation limits off the operator's spectral
geometry instead.

\paragraph{Attention asymmetry as a spectral object.}
Primal-Attention~\citep{Chen2023PrimalAttention} treats self-attention as an
asymmetric kernel and applies kernel SVD to it, the published antecedent for
taking the asymmetry of attention seriously as a spectral object.  The delta
is direction of use: they build a new attention mechanism on the asymmetric
representation, whereas we prove limits of symmetric functionals of the
existing mechanism and quantify the discarded component through $G$.
\citet{Saponati2025SelfAttentionStructures} make the symmetric/antisymmetric
structure of attention an object of study in its own right: they derive how
training objectives shape the decomposition, show that autoregressive training
induces directional, skew-dominated structure while bidirectional training
promotes symmetry, and define per-head symmetry and directionality scores,
validated across modern encoder and decoder families.  Their finding that the
antisymmetric component carries the directionality of information flow is the
empirical prior for our asymmetry axis.  What we add is the identifiability
limit: no transpose-invariant spectral diagnostic can recover that component's
sign (\cref{thm:orientation-blindness}), and its magnitude is exactly the
transpose-sensitivity capacity $\|\Masym\|_F$ (\cref{prop:ob-converse-tight}).

\paragraph{Spectral features from attention maps.}
\citet{Binkowski2025LapEigvals} extract graph-Laplacian eigenvalues ($L = D - A$)
from raw attention matrices as hallucination features.  This inherits length
dependence through the degree matrix $D$ (whose trace grows with sequence length)
and discards off-diagonal routing mass by Laplacian construction; our
degree-normalized operator $\M = D_Q^{-1/2} A D_K^{-1/2}$ absorbs the trace scaling
by construction.  We do not include LapEigvals as an empirical baseline, for two
reasons.  First, it is a supervised method: it trains a probe on attention-map
Laplacian eigenvalues and is evaluated under its own protocol, so a faithful port
requires reproducing the full training protocol (probe training, candidate-eigenvalue
subset, length-binning scheme), not a drop-in feature swap.  Second, and more
importantly, the orientation-blindness corollary
(\cref{cor:transpose-invariant-diagnostics}) applies to it structurally regardless
of empirical performance: its features are functions of symmetrized spectra, and the
Online Supplement (\suppref{app:lapeigvals}) derives this collapse explicitly.  The
structural argument is the claim this paper makes about LapEigvals; an empirical
port would test the method, not the theorem.

\paragraph{Topological attention-graph detectors.}
TOHA~\citep{Bazarova2025TOHA} scores responses by a topological divergence between
prompt and response subgraphs of the attention graph: training-free, like our
diagnostics, but developed for the retrieval-augmented setting with grounded
prompts, which our forced-scoring protocol does not model.  Its divergence is not a
spectral functional of $\M$ or $\Msym$, so it lies outside the class
\cref{thm:orientation-blindness} bounds; we cite it as complementary evidence that
structural readings of the attention graph carry hallucination signal, not as a
baseline within our regime.

\subsection{Graph-Theoretic and Transport Perspectives}

Our framework draws on spectral graph theory, connecting attention analysis to a mature
mathematical tradition.

\paragraph{Spectral graph theory foundations.}
The Cheeger inequality~\citep{cheegerLowerBoundSmallest2015,Chung1997} relates the spectral
gap of a graph's Laplacian to its conductance (minimum normalized cut). Originally developed
for Riemannian manifolds, its discrete analogue underpins rapidly-mixing Markov-chain
analysis~\citep{sinclairApproximateCountingUniform1989} and provides the theoretical backbone
of our conductance diagnostic: the two-sided bound $\phi^2/2 \leq 1 - \sigma_2 \leq 2\phi$
(\cref{eq:cheeger-inequality}) guarantees that spectral structure reveals transport failures.
Higher-order extensions~\citep{Lee2014HigherOrderCheeger,Kwok2013ImprovedCheeger} relate
higher eigenvalues to multi-way partitioning; we restrict to the second singular value as
it provides the most robust single diagnostic.

\paragraph{Spectral clustering and random walks.}
The mixing time--spectral gap connection~\citep{vonluxburgTutorialSpectralClustering2007,
Lovasz1996,LevinPeresWilmer2006} underlies our interpretation of conductance as a
transport capacity measure.  Our framework extends this to the non-reversible setting
of causal attention, where the antisymmetric component $\Masym$ captures the departure
from reversibility.

\paragraph{Directed graph spectra.}
\citet{chung2005laplacians} extends the Cheeger inequality to directed graphs, defining
a circulation-based notion of conductance for non-reversible Markov chains. This is
the natural setting for causal attention, where information flows forward in time.
\citet{Lau2023DirectedCheeger} give the current state-of-the-art directed (and
hypergraph) Cheeger inequalities via reweighted eigenvalues. Our aim is orthogonal:
not a tighter directed bound but a transport \emph{diagnostic} on attention operators,
for which the classical bipartite-dilation Cheeger machinery suffices; our contribution
is the orientation-blindness limit (\cref{thm:orientation-blindness}) and the $G$
complement, evaluated under length-corrected AUROC.
\citet{Meila2007DirectedClustering} develops weighted cuts for directed graphs, and
\citet{Fill1991EigenvalueBounds} provides eigenvalue bounds on convergence for non-reversible
chains. \citet{Cucuringu2020Hermitian} show that Hermitian (complex-valued) matrix
representations of directed graphs recover the orientation structure that symmetrization
destroys; we inherit this insight directly. What we contribute beyond it is the
attention-specific blindness partition, naming the affected detectors (LLM-Check,
EigenTrack, LapEigvals), together with the closed-form conductance landscape for canonical
causal architectures. Our asymmetric coefficient $G$ (\cref{sec:asymmetric-guessing}) measures the degree
of non-reversibility; the orientation blindness theorem (\cref{thm:orientation-blindness})
proves that this quantity is structurally invisible to symmetric spectral methods, which is
what motivates $G$ as a necessary complement to conductance.

\bibliography{references}

\clearpage
\appendix
\crefalias{section}{appendix}  

\noindent
\textbf{Roadmap.}  This appendix collects the longer proofs behind the paper's
headline structural results.  The orientation-blindness theorem and its Lipschitz
converse (\cref{thm:orientation-blindness}, \cref{prop:ob-converse}) and the
dilation-spectrum lemma (\cref{lem:dilation-spectrum}) are stated and proved in
place in the main text.  \cref{app:proofs_qk_regimes} gives the two-sided
conductance bounds with the $n$-independent $1/5$ floor, the causal
asymmetry-energy and quantitative $G$ bounds, and a restatement of degree
sufficiency; \cref{app:g-formal-verification} proves the asymmetry-coefficient
characterization (\cref{prop:g-extremes}: $G=0$ iff symmetric, the causal
$G\le 1/\sqrt2$ ceiling, and the encoder/decoder distinction).  Only the
supporting lemmas for the architecture-specific quantitative
$G$-bounds (softmax order preservation and the Toeplitz--Frobenius identity)
remain in the Online Supplement (\S18).  The Reproducibility Statement
(\cref{app:reproducibility}) documents code, data provenance, and compute.

\paragraph{Online Supplement (a separate companion document).}
The Online Supplement collects the extended empirical material that
\emph{supports} the main argument but is not needed to follow it; the main text
references it once per topic rather than per claim.  It contains: detailed AUROC
tables and per-quartile breakdowns (\S\S1,4); complementary diagnostics (\S2);
the length-confounding analysis and the per-method derivations for LLM-Check,
EigenTrack, and LapEigvals (\S\S3,6--8); dataset statistics (\S5); LC-AUROC
sensitivity (\S9); finite-size and random-matrix analysis (\S10); the
degree-sufficiency and matched-null validations (\S\S11,16); alternative
$G$-norms and the signed-directionality extension (\S\S12--13); supplementary
figures (\S14); the
bipartite-SVD analysis underlying the Hermitian dilation (\S15);
evaluation-metric definitions (\S17); supporting cross-attention proofs
(\S18); and diagnostics across model families and scales, including a modern-model
panel and the Pythia size sweep (\S19).

\ifnum\arxivversion=1

\section{Reproducibility}
\label{app:reproducibility}

\paragraph{Data provenance.}
HaluEval ($n{=}10{,}000$, HuggingFace \texttt{pminervini/HaluEval});
TruthfulQA ($n{=}817$, \texttt{truthfulqa/truthful\_qa});
MedHallu ($n{=}1{,}000$, \texttt{UTHealth/MedHallu}).
Preprocessing is deterministic via hash-based splitting.

\paragraph{Model checkpoints.}
All models are publicly available on HuggingFace. Mechanistic grid: GPT-2,
Pythia-160M, BERT-base-uncased, Flan-T5-base. Modern panel: Qwen2.5-0.5B/1.5B/3B,
SmolLM2-1.7B, LLaMA-3.1-8B; plus the Pythia 70M--1.4B within-family scaling sweep.

\paragraph{Compute environment.}
Experiments were run on a single NVIDIA A10G GPU (24\,GB GDDR6) with
16 vCPUs (AMD EPYC 7R32) and 64\,GiB system RAM. A single GPU is
sufficient for the full experiment suite at the model scales reported.
Software dependencies are pinned (Python 3.11, PyTorch 2.1,
Transformers 4.36). Random seeds are fixed per experiment configuration.

\else
  \InputIfFileExists{content/appendix_reproducibility}{}{%
    \PackageError{main}{content/appendix_reproducibility.tex is missing
      in a non-arXiv build}{Restore the file or build the arXiv wrapper.}}
\fi

\section{Structural Regime Propositions and Proofs}
\label{app:proofs_qk_regimes}

This appendix contains the proofs of the headline theorems cited in the main text:
the two-sided diagnostic theorem (\cref{app:proof-two-sided-diagnostic}) and the
asymmetry-energy proposition (\cref{app:proof-asymmetry-energy}); the
asymmetry-coefficient characterization ($G=0$ iff symmetric, the causal
$1/\sqrt{2}$ ceiling, and the encoder--decoder distinction) is proved in
\cref{app:g-formal-verification}.
The supporting cross-attention analysis ($n_q \neq n_k$, relevant only to
encoder--decoder architectures), the foundational concentration lemmas, and
the architecture-dependent quantitative $G$ bounds are in the Online Supplement
(\suppref{suppl:structural-regimes-extra}); decoder-only results in the main text do not depend
on that material.

\subsection{Proposition Statements}

\begin{proposition}[Degree imbalance induces length entanglement of $\sig_2$]
\label{prop:length-entanglement}
\emph{In cross-attention regimes:}
Assume $\B$ is row-stochastic and each query distributes mass approximately uniformly
over an effective support of size $s$.
As $n_q$ increases with $n_k$ fixed (regime $n_q\gg n_k$),
the spectrum of $\M$ (and in particular $\sig_2(\M)$) becomes strongly
entangled with generation length through column normalization.
\end{proposition}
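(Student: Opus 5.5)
The plan is to make ``entangled with generation length'' precise by tracking how the column degrees $d_j=\sum_i \B_{ij}$ scale with $n_q$, and how that scaling passes into $\M=\B\DK^{-1/2}$ (recall $\DQ=I$ for row-stochastic $\B$). The first step is a volume identity. Row-stochasticity gives $\sum_j d_j = n_q$, so the mean column degree is $\bar d = n_q/n_k$. Under the uniform-support hypothesis each query puts mass about $1/s$ on each of $s$ keys. Hence $d_j \approx (1/s)\,\#\{i : j\in\mathrm{supp}(i)\}$, which grows linearly in $n_q$ when $n_k$ is held fixed. Every column degree therefore carries an explicit factor of $n_q$: entries of $\B$ are $O(1/s)$, while entries of $\DK^{-1/2}$ scale as $\sqrt{n_k/n_q}$.

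The second step factors this scaling out. Write $\M = \B\DK^{-1/2} = \sqrt{n_k/n_q}\;\B\,\widetilde D^{-1/2}$, where $\widetilde D = \DK/\bar d$ has unit mean. I then apply \cref{prop:degree-sufficiency}(ii) directly. It gives
\[
\bigl|\sigma_2(\M) - \sigma_2(\B)/\sqrt{\bar d}\bigr| \le \max_j\bigl|1-\sqrt{d_j/\bar d}\bigr|,
\]
so up to a degree-heterogeneity error, $\sigma_2(\M)\approx \sqrt{n_k/n_q}\,\sigma_2(\B)$. Next I control $\sigma_2(\B)$ itself as $n_q$ grows. Since $\B^\top\B=\sum_i b_i b_i^\top$ is a sum of $n_q$ rank-one terms of an $n_k\times n_k$ matrix, its nonzero eigenvalues grow as $\Theta(n_q)$ whenever the row patterns do not all coincide. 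The $n_q$ factors then cancel at leading order, and the length dependence moves into the ratio $\sigma_2(\B)/\sqrt{\bar d}$ and into the heterogeneity term. That term depends on how the empirical distribution of query supports over keys changes as generation proceeds, and that distribution is itself a function of $n_q$. This reduction to \cref{prop:degree-sufficiency} is the mechanism I would present as the entanglement. Note that \cref{prop:degree-sufficiency}(i) pins $\sigma_1(\M)=1$, so all length dependence must sit in $\sigma_2$ and below.

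To show the entanglement is genuine rather than vacuous, I would give an explicit family. Take queries in two blocks, early queries supported on the keys in a set $K_1$ and late queries on the keys in a set $K_2$, with $|K_1|=|K_2|=s$. Let the fraction of late queries be $p(n_q)$, which drifts with generation length. In that family $\M^\top\M$ is block-diagonal with explicit eigenvalues, and a short computation shows $\sigma_2$ depends on $p(n_q)$ through the degree ratio. This makes the statement checkable.

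The main obstacle is that the proposition is qualitative. ``Strongly entangled'' has to be fixed to something provable, so I would state it as a two-sided claim: $\sigma_2(\M)$ equals a degree-normalized, $n_q$-dependent rescaling of $\sigma_2(\B)$, plus an error bounded by $\sqrt{\kappa(n_q)}-1$. Here $\kappa(n_q)$ is driven by how the query-support histogram depends on $n_q$. Showing that this error term does not vanish generically, via the two-block example, is the delicate part. I would try the example first and keep the general claim at the level of the \cref{prop:degree-sufficiency} bound.
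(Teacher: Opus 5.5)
Your first step (the identity $\sum_j d_j = n_q$, so $\bar d = n_q/n_k$ and each $d_j$ grows linearly in $n_q$) is the column-degree scaling to which the paper attributes the entanglement. The gap is in how you then turn that scaling into $n_q$-dependence of $\sigma_2(\M)$. The cancellation you notice is exact, not leading-order. If the rows of $\B$ take patterns $b_r$ with multiplicities $w_r$, then $\M^\top\M = \mathrm{diag}(\sum_r w_r b_r)^{-1/2}\bigl(\sum_r w_r b_r b_r^\top\bigr)\mathrm{diag}(\sum_r w_r b_r)^{-1/2}$ is homogeneous of degree zero in $w$, so replicating the query population leaves the singular values of $\M$ unchanged. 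Column normalization therefore removes the explicit $n_q$-scaling rather than transmitting it. Length can reach $\sigma_2(\M)$ only through two routes: the \emph{proportions} of support patterns (a drift hypothesis the statement does not contain), or finite-sample fluctuation. \Cref{prop:degree-sufficiency}(ii) cannot carry the argument either. It only bounds $|\sigma_2(\M)-\sigma_2(\B)/\sqrt{\bar d}|$, and the $n_q$-dependence of the proxy and of the error can cancel, so ``the proxy moves and the error does not vanish'' does not imply that $\sigma_2(\M)$ moves. (Also, $\lambda_2(\B^\top\B)=\Theta(n_q)$ needs a non-vanishing fraction of rows off the dominant pattern, not merely rows that do not all coincide.)

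Your two-block witness is a case of exact cancellation. Take $|K_1|=|K_2|=s$, overlap $c=|K_1\cap K_2|$, $\gamma=c/s$, and late fraction $p\in(0,1)$. The nonzero eigenvalues of $\M^\top\M$ are those of $\bigl(\begin{smallmatrix}1-p\gamma & \gamma\sqrt{p(1-p)}\\ \gamma\sqrt{p(1-p)} & 1-(1-p)\gamma\end{smallmatrix}\bigr)$, namely $1$ and $1-\gamma$, so $\sigma_2(\M)=\sqrt{1-c/s}$ for every $p$. In your block-diagonal case (disjoint supports, $n_k=2s$), $\sigma_2(\M)=1$. There $\sigma_2(\B)/\sqrt{\bar d}=\sqrt{2\min(p,1-p)}$, and the deviation $1-\sqrt{2\min(p,1-p)}$ attains the bound of \cref{prop:degree-sufficiency}(ii) with equality. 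Your proposed computation therefore refutes, rather than exhibits, dependence on $p(n_q)$.

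A repair needs a new ingredient, and there are two options.
\begin{itemize}
\item \emph{Assume drift in a family where proportions matter.} Take three size-$s$ supports in a chain, with $|K_1\cap K_2|=|K_2\cap K_3|=c$ and $K_1\cap K_3=\emptyset$. Put the end patterns in equal shares and the bridging pattern $K_2$ in share $q$. Then $\sigma_2(\M)^2=1-2\gamma q/(1+q)$, which moves when $q=q(n_q)$ does.
\item \emph{Argue by sampling.} For i.i.d.\ uniformly random $s$-subsets, $\M^\top\M\to n_k\,\mathbb{E}[bb^\top]$, whose second eigenvalue is $(n_k-s)/(s(n_k-1))$. Since $\|\M\|_F^2=n_k/s$ identically and $\sigma_1(\M)=1$, this value is also a deterministic lower bound on $\sigma_2(\M)^2$. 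The length dependence is then a nonnegative finite-$n_q$ excess that vanishes as $n_q\to\infty$.
\end{itemize}
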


\noindent\textbf{Theorem~\ref{thm:two-sided-diagnostic}} (Temporal-cut separation: window vs.\ uniform-causal).
\emph{See \cref{sec:cheeger} for the main statement.
The detailed proof with explicit conductance bounds appears in \cref{app:proof-two-sided-diagnostic}.}

\noindent\textbf{Proposition~\ref{prop:asymmetry-energy}} (Causal attention implies $G > 0$).
\emph{See \cref{sec:asymmetric-guessing} for the statement; proof follows in \cref{app:proof-asymmetry-energy}.}

\noindent\textbf{Theorem~\ref{prop:degree-sufficiency}} (Degree sufficiency for near-regular attention).
\emph{See \cref{sec:cheeger} for the full statement and proof.}

\medskip
\noindent\emph{Conductance transfer corollary.}
The Cheeger bridge inequality $\delta/2 \leq \phi \leq \sqrt{2\delta}$ (where
$\delta = 1 - \sigma_2$) transfers the spectral perturbation to conductance.
If the spectral gaps of $\M$ and $A/\!\sqrt{\bar{d}}$ differ by at most
$\varepsilon = \max_j|1 - \sqrt{d_j/\bar{d}}|$ ($\leq \sqrt{\kappa}-1$),
then for the upper Cheeger bound
$\phi(\M) \leq \sqrt{2\delta_\M}$ and the lower bound
$\delta_{\mathrm{ref}}/2 \leq \phi_{\mathrm{ref}}$, the conductance perturbation satisfies
\[
\phi(\M) - \phi_{\mathrm{ref}} \;\leq\; \sqrt{2(\delta_{\mathrm{ref}}+\varepsilon)} - \delta_{\mathrm{ref}}/2.
\]
For small $\varepsilon$, this is $O(\sqrt{\varepsilon})$.

\medskip
\noindent
The remainder of the structural-regime analysis (the foundational
technical lemmas ($D_K$-scaling and degree-balance), the proof of
\Cref{prop:length-entanglement}, the degree-ratio Cheeger bound, the
diffuse-key regime ($n_k \gg n_q$) analysis, the positional-encoding
quantitative $G$ bounds, and the empirical polarity analysis) is
deferred to the Online Supplement (\suppref{suppl:structural-regimes-extra}).
\subsection{Conductance Bounds and Proofs}
\label{app:conductance-bounds}

This section establishes conductance bounds for causal attention patterns, including
the window attention bottleneck bound and uniform causal conductance lower bound that
support Theorem~\ref{thm:two-sided-diagnostic}.

\subsubsection{Window Attention Conductance Bound}
\label{app:window-conductance}

\begin{lemma}[Window attention conductance bound]
\label{lem:window-conductance}
For $w$-window causal attention $\B_{ij} = (1/\min(w,i+1))\mathbf{1}\{\max(0,i-w+1)\le j\le i\}$
and any nontrivial temporal bipartite cut $S_t$ with $0<t<n$:
\[
\phi(S_t) \le \frac{w}{\min(t,n-t)}.
\]
In particular, for late-half cuts satisfying $n-t\le t$,
$\phi(S_t)\le w/(n-t)$.
\end{lemma}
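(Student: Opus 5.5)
The plan is a direct computation of the cut weight and the two volumes on the raw bipartite graph $A_{\mathrm{bip}}=\Hh(\B)$. No spectral machinery is needed, and the claim follows from an upper bound on the cut together with lower bounds on both volumes.

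I take the temporal bipartite cut to be $S_t=\{q_0,\dots,q_{t-1}\}\cup\{k_0,\dots,k_{t-1}\}$, the first $t$ queries together with the first $t$ keys. Its complement $\bar S_t$ consists of the remaining $n-t$ queries and $n-t$ keys. Edges of $A_{\mathrm{bip}}$ join only queries to keys, so $w(S_t,\bar S_t)$ splits into two terms:
\begin{itemize}
\item the mass from queries $i<t$ to keys $j\ge t$;
\item the mass from queries $i\ge t$ to keys $j<t$.
\end{itemize}
The first term vanishes by the causal mask, since $\B_{ij}=0$ for $j>i$. For the second term, row $i$ of the window pattern is supported on $[\max(0,i-w+1),i]$. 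That support meets $\{j<t\}$ only if $i-w+1\le t-1$, that is, only for $t\le i\le t+w-2$. There are at most $w-1$ such rows, and each carries total mass $1$ by row-stochasticity. Hence $w(S_t,\bar S_t)\le w-1\le w$. The only bookkeeping needed here is the boundary rows $i<w$, where the normalizer is $i+1$ rather than $w$, but this does not change the row-sum bound.

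For the volumes, each query vertex has degree exactly $1$ in $A_{\mathrm{bip}}$. Therefore $\vol(S_t)\ge t$ from its query vertices alone, and $\vol(\bar S_t)\ge n-t$ likewise. Key degrees are column sums $d_j\ge 0$, which only increase both volumes. In fact $\vol(S_t)\ge 2t$, because queries $i<t$ send all their mass to keys $j<t$. I will not need that sharper bound, since the stated inequality is the weaker one. Together these give $\min(\vol(S_t),\vol(\bar S_t))\ge\min(t,n-t)$, and dividing yields $\phi(S_t)\le w/\min(t,n-t)$.

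The late-half statement is then immediate: $n-t\le t$ means $\min(t,n-t)=n-t$.

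I expect no genuine obstacle. The one point needing care is fixing the convention for $S_t$ so that it matches the temporal cuts used in \cref{thm:two-sided-diagnostic}, with queries and keys split at the same index. With that convention the causal mask kills the forward crossing entirely. Under any other reasonable convention, for instance splitting queries and keys at adjacent indices, the same row-counting argument still bounds the cut by at most $w$ rows of unit mass.
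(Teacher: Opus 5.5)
Your proof is correct and follows essentially the same route as the paper's: you bound the cut by the unit-mass rows whose windows straddle index $t$, and you lower-bound each volume by the unit query degrees, which puts $\min(t,n-t)$ in the denominator. Your explicit use of the causal mask to kill the forward crossing, and your tighter row range $t\le i\le t+w-2$ (giving $w-1$ rather than the paper's looser range $[t,t+w-1]$ and bound $w$), only sharpen that argument.
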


\begin{proof}
Only queries $i \in [t, t+w-1]$ can contribute to the cut (by window structure).
Each contributes at most mass 1 (row-stochastic), so $\mathrm{cut}(S_t) \le w$.
The query side contributes volume at least $t$, and the query-side complement
contributes volume at least $n-t$; hence the Cheeger denominator is at least
\(\min(t,n-t)\).  If $n-t\le t$, this specializes to the denominator $n-t$.
\end{proof}

\subsubsection{Uniform Causal Conductance: Cut, Volume, and Conductance Identities}
\label{app:uniform-causal-conductance}

We work in the symmetric bipartite graph $\Hh(\B)$ with vertex set
$\{Q_0,\ldots,Q_{n-1}\}\cup\{K_0,\ldots,K_{n-1}\}$ and edge weights
$W_{Q_iK_j}=\B_{ij}$. The temporal cut $S_t=\{Q_0,\ldots,Q_{t-1}\}\cup
\{K_0,\ldots,K_{t-1}\}$ partitions vertices by index. The following
three propositions establish the closed-form arithmetic of conductance
on this cut family for uniform causal attention; the lemma cited by the
two-sided diagnostic is then a one-line corollary.

\begin{proposition}[Cut and volume identities for uniform causal attention]
\label{prop:uc-cut-vol-identities}
For uniform causal attention $\B_{ij}=(1/(i+1))\,\mathbf 1\{j\le i\}$
(rows and columns $0$-indexed, $i,j\in\{0,\dots,n-1\}$, so query $i$ attends
uniformly to its $i+1$ admissible keys; harmonic numbers
$H_m=\sum_{k=1}^{m}1/k$ use the standard $1$-indexed convention)
and the temporal cut $S_t$ with $0\le t\le n$:
\begin{equation}
\mathrm{cut}(S_t)\;=\;t\,(H_n-H_t),
\qquad
\vol(S_t)\;=\;t\,(2+H_n-H_t).
\label{eq:uc-cut-vol}
\end{equation}
\end{proposition}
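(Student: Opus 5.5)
The proof is a direct computation on the raw bipartite graph $\Hh(\B)$, with edges $Q_i$--$K_j$ of weight $\B_{ij}=1/(i+1)$ for $j\le i$. I would compute the cut weight and the volume of $S_t$ separately, reducing each to harmonic sums. The only non-trivial ingredient is the classical summation identity $\sum_{j=0}^{t-1}H_j = tH_t - t$ (with $H_0=0$).

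\emph{Cut.} An edge $Q_iK_j$ crosses $S_t$ exactly when one endpoint has index below $t$ and the other does not. The case $i<t\le j$ is excluded by the causal constraint $j\le i$. So the only crossing edges have $i\ge t$ and $j<t$. For each such query $i$ there are exactly $t$ keys $j\in\{0,\dots,t-1\}$, all admissible since $t\le i$, and each carries weight $1/(i+1)$. Summing over rows gives
\[
\mathrm{cut}(S_t)=\sum_{i=t}^{n-1}\frac{t}{i+1}=t\,(H_n-H_t).
\]
The boundary cases $t=0$ and $t=n$ give $0$, consistent with the formula.

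\emph{Volume.} The volume splits into a query part and a key part. Each query vertex has degree equal to its row sum, which is $1$ by row-stochasticity, so the queries in $S_t$ contribute $t$. Key $K_j$ has degree equal to its column sum, $\sum_{i=j}^{n-1}1/(i+1)=H_n-H_j$. The keys in $S_t$ therefore contribute $tH_n-\sum_{j=0}^{t-1}H_j$.

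The main step is evaluating $\sum_{j=0}^{t-1}H_j$. I would swap the order of summation:
\[
\sum_{j=0}^{t-1}H_j=\sum_{k=1}^{t-1}\frac{t-k}{k}=tH_{t-1}-(t-1)=tH_t-t.
\]
Alternatively, a one-line induction on $t$ gives the same identity. Substituting, the key part equals $t(H_n-H_t)+t$. Adding the query part $t$ yields $\vol(S_t)=t\,(2+H_n-H_t)$.

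The remaining care is index bookkeeping: rows and columns are $0$-indexed while $H_m$ is $1$-indexed, so the degree of $K_j$ is $H_n-H_j$ and not $H_n-H_{j+1}$. I would sanity-check both identities at $n=2$, $t=1$. There the cut is $1/2$, matching $1\cdot(H_2-H_1)$. The volume is $1+(1+1/2)=5/2$, matching $1\cdot(2+1/2)$.
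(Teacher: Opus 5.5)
Your proof is correct and matches the paper's argument step for step. The causal mask restricts crossing edges to $i\ge t$, $j<t$; query degrees equal $1$ and key degrees equal $H_n-H_j$; and the identity $\sum_{j=0}^{t-1}H_j=tH_t-t$ closes the volume computation. The only difference is that you derive that identity by swapping the order of summation, while the paper cites it from Graham--Knuth--Patashnik, so your version is self-contained.
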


\begin{proof}
The causal mask forces only $(Q_i,K_j)$ with $i\ge t,\,j<t$ to cross
the cut, so $\mathrm{cut}(S_t)=\sum_{i=t}^{n-1}\sum_{j=0}^{t-1}1/(i+1)
=t\,(H_n-H_t)$.
Vertex weights on the dilation graph are $W(Q_i)=\sum_j\B_{ij}=1$
(row-stochasticity) and $W(K_j)=\sum_i\B_{ij}=H_n-H_j$
(column sum). The volume of $S_t$ is $\sum_{i<t}1+\sum_{j<t}(H_n-H_j)$;
applying the harmonic-of-harmonic identity
$\sum_{j=0}^{t-1}H_j=tH_t-t$ \cite[Eq.~(6.69)]{GrahamKnuthPatashnik1994}
gives $\vol(S_t)=t+tH_n-(tH_t-t)=t\,(2+H_n-H_t)$.
\end{proof}

\begin{proposition}[Total volume of the bipartite dilation graph]
\label{prop:dilation-total-vol}
For any non-negative row-stochastic matrix $\B\in\reals^{n\times n}$,
the total volume of its bipartite dilation graph $\Hh(\B)$ equals $2n$.
Consequently, for any cut $S$ on the dilation graph,
$\min(\vol(S),\vol(\bar S))\le n$.
\end{proposition}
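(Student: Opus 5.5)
The plan is a direct volume count on $\Hh(\B)$. Vertex weights in this graph are the weighted degrees. A query vertex $Q_i$ is incident only to the key vertices, with edge weights $\B_{ij}$. So its weight is $W(Q_i)=\sum_j \B_{ij}=1$, by row-stochasticity; this is the same computation used in the proof of \cref{prop:uc-cut-vol-identities}. A key vertex $K_j$ has weight $W(K_j)=\sum_i \B_{ij}$, the $j$-th column sum.

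Summing over all vertices gives the total volume:
\[
\vol(Q\cup K)=\sum_i W(Q_i)+\sum_j W(K_j)=n+\sum_{i,j}\B_{ij}=n+n=2n.
\]
The last step swaps the order of the double sum and uses row-stochasticity a second time: the total column mass equals the total row mass, which is $n$. Non-negativity of $\B$ is needed only so that these weights are genuine edge weights and the volumes are non-negative.

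For the consequence, fix any cut $S$ with complement $\bar S$. Volume is additive over disjoint vertex sets, so $\vol(S)+\vol(\bar S)=2n$. The smaller of two non-negative numbers with sum $2n$ is at most $n$, which gives $\min(\vol(S),\vol(\bar S))\le n$.

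There is no real obstacle here. The one point to state carefully is the convention: volume is the sum of weighted degrees in $\Hh(\B)$, so the query side contributes $n$ and the key side contributes the total column mass. That total is also $n$, even though the individual column degrees are not uniform (for the uniform causal case, compare the $H_n-H_j$ weights in \cref{prop:uc-cut-vol-identities}).
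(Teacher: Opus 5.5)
Your proposal is correct and follows the paper's proof essentially line for line. The paper also sums the query weights $W(Q_i)=1$ and the key weights $W(K_j)=\sum_i \B_{ij}$ to get $2\sum_{i,j}\B_{ij}=2n$, and then derives the bound $\min(\vol(S),\vol(\bar S))\le n$ from $\vol(S)+\vol(\bar S)=2n$.
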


\begin{proof}
The total volume is $\sum_{i=0}^{n-1}W(Q_i)+\sum_{j=0}^{n-1}W(K_j)
=\sum_i\sum_j\B_{ij}+\sum_j\sum_i\B_{ij}=2\sum_{i,j}\B_{ij}=2n$,
using row-stochasticity. The min bound follows since
$\vol(S)+\vol(\bar S)=2n$.
\end{proof}

\paragraph{Proof structure (closed-form landscape).}
\begin{description}[nosep,leftmargin=*]
\item[Foundations.]
Harmonic-number inequality $H_n - H_{\lfloor n/2\rfloor}\ge 1/2$
\cite[Eq.~(6.69)]{GrahamKnuthPatashnik1994};
the bipartite Cheeger conductance definition
$\phi(S)=\mathrm{cut}(S)/\min(\vol(S),\vol(\bar S))$.
\item[Bridge.]
Bipartite dilation $\Hh(\B)$ as the symmetric graph carrying the
asymmetric attention pattern (\cref{prop:dilation-total-vol}); cut and
volume identities $\mathrm{cut}(S_t)=t(H_n-H_t)$,
$\vol(S_t)=t(2+H_n-H_t)$ (\cref{prop:uc-cut-vol-identities}).
\item[Contribution.]
The closed-form functional $\phi(S_t)\ge u(t)/(2+u(t))$
(\cref{lem:uniform-causal-conductance}) and its $n$-independent floor
$\phi\ge 1/5$ for every $n\ge 2$ (\cref{cor:uc-one-fifth}).  This is
the architectural benchmark against which the empirical landscape
signature is read: real attention heads piercing $1/5$ on HaluEval
form the population fraction (\cref{tab:landscape-empirical-summary})
that distinguishes position-encoding regimes.  Together with the
window-attention bottleneck (\cref{lem:window-conductance}), it
delivers the polarity reversal of the two-sided diagnostic
(\cref{thm:two-sided-diagnostic}).
\end{description}

\begin{lemma}[Uniform causal conductance, functional form]
\label{lem:uniform-causal-conductance}
For uniform causal attention and any temporal cut $0<t<n$, writing
$u(t)=H_n-H_t$,
\begin{equation}
\phi(S_t)\;\ge\;\frac{u(t)}{2+u(t)}.
\label{eq:phi-uc-functional}
\end{equation}
This bound is uniform: it holds in both the small-side regime
($\vol(S_t)\le\vol(\bar S_t)$, where it is achieved with equality)
and the large-side regime (where it is strict).
\end{lemma}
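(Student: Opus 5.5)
The plan is to substitute the closed forms of \cref{prop:uc-cut-vol-identities} into the conductance definition and split on which side of the cut attains the minimum volume. Write $u=u(t)=H_n-H_t>0$ for $0<t<n$. Then $\mathrm{cut}(S_t)=tu$ and $\vol(S_t)=t(2+u)$, and by \cref{prop:dilation-total-vol} we have $\vol(\bar S_t)=2n-t(2+u)$. Hence
\[
\phi(S_t)=\frac{tu}{\min\bigl(t(2+u),\,2n-t(2+u)\bigr)}.
\]

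\emph{Small-side regime.} Suppose $\vol(S_t)\le\vol(\bar S_t)$, equivalently $t(2+u)\le n$. The denominator is $t(2+u)$, so
\[
\phi(S_t)=\frac{tu}{t(2+u)}=\frac{u}{2+u}
\]
exactly. This gives the claimed equality case.

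\emph{Large-side regime.} Suppose instead $\vol(S_t)>\vol(\bar S_t)$. The denominator is then $\vol(\bar S_t)$, which is strictly smaller than $\vol(S_t)=t(2+u)$. Since the numerator $tu$ is positive,
\[
\phi(S_t)=\frac{tu}{\vol(\bar S_t)}>\frac{tu}{t(2+u)}=\frac{u}{2+u}.
\]
So the bound holds strictly. No harmonic estimates are needed here: $\phi(S)\ge \mathrm{cut}(S)/\vol(S)$ holds for any set, because the minimum of the two volumes never exceeds $\vol(S)$.

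\textbf{Main obstacle.} The only delicate point is bookkeeping rather than analysis. First, confirm that the volume identity in \cref{prop:uc-cut-vol-identities} counts both the query and key vertices in $S_t$ with the dilation-graph degrees, so that the complement volume really is $2n-\vol(S_t)$. Second, confirm that $u>0$ for $t<n$, which keeps the strict inequality valid. The uniformity claim then follows from the observation that $\mathrm{cut}/\vol(S_t)$ lower-bounds the conductance in both regimes.
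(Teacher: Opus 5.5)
Your proposal is correct and follows the paper's proof: use $\min(\vol(S_t),\vol(\bar S_t))\le\vol(S_t)=t(2+u)$ to get $\phi(S_t)\ge u/(2+u)$, with equality on the small side and strict inequality on the large side because $tu>0$. Computing $\vol(\bar S_t)=2n-t(2+u)$ from the total-volume identity is harmless but not needed for the argument.
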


\begin{proof}
By definition $\phi(S_t)=\mathrm{cut}(S_t)/\min(\vol(S_t),\vol(\bar S_t))$.
\Cref{prop:uc-cut-vol-identities} gives $\mathrm{cut}(S_t)=t\,u$ and
$\vol(S_t)=t\,(2+u)$. Since $\min(\vol(S_t),\vol(\bar S_t))\le\vol(S_t)$,
\[
\phi(S_t)\;\ge\;\frac{\mathrm{cut}(S_t)}{\vol(S_t)}
\;=\;\frac{t\,u}{t\,(2+u)}
\;=\;\frac{u}{2+u}.
\]
In the small-side regime, $\min=\vol(S_t)$ and the inequality is
equality; in the large-side regime, $\min=\vol(\bar S_t)<\vol(S_t)$,
so dividing by the smaller denominator gives strict inequality.
\end{proof}

\Cref{lem:uniform-causal-conductance} characterises conductance as a
monotone function of the harmonic-mass-above-$t$ parameter
$u=H_n-H_t$.  Two corollaries record the numerical consequences.

\begin{corollary}[$n$-independent conductance lower bound]
\label{cor:uc-one-fifth}
For uniform causal attention and any $0<t<n$ with $n\ge 2$,
\[
\phi(S_t)\;\ge\;\frac{1}{5}.
\]
\end{corollary}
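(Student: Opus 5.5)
The plan is to work directly with the exact conductance $\phi(S_t)=\mathrm{cut}(S_t)/\min(\vol(S_t),\vol(\bar S_t))$, using the closed forms of \cref{prop:uc-cut-vol-identities} and the total-volume identity of \cref{prop:dilation-total-vol}. Since $\min(a,b)\le a$ and $\min(a,b)\le b$, we get $\phi(S_t)\ge \mathrm{cut}(S_t)/\vol(S_t)$ and also $\phi(S_t)\ge\mathrm{cut}(S_t)/\vol(\bar S_t)$. It therefore suffices to show that, for each $t$, at least one of these two ratios is $\ge 1/5$. I split by the position of the cut: $t\le n/2$ versus $t>n/2$.

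\emph{Early cuts ($t\le n/2$).} Here I use \cref{lem:uniform-causal-conductance}, which gives $\phi(S_t)\ge u/(2+u)$ with $u=H_n-H_t$. The map $u\mapsto u/(2+u)$ is increasing, and $u$ is decreasing in $t$, so it is enough to bound $u$ at $t=\lfloor n/2\rfloor$. In that case $u=H_n-H_{\lfloor n/2\rfloor}$ is a sum of $n-\lfloor n/2\rfloor\ge n/2$ terms, each at least $1/n$, so $u\ge 1/2$. Hence $\phi(S_t)\ge (1/2)/(5/2)=1/5$.

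\emph{Late cuts ($t>n/2$).} This is the step I expect to be the real obstacle. Here $u$ can be as small as $1/n$, so the functional bound of \cref{lem:uniform-causal-conductance} degenerates, and I instead bound by the complement volume. By \cref{prop:dilation-total-vol} and \cref{prop:uc-cut-vol-identities},
\[
\vol(\bar S_t)=2n-t(2+u)=2(n-t)-tu .
\]
As a check, this also follows directly: the $n-t$ late queries contribute $n-t$, and the late keys contribute $\sum_{j=t}^{n-1}(H_n-H_j)=\sum_{k=t+1}^{n}(k-t)/k=(n-t)-tu$. Then
\[
\phi(S_t)\ge \frac{tu}{2(n-t)-tu},
\]
and this is $\ge 1/5$ iff $6tu\ge 2(n-t)$, i.e.\ iff $3tu\ge n-t$. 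Since $u$ is a sum of $n-t$ terms each at least $1/n$, we have $u\ge (n-t)/n$. So $3tu\ge 3t(n-t)/n\ge n-t$ whenever $t\ge n/3$, which certainly holds for $t>n/2$.

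Combining the two cases gives $\phi(S_t)\ge 1/5$ for every $0<t<n$, with no dependence on $n$ beyond $n\ge 2$ (needed only so that a nontrivial cut exists). The constant $1/5$ is set by the early-cut case at $u=1/2$. The late-cut case has slack, since it only needs $t\ge n/3$. That slack is consistent with the sharper asymptotic floor $c^\star$ reported in \cref{cor:uc-sharp-floor}.
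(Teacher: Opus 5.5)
Your proof is correct and uses the same ingredients as the paper's. For $t\le n/2$ you use the uniform bound $u/(2+u)$ with the harmonic-tail estimate $u\ge 1/2$; for late cuts you use the complement-volume ratio with $u\ge (n-t)/n$, which reduces the target to $t\ge n/3$ (the paper's $(n-t)(n-3t)\le 0$). The difference is only in how the cases are organized: the paper splits first on which side has the smaller volume and then sub-splits the large-side branch at $2t=n$, whereas you split directly on $t$ and use that both $\mathrm{cut}(S_t)/\vol(S_t)$ and $\mathrm{cut}(S_t)/\vol(\bar S_t)$ always lower-bound $\phi(S_t)$, which makes the paper's middle subcase unnecessary.
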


\begin{proof}
The proof is by a two-branch split on the volume ordering.
If $\vol(S_t)\le\vol(\bar S_t)$, then the volume identity
\eqref{eq:uc-cut-vol} forces $2t\le n$. Hence
$t\le\lfloor n/2\rfloor$ and
\[
u(t)=H_n-H_t \;\ge\; H_n-H_{\lfloor n/2\rfloor}
\;\ge\; \frac12,
\]
where the last inequality is the elementary harmonic-tail bound that
the sum has at least $\lceil n/2\rceil$ terms, each at least $1/n$.
The small-side formula therefore gives
$\phi(S_t)\ge u(t)/(2+u(t))\ge 1/5$.

If instead $\vol(\bar S_t)\le\vol(S_t)$, then
$\phi(S_t)=t\,u(t)/(2n-t(2+u(t)))$, and $\phi(S_t)\ge1/5$ is equivalent to
$2n\le t\,(2+6u(t))$.  The volume ordering itself gives
$2n\le 2\vol(S_t)=t\,(4+2u(t))$.  For $2t\le n$ the harmonic-tail bound
$u(t)\ge1/2$ yields $4+2u(t)\le 2+6u(t)$, so
$2n\le t\,(4+2u(t))\le t\,(2+6u(t))$.  For $n<2t$ the lower-tail estimate
$u(t)=H_n-H_t\ge(n-t)/n$ reduces the target $2n\le t\,(2+6u(t))$ to
$(n-t)(n-3t)\le 0$, which holds because $t>n/2>n/3$.  In both subcases
$\phi(S_t)\ge1/5$.
\end{proof}

\begin{corollary}[Sharp asymptotic sweep floor]
\label{cor:uc-sharp-floor}
In the $n\to\infty$ limit, writing $x=t/n$, the closed-form sweep-conductance landscape
$\phi(S_t)$ converges to a continuous profile whose greatest lower bound is the exact
constant
\[
c^\star \;=\; 1 - 2x^\star \;=\; 1 + \frac{2}{W_{-1}(-e^{-2})}\;\approx\;0.36431,
\]
where $x^\star\in(0,1)$ is the unique root of $x(2-\ln x)=1$ (so $x^\star\approx 0.31784$)
and $W_{-1}$ denotes the lower branch of the Lambert $W$ function. The bound is attained
in the limit at the volume crossover $t/n\to x^\star$.  The constant $c^\star$ is the exact
asymptotic value of the landscape minimum, not a finite-$n$ lower bound: at small $n$ the
landscape dips below it ($5/17\approx0.294$ at $n=3$), and the uniform finite-$n$ floor
remains the $1/5$ of \cref{cor:uc-one-fifth}.
\end{corollary}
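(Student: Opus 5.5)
The plan is to make the landscape explicit from \cref{prop:uc-cut-vol-identities} and \cref{prop:dilation-total-vol}, pass to the limit, and minimize a one-variable function. For $0<t<n$, write $u=H_n-H_t$. The identities give $\mathrm{cut}(S_t)=tu$, $\vol(S_t)=t(2+u)$ and $\vol(\bar S_t)=2n-t(2+u)$. Hence
\[
\phi(S_t)=\frac{tu}{\min\bigl(t(2+u),\,2n-t(2+u)\bigr)} .
\]
With $x=t/n$, I will use $H_n-H_{xn}\to -\ln x$, uniformly on compact subsets of $(0,1]$ (the standard harmonic-number asymptotics). Dividing numerator and denominator by $n$ then gives a limiting profile $\Phi$:
\[
\Phi(x)=\frac{-\ln x}{2-\ln x}\ \text{ if } x(2-\ln x)\le 1,
\qquad
\Phi(x)=\frac{-x\ln x}{2-x(2-\ln x)}\ \text{ otherwise.}
\]

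\textbf{Step 1: the crossover point.} The volume crossover is the root of $v(x):=x(2-\ln x)=1$. Since $v'(x)=1-\ln x>0$ on $(0,1)$, $v(0^+)=0$ and $v(1)=2$, the root $x^\star\in(0,1)$ exists and is unique. On $(0,x^\star]$ the small-side branch applies, and on $[x^\star,1)$ the large-side branch.

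\textbf{Step 2: monotonicity of the two branches.} On $(0,x^\star]$, $-\ln x$ decreases in $x$ and $u\mapsto u/(2+u)$ is increasing, so $\Phi$ is decreasing. On $[x^\star,1)$, write $f=-x\ln x$ and $g=2-2x+x\ln x$, and set $L=\ln x$. A direct computation gives
\[
f'g-fg' = 2(x-1-L),
\]
which is strictly positive for $x\ne 1$. So $\Phi$ is increasing on this branch. Its limit as $x\to1^-$ is $1$ by L'H\^opital ($f'/g'\to(-1)/(-1)$), and $\Phi\to1$ as $x\to0^+$ as well. Therefore the infimum is $\Phi(x^\star)$, reached only at $x^\star$.

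\textbf{Step 3: evaluating the minimum.} At the crossover, $2-\ln x^\star=1/x^\star$, so $u^\star=1/x^\star-2$ and
\[
\Phi(x^\star)=\frac{1/x^\star-2}{1/x^\star}=1-2x^\star .
\]
The large-side formula gives the same value, since its denominator equals $1$ there; this confirms $\Phi$ is continuous at $x^\star$. For the Lambert form, set $s=2-\ln x$. The crossover equation becomes $s\,e^{2-s}=1$, that is $(-s)e^{-s}=-e^{-2}$, so $-s=W(-e^{-2})$. The condition $x\in(0,1)$ forces $s>2$, which selects the lower branch $W_{-1}$. Since $x^\star s=1$, we get $c^\star=1-2/s=1+2/W_{-1}(-e^{-2})$. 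The numerical values and the value $5/17$ at $n=3$ will be checked directly: at $n=3$, $t=2$ gives $u=1/3$, so $\mathrm{cut}=2/3$, $\vol(S_2)=14/3$ and $\vol(\bar S_2)=4/3$; also $\phi(S_1)=5/9$. The minimum is therefore $\tfrac{2/3}{4/3}=1/2$. This does \emph{not} match the stated $5/17$, so this check needs rechecking against whatever cut convention the stated value assumes. It does not affect the asymptotic claim.

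\textbf{Main obstacle.} I expect the hardest part to be justifying that the finite-$n$ landscape minimum converges to $\inf\Phi$. Pointwise convergence of $\phi(S_t)$ is not enough, because uniformity of $H_n-H_{xn}\to-\ln x$ fails near $x=0$ and the large-side denominator degenerates near $x=1$. I will handle this by splitting $(0,1)$ into $[\delta,1-\delta]$, where convergence is uniform, and two boundary strips. On the strips I will show $\phi(S_t)$ stays bounded below by a quantity close to $1$, using the small-side formula $u/(2+u)$ with $u$ large near $x=0$, and the explicit large-side formula with $u\approx(n-t)/n$ near $x=1$. Hence neither strip can host the minimum once $\delta$ is small. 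Combined with uniform convergence on the middle interval, the finite-$n$ minimum location converges to $x^\star$ and the minimum value converges to $c^\star$.
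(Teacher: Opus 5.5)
Your asymptotic argument is correct and follows the paper's route. It uses the two-branch limiting profile, a decreasing small-side branch and an increasing large-side branch, the minimum at the volume crossover $x^\star(2-\ln x^\star)=1$ with value $1-2x^\star$, and the Lambert form via $s=2-\ln x$. There are two minor differences. First, you get large-side monotonicity from the identity $f'g-fg'=2(x-1-\ln x)>0$, whereas the paper bounds the large side by the concavity gap of $-x\ln x$ above the chord through $(x^\star,c^\star)$ and $(1,0)$. Second, your strip argument makes rigorous the finite-$n$-to-limit passage that the paper handles in one sentence. That argument works: near $x=0$ the lemma's bound $u/(2+u)$ holds in either volume regime. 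Near $x=1$, $\phi(S_t)\ge tu/\vol(\bar S_t)=tu/(2(n-t)-tu)\ge(1-\delta)/(1+\delta)$, because $(1-\delta)(n-t)\le tu<n-t$ when $t\ge(1-\delta)n$.

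The real error is your $n=3$ check, and it is arithmetic, not a convention mismatch. For $t=1$ you have $u=H_3-H_1=\tfrac12+\tfrac13=\tfrac56$. So $\mathrm{cut}(S_1)=\tfrac56$, and $\vol(S_1)=2+\tfrac56=\tfrac{17}{6}$, which is less than $\vol(\bar S_1)=\tfrac{19}{6}$. Hence $\phi(S_1)=\tfrac{5}{17}\approx0.294$, not $5/9$. Combined with $\phi(S_2)=\tfrac12$, the $n=3$ landscape minimum is exactly the stated $5/17$, in the same convention you set up. This matters for the statement. The clause that $c^\star$ is \emph{not} a finite-$n$ lower bound can only be proved by exhibiting a witness below $c^\star\approx0.364$. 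Your value of $1/2>c^\star$ would leave that clause unproved and would even seem to refute it. Once $5/17$ is restored, your proof covers the whole statement.
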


\begin{proof}
By \cref{lem:uniform-causal-conductance} the small-side value $u(t)/(2+u(t))$ is
decreasing in the cut index and the large-side value is increasing, so the landscape
minimum sits at the volume crossover, where $x^\star(2-\ln x^\star)=1$ forces
$c^\star=1-2x^\star$.  The large-side inequality $\phi(S_t)\ge c^\star$ is the concavity
gap of $x\mapsto -x\ln x$ above the chord through $(x^\star,c^\star)$ and $(1,0)$, with
denominator positivity from $\ln x\ge 1-1/x$.  The harmonic asymptotic
$H_n-H_{t_n}\to-\ln x$ for $t_n/n\to x$ identifies this continuous profile as the
$n\to\infty$ limit of $\phi(S_t)$.
\end{proof}

\subsubsection{Proof of Two-Sided Diagnostic Theorem}
\label{app:proof-two-sided-diagnostic}

\paragraph{Statement (restated).}
In square, causally masked attention ($n_q=n_k=n$ with lower-triangular support), incorrect behavior
can arise from (i) concentration on a narrow temporal band (bottleneck) yielding $\phihat\downarrow$,
or (ii) coherent but misguided routing over a broad temporal region yielding $\phihat$ moderate/high.
Therefore $\phihat$ does not admit a universal polarity with correctness in decoder self-attention.

\paragraph{Proof structure.}
\begin{description}[nosep,leftmargin=*]
\item[Foundations.]
The bipartite Cheeger conductance $\phi(S)=\mathrm{cut}(S)/\min(\vol(S),\vol(\bar S))$
on the symmetric dilation graph $\Hh(\B)$.
\item[Bridge.]
Two canonical causal families serving as failure-mode prototypes:
$w$-window attention (\cref{lem:window-conductance}) and uniform
causal attention (\cref{prop:uc-cut-vol-identities,lem:uniform-causal-conductance,cor:uc-one-fifth});
the temporal cut $S_t$ as the index-aligned partition revealing routing
behaviour at every position.
\item[Contribution.]
Both families satisfy the same causal mask, yet their conductance
landscapes are qualitatively opposite: window attention pierces the
Cheeger floor uniformly ($\phi\to 0$), while uniform causal attention
respects an $n$-independent floor $\phi\ge 1/5$.  Polarity of $\phihat$
with respect to correctness therefore cannot be universal in decoder
self-attention; it is determined by which failure-mode prototype
dominates, not by task semantics.  This is the structural reason the
empirical polarity reverses across HaluEval (bottleneck-dominated) and
MedHallu (diffuse-dominated).
\end{description}

\paragraph{Proof.}
We establish quantitative bounds for two canonical causal attention families, both satisfying
the causal constraint $\B_{ij} = 0$ for $j > i$.

\textbf{(i) Bottleneck family (Lemma~\ref{lem:window-conductance}).}
Let $\B$ be $w$-window causal attention:
\[
\B_{ij} = \frac{1}{\min(w, i+1)}\mathbf{1}\{\max(0,i-w+1)\le j\le i\}.
\]
For the temporal cut $S_t = \{k_0,\ldots,k_{t-1}\}$ separating early from late keys:
\begin{itemize}[nosep,leftmargin=*]
\item Only queries $i \in [t, t+w-1]$ have windows crossing the cut boundary
\item Each such query contributes at most mass 1 (row-stochastic)
\item Total cut weight: $\mathrm{cut}(S_t) \le w$
\item The Cheeger denominator is at least $\min(t,n-t)$
\end{itemize}
Thus the conductance satisfies:
\[
\phi(S_t) \le \frac{w}{\min(t,n-t)}.
\]
For late-half cuts $n-t\le t$, this becomes
\(\phi(S_t)\le w/(n-t)\); choosing balanced cuts gives \(O(w/n)\).

\textbf{(ii) Diffuse family
(\cref{prop:uc-cut-vol-identities,lem:uniform-causal-conductance,cor:uc-one-fifth}).}
Let $\B$ be uniform causal attention:
\[
\B_{ij} = \frac{1}{i+1}\mathbf{1}\{j\le i\}.
\]
\Cref{prop:uc-cut-vol-identities} delivers the closed-form
$\mathrm{cut}(S_t)=t\,(H_n-H_t)$ and $\vol(S_t)=t\,(2+H_n-H_t)$.
\Cref{lem:uniform-causal-conductance} composes these into the
functional bound $\phi(S_t)\ge u(t)/(2+u(t))$ where $u(t)=H_n-H_t$.
\Cref{cor:uc-one-fifth} uses the formal small-side/large-side split
and harmonic-tail lower bounds to yield the $n$-independent constant
\[
\phi(S_t) \;\ge\; \tfrac{1}{5} \;>\; 0\qquad\text{for all }0<t<n,\,n\ge 2.
\]

Both families satisfy the same causal constraint but exhibit opposite
conductance behaviour. The bottleneck family pierces the Cheeger floor on
balanced cuts, while the diffuse family has $\phi(S_t)\ge 1/5$ on
\emph{every} cut. For such balanced cuts,
\[
\frac{\phi_{\text{uniform}}}{\phi_{\text{window}}}
\;\ge\; \frac{\min(t,n-t)}{5w} \;\to\; \infty
\quad\text{as }w/n\to 0
\]
whenever $\phi_{\text{window}}>0$; if $\phi_{\text{window}}=0$, the
separation is even stronger.
This establishes that $\phihat$ does not admit a universal polarity in decoder self-attention:
the conductance value depends on the attention \emph{pattern} within the causal mask,
not on task correctness.
\hfill$\square$

The Online Supplement (\suppref{suppl:fig-decoder-failure-modes}) illustrates how decoder self-attention mixes multiple failure modes, explaining polarity instability.

\subsection{Proofs: Asymmetric Transport}
\label{app:asymmetric-transport-proofs}

\subsubsection{Proof of Asymmetry Energy Proposition}
\label{app:proof-asymmetry-energy}

\paragraph{Statement (restated).}
Let $\Masym=\tfrac12(\M-\M^\top)$ and $G=\|\Masym\|_F/(\|\M\|_F+\varepsilon)$. In square causal regimes,
$G$ decreases when transport becomes effectively reversible (forward/backward cancellation after normalization),
even if conductance remains moderate.

\paragraph{Proof structure.}
\begin{description}[nosep,leftmargin=*]
\item[Foundations.]
Orthogonality of the symmetric and antisymmetric subspaces under the
Frobenius inner product, yielding the Pythagorean identity
$\|\M\|_F^2 = \|\Msym\|_F^2 + \|\Masym\|_F^2$.
\item[Bridge.]
Definition of $G(\M,\varepsilon) = \|\Masym\|_F/(\|\M\|_F+\varepsilon)$
as the antisymmetric-residual norm (\cref{sec:asymmetric-guessing});
restriction to square regimes ($n_q=n_k$) where the symmetric--antisymmetric
decomposition is well-posed.
\item[Contribution.]
$G$ is monotone non-increasing under symmetrisation of $\M$ and
detects orientation collapse that conductance cannot see: $\phihat$ can
remain moderate while $G\downarrow$, because conductance depends on
cut capacity (a symmetric notion) rather than on irreversibility.
This is the converse statement to orientation-blindness:
\emph{symmetric methods cannot detect orientation; $G$ can}.
\end{description}

\paragraph{Proof.}
By definition,
\[
G = \frac{\|\M-\M^\top\|_F}{2(\|\M\|_F+\varepsilon)}.
\]
Thus $G=0$ if and only if $\M$ is symmetric. More generally, if $\M$ admits a decomposition
\[
\M = S + R,\quad S^\top=S,\quad R^\top=-R,
\]
then $\Masym = R$ and $\|\M\|_F^2=\|S\|_F^2+\|R\|_F^2$ (orthogonality of symmetric and skew parts under Frobenius inner product).
Hence
\[
G = \frac{\|R\|_F}{\sqrt{\|S\|_F^2+\|R\|_F^2}+\varepsilon}.
\]
If transport becomes more reversible in the sense that $\|R\|_F$ decreases while $\|S\|_F$ stays bounded away from zero,
then $G$ decreases. This can occur without inducing a bottleneck (i.e., without reducing conductance), because conductance
depends primarily on cut capacity (a symmetric notion) rather than on irreversibility.

\paragraph{Link to symmetric limitation.}
By \cref{thm:orientation-blindness}, any symmetric embedding derived from $H(\M)$ is invariant to transposition
and therefore cannot encode irreversibility. Thus $\phihat$ can remain moderate while $G$ decreases, motivating $G$ as a
complementary axis.
\hfill$\square$

\paragraph{Domain restriction (square matrices only).}
The asymmetry energy $G = \|\Masym\|_F/(\|\M\|_F+\varepsilon) = \|\M - \M^\top\|_F / (2(\|\M\|_F + \varepsilon))$ is defined only when $\M$ is square ($n_q = n_k$).
In cross-attention with $n_q \neq n_k$, the difference $\M - \M^\top$ is undefined because the transpose $\M^\top \in \mathbb{R}^{n_k \times n_q}$
has different dimensions than $\M \in \mathbb{R}^{n_q \times n_k}$.
Therefore, $G$ applies exclusively to:
\begin{enumerate}[nosep,leftmargin=*]
\item \textbf{Decoder self-attention}: inherently square due to causal masking ($n_q = n_k = n$ at each position).
\item \textbf{Encoder self-attention}: square by construction (queries and keys from the same sequence).
\end{enumerate}
For non-square cross-attention matrices, orientation diagnostics require alternative measures such as comparing
the left and right singular vectors of $\M$ directly, rather than relying on the symmetric--antisymmetric decomposition.

The Online Supplement (\suppref{suppl:fig-asymmetry-energy}) illustrates how $G$ captures orientation failures that $\phihat$ misses due to its symmetry.

\section{Characterization of the Asymmetry Coefficient $G$}
\label{app:g-formal-verification}

The elementary characterizations of $G$ are recorded here. For a non-zero
operator $\M$,
\begin{equation}
G(\M,0)=0 \iff \M=\M^\top,
\qquad
G(\M,0)=\frac{\|\M-\Msym\|_F}{\|\M\|_F}=\frac{d(\M,\mathcal S)}{\|\M\|_F},
\end{equation}
so $G$ is exactly the normalised Frobenius distance to the symmetric subspace
$\mathcal S$ (immediate from the orthogonality of $\Msym$ and $\Masym$).  Degree
normalization preserves lower-triangularity and symmetry, so these statements
apply verbatim to the degree-normalized operator.  Two consequences are worth
isolating.

\paragraph{A causal ceiling.}
\begin{proposition}[Asymmetric ceiling for causal Toeplitz attention]
\label{prop:g-causal-toeplitz-ceiling}
For a causal Toeplitz operator $\M_{ij}=f(i-j)\,\mathbf 1\{j\le i\}$ with $\M\neq0$,
$G(\M,0)\le 1/\sqrt2$, with equality iff the diagonal kernel vanishes ($f(0)=0$).
\end{proposition}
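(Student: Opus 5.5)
The plan is a direct computation of the two squared Frobenius norms, grouping entries by the lag $k=i-j$. Because $\M$ is supported on $j\le i$, every nonzero entry sits either on the diagonal ($k=0$) or strictly below it ($k\ge1$). For the Toeplitz kernel, lag $k$ occurs in exactly $n-k$ positions, so
\[
\|\M\|_F^2=\sum_{k=0}^{n-1}(n-k)\,f(k)^2 = n f(0)^2+\sum_{k\ge1}(n-k)f(k)^2 .
\]

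Next I compute $\|\Masym\|_F^2$ from the entrywise formula $(\Masym)_{ij}=\tfrac12(\M_{ij}-\M_{ji})$.
\begin{itemize}
\item On the diagonal $(\Masym)_{ii}=0$, so the diagonal kernel $f(0)$ contributes nothing.
\item For each strictly lower entry $(i,j)$ with lag $k\ge1$, the causal mask forces $\M_{ji}=0$. This is the geometric constraint already used for \cref{prop:asymmetry-energy}.
\item Hence $(\Masym)_{ij}=f(k)/2$ and $(\Masym)_{ji}=-f(k)/2$, and the pair contributes $2\cdot f(k)^2/4=f(k)^2/2$.
\end{itemize}
Summing over the $n-k$ positions at each lag gives
\[
\|\Masym\|_F^2=\tfrac12\sum_{k\ge1}(n-k)f(k)^2 .
\]

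Comparing the two displays gives
\[
\|\M\|_F^2-2\|\Masym\|_F^2 = n f(0)^2\ \ge 0 .
\]
Since $\M\neq0$, we have $\|\M\|_F>0$, so with $\varepsilon=0$ this yields $G(\M,0)^2\le 1/2$, i.e.\ $G(\M,0)\le1/\sqrt2$.

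Equality holds iff $n f(0)^2=0$, i.e.\ iff $f(0)=0$, because $n\ge1$. In that case $\M\neq0$ guarantees that some $f(k)$ with $k\ge1$ is nonzero, so the ratio is well defined and equals exactly $1/\sqrt2$. This matches the strict lower-triangular extreme in \cref{prop:g-extremes}. An equivalent route is the Pythagorean identity $\|\M\|_F^2=\|\Msym\|_F^2+\|\Masym\|_F^2$: one checks $\|\Msym\|_F^2=n f(0)^2+\|\Masym\|_F^2$, which is the same statement.

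I expect no real obstacle here; the only care needed is the bookkeeping that each off-diagonal lag appears twice in $\Masym$, with opposite signs, but only once in $\M$. That factor of two is exactly what produces the $1/\sqrt2$. I would also remark that the Toeplitz hypothesis is not essential. The same entrywise argument gives $\|\M\|_F^2-2\|\Masym\|_F^2=\sum_i \M_{ii}^2$ for any lower-triangular $\M$. So the ceiling holds for every causal operator, with equality iff the diagonal vanishes, and the Toeplitz form simply makes the lag sums explicit.
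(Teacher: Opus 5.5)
Your proposal is correct and follows essentially the same route as the paper, which computes $\|\Masym\|_F^2=\tfrac12 S$ and $\|\M\|_F^2=S+n f(0)^2$ with $S=\sum_{i>j}f(i-j)^2$, and reads off $G(\M,0)^2\le\tfrac12$ with equality iff $f(0)=0$. Your version is slightly cleaner in two ways: you only use $\|\M\|_F>0$, whereas the paper asserts $S>0$, which fails for a purely diagonal $\M$ (harmlessly, since then $G=0$); and your remark is right that only lower-triangularity is used, so the ceiling holds for every causal operator.
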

\begin{proof}
Writing $S=\sum_{i>j}f(i-j)^2>0$, the causal Toeplitz Frobenius identity gives
$\|\Masym\|_F^2=\tfrac12 S$ and $\|\M\|_F^2=S+n\,f(0)^2$, so
$G(\M,0)^2=\tfrac12 S/(S+n f(0)^2)\le \tfrac12$, with equality iff $f(0)=0$.  The
value $G=1$ would require a purely skew $\M$, impossible for a non-negative
attention matrix with shared-support mass.
\end{proof}
\noindent Causal masking therefore does \emph{not} drive $G$ to its maximum: the
causal class is capped at $1/\sqrt2$ (the strict-lower-triangular point), and any
self-attention mass pulls $G$ strictly below it.

\paragraph{Encoder vs.\ decoder.}
Because $G=0$ means symmetric, its diagnostic reading is architecture-specific.  A
causal $\M$ with $G=0$ must be diagonal (a symmetric causal matrix has no
off-diagonal mass), so for decoder self-attention $G=0$ flags temporal isolation;
for encoder attention $G=0$ is the expected symmetric baseline and $G>0$ a
directional bias.  This split matches the training-dynamics finding that
bidirectional objectives drive attention toward symmetry and autoregressive
objectives toward directional structure~\citep{Saponati2025SelfAttentionStructures}.
\begin{center}
\footnotesize
\begin{tabular}{@{}llcl@{}}
\toprule
\textbf{Attention} & \textbf{Pattern} & $G$ & \textbf{Interpretation} \\
\midrule
Decoder & Diagonal & $0$ & Temporal isolation \\
Decoder & Causal + history & ${>}0$ & Using context \\
Encoder & Symmetric & $0$ & Normal \\
Encoder & Asymmetric & ${>}0$ & Directional bias \\
\bottomrule
\end{tabular}
\end{center}


\end{document}